\newtheorem{theorem}{Theorem}
\newtheorem{definition}{Definition}
\newtheorem{lemma}{Lemma}
\newtheorem{assumption}{Assumption}
\theoremstyle{remark}
\newtheorem{remark}{Remark}
\newcommand{\revision}[1]{#1}
\title{Deep SPI: Safe Policy Improvement via World Models}
\author{
Florent Delgrange\\
AI Lab, Vrije Universiteit Brussel\\
Flanders Make
\And
Raphael Avalos\\
AI Lab, Vrije Universiteit Brussel\\
Cohere
\And
Willem Röpke\\
AI Lab, Vrije Universiteit Brussel\\
Cohere
}
\DeclareMathOperator*{\argsup}{arg\,sup} 
\DeclareMathOperator*{\arginf}{arg\,inf} 
\DeclareMathOperator*{\ext}{ext}
\newcommand{\pushright}[1]{\ifmeasuring@#1\else\omit\hfill$\displaystyle#1$\fi\ignorespaces}
\newcommand{\pushleft}[1]{\ifmeasuring@#1\else\omit$\displaystyle#1$\hfill\fi\ignorespaces}
\newcommand{\tuple}[1]{\ensuremath{\left\langle #1 \right\rangle}}
\newcommand{\N}{\mathbb{N}}
\newcommand{\fun}[1]{\ensuremath{\mathopen{}\mathclose\bgroup\left(#1\aftergroup\egroup\right)}}
\newcommand{\pfun}[2]{\ensuremath{\mathopen{}\mathclose\bgroup\left(#1;\, #2\aftergroup\egroup\right)}}
\newcommand{\condition}[1]{\ensuremath{\mathds{1}\set{#1}}}
\newcommand{\vect}[1]{\ensuremath{\bm{#1}}}
\newcommand{\set}[1]{\ensuremath{\left\{ #1 \right\}}}
\newcommand{\mdp}{\ensuremath{\mathcal{M}}}
\newcommand{\states}{\ensuremath{\mathcal{S}}}
\newcommand{\actions}{\ensuremath{\mathcal{A}}}
\newcommand{\probtransitions}{P} 
\newcommand{\rewards}{{R}}
\newcommand{\sinit}{\ensuremath{s_{\mathit{I}}}}
\newcommand{\mdpI}{\ensuremath{\mathbf{I}}}
\newcommand{\mdptuple}{\langle \states, \actions, \probtransitions, \rewards, \sinit, \discount \rangle}
\newcommand{\state}{\ensuremath{s}}
\newcommand{\action}{\ensuremath{a}}
\newcommand{\reward}{\ensuremath{r}}
\newcommand{\act}[1]{\ensuremath{\mathit{Act}\ifthenelse{\equal{#1}{}}{}{(#1)}}}
\newcommand{\trajectory}{\tau}
\newcommand{\trajectorytuple}[2]{\ensuremath{{#1}_0, {#2}_0, {#1}_1, {#2}_1, \ldots}}
\newcommand{\policy}{\ensuremath{\pi}}
\newcommand{\behavioralpolicy}{\ensuremath{\policy_{\text{b}}}}
\newcommand{\bpolicy}{\ensuremath{\behavioralpolicy}}
\newcommand{\policies}{\ensuremath{\Pi}}
\newcommand{\valuessymbol}[2]{%
    \ifthenelse{\equal{#1}{\policy} \OR \equal{#1}{\latentpolicy} \OR \equal{#1}{\latentpolicy^{*}}}%
    {{V}^{#1}_{#2}}%
    {{V}^{#2}_{#1}}%
}
\newcommand{\valuefn}[4]{\ensuremath{
    \ifthenelse{\equal{#4}{}}{
        \ifthenelse{\equal{#2}{}}{
            V^{#1}_{\mdpI}
        }{
            V^{#1}_{\mdpI}\fun{#2}
        }
    }{\ifthenelse{\equal{#4}{none}}{
        \ifthenelse{\equal{#2}{}}{
            V^{#1}
        }{
            V^{#1}\fun{#2}
        }
    }{
        \ifthenelse{\equal{#2}{}}{
            V^{#1}\fun{#4}
        }{
            V^{#1}\fun{#4, #2}
        }
    }
}}}
\newcommand{\values}[3]{\ensuremath{\valuessymbol{#1}{#2}\fun{#3}}}
\newcommand{\mdpreturn}[2]{\ensuremath{\rho\fun{#1, #2}}}
\newcommand{\qvaluessymbol}[2]{\ensuremath{\ifthenelse{\equal{#1}{\policy}}{{Q}_{#2}^{#1}}{Q_{#1}^{#2}}}}
\newcommand{\qvalues}[4]{\ensuremath{\qvaluessymbol{#1}{#2}\fun{#3, #4}}}
\newcommand{\stationary}[1]{\ensuremath{\xi_{#1}}}
\newcommand{\sreset}{\ensuremath{\state_{\textit{reset}}}}
\newcommand{\latentsreset}{\ensuremath{\latentstate_{\textit{reset}}}}
\newcommand{\ldqn}[1]{\ensuremath{L_{\textsc{DQN}}\ifthenelse{\equal{#1}{}}{}{\fun{#1}}}}
\newcommand{\ael}[1]{\textsc{AEL}\fun{#1}}
\newcommand{\dataset}{\ensuremath{\mathcal{B}}}
\newcommand{\observationfn}{\ensuremath{\mathcal{O}}}
\newcommand{\learningalgo}[3]{\ensuremath{\mathscr{L}^{#1}_{\ifthenelse{\equal{#2}{}}{}{{#2}}}\ifthenelse{\equal{#3}{}}{}{\fun{#3}}}}
\newcommand{\stationaryapprox}{\ensuremath{\hat{\xi}_{\textit{reset}}}}
\newcommand{\discount}{\ensuremath{\gamma}}
\newcommand{\Prob}{\ensuremath{\mathbb{P}}}
\newcommand{\measurableset}{\ensuremath{\mathcal{X}}}
\newcommand{\sampledot}{\ensuremath{{\cdotp}}}
\newcommand{\expectedsymbol}[1]{\ensuremath{\mathop{\mathbb{E}}\ifthenelse{\equal{#1}{}}{}{_{#1}}}}
\newcommand{\E}{\ensuremath{\mathbb{E}}}
\newcommand{\varsymbol}[1]{\ensuremath{\mathop{\mathbb{V}}\ifthenelse{\equal{#1}{}}{}{_{#1}}}}
\newcommand{\expected}[2]{\ensuremath{\expectedsymbol{#1} \left[ #2 \right]}}
\newcommand{\expectmdp}[3]{\ensuremath{\displaystyle  \mathbb{E}_{#1}^{#2} \left[ #3 \right]}}
\newcommand{\divergencesymbol}{\ensuremath{D}}
\newcommand{\sir}{\ensuremath{\divergencesymbol^{\sup}_{\text{IR}}}}
\newcommand{\iir}{\ensuremath{\divergencesymbol^{\inf}_{\text{IR}}}}
\newcommand{\eir}{\ensuremath{\divergencesymbol^{\ext}_{\text{IR}}}}
\newcommand{\dtvsymbol}{\ensuremath{d_{{TV}}}}
\newcommand{\dtv}[2]{\ensuremath{\dtvsymbol\fun{#1,\, #2}}}
\newcommand{\normal}[3]{\ensuremath{\displaystyle \ifthenelse{\equal{#3}{}}{\mathcal{N}(\sampledot\, ; \, #1, #2)}{\mathcal{N}(#3;\, #1, #2)}}}
\newcommand{\distributionssymbol}{\ensuremath{\Delta}}
\newcommand{\distributions}[1]{\ensuremath{\distributionssymbol\fun{#1}}}
\newcommand{\support}[1]{\ensuremath{\text{supp}\fun{#1}}}
\newcommand{\error}{\ensuremath{\varepsilon}}
\newcommand{\wassersteinsymbol}[1]{\ensuremath{\mathcal{W}}_{#1}}
\newcommand{\wassersteindist}[3]{\ensuremath{\wassersteinsymbol{#1}\left( #2, #3 \right)}}
\newcommand{\distance}{\ensuremath{d}}
\newcommand{\latentdistance}{\ensuremath{\overbarit{\distance}}}
\newcommand{\overbar}[1]{\mkern 1.5mu\overline{\mkern-1.5mu#1\mkern-1.5mu}\mkern 1.5mu}
\newcommand{\overbarit}[1]{\,\overline{\!{#1}}}
\newcommand{\embed}{\ensuremath{\phi}}
\newcommand{\latentmdp}{\ensuremath{\overbarit{\mdp}}}
\newcommand{\latentprobtransitions}{\ensuremath{\overbar{\probtransitions}}}
\newcommand{\latentstates}{\ensuremath{\overbarit{\mathcal{\states}}}}
\newcommand{\latentrewards}{\ensuremath{\overbarit{\rewards}}}
\newcommand{\latentreward}{\ensuremath{\overbar{\reward}}}
\newcommand{\latentmdptuple}{\ensuremath{\langle{\latentstates, \actions, \latentprobtransitions, \latentrewards, \zinit, \discount \rangle}}}
\newcommand{\latentstate}{\ensuremath{\bar{\state}}}
\newcommand{\zinit}{\ensuremath{\latentstate_I}}
\newcommand{\latentmdpI}{\ensuremath{\overline{\mdpI}}}
\newcommand{\latentaction}{\ensuremath{\overbarit{\action}}}
\newcommand{\latentvaluefn}[4]{\ensuremath{
    \ifthenelse{\equal{#4}{}}{
        \ifthenelse{\equal{#2}{}}{
            {\overbar{V}_{\latentmdpI}^{\scriptstyle #1}}
        }{
            {\overbar{V}_{\latentmdpI}^{\scriptstyle #1}\fun{#2}}
        }
    }{\ifthenelse{\equal{#4}{none}}{
        \ifthenelse{\equal{#2}{}}{
            {\overbar{V}^{\scriptstyle #1}}
        }{
            {\overbar{V}^{\scriptstyle #1}\fun{#2}}
        }
    }{
        \ifthenelse{\equal{#2}{}}{
            \overbar{V}^{\scriptstyle #1}\fun{#4}
        }{
            \overbar{V}^{\scriptstyle #1}\fun{#4, #2}
        }
    }
}}}
\newcommand{\latentvaluessymbol}[2]{%
    \ifthenelse{\equal{#1}{\policy} \OR \equal{#1}{\latentpolicy} \OR \equal{#1}{\latentpolicy^{*}}}%
    {\overbar{V}^{#1}_{#2}}%
    {\overbar{V}^{#2}_{#1}}%
}
\newcommand{\latentvalues}[3]{\ensuremath{\latentvaluessymbol{#1}{#2}\fun{#3}}}
\newcommand{\latentpolicy}{\ensuremath{\overbar{\policy}}}
\newcommand{\latentpolicies}{\ensuremath{\overbar{\Pi}}}
\newcommand{\latentbeliefupdate}{\ensuremath{\overbar{\mathcal{T}}}}
\newcommand{\beliefencoder}{\ensuremath{\Upphi}}
\newcommand{\localtransitionloss}[1]{L_{\probtransitions}^{#1}}
\newcommand{\localtransitionlossapprox}[1]{\hat{L}_{\probtransitions}}
\newcommand{\localtransitionlossupper}[1]{{L}_{\probtransitions}^{\uparrow}}
\newcommand{\localrewardloss}[1]{L_{\rewards}^{#1}}
\newcommand{\localrewardlossapprox}[1]{\ensuremath{\hat{L}_{\rewards}}}
\newcommand{\observationloss}[1]{\ensuremath{L_{\observationfn}}}
\newcommand{\beliefloss}[1]{\ensuremath{L_{\latentbeliefupdate}}}
\newcommand{\onpolicyrewardloss}[1]{\ensuremath{L_{\latentrewards}^{\beliefencoder}}}
\newcommand{\onpolicytransitionloss}[1]{\ensuremath{L_{\latentprobtransitions}^{\beliefencoder}}}
\newcommand{\KR}[1]{\ensuremath{\ifthenelse{\equal{#1}{}}{K_{\latentrewards}}{K_{\latentrewards}^{#1}}}}
\newcommand{\KP}[1]{\ensuremath{\ifthenelse{\equal{#1}{}}{K_{\latentprobtransitions}}{K_{\latentprobtransitions}^{#1}}}}
\newcommand{\Rmax}{\ensuremath{{R}_\textsc{max}}}
\newcommand{\norm}[1]{\ensuremath{\left\| #1 \right\|}}
\newcommand{\originaltolatentstationary}[1]{\ensuremath{\mathcal{T}_{\policy}}}
\def\1{\bm{1}}
\newcommand{\neighborhood}{\mathcal{N}}
\newcommand{\drift}[2]{\ensuremath{\sir\fun{#1, #2}}}
\DeclareMathAlphabet{\mathsfit}{\encodingdefault}{\sfdefault}{m}{sl}
\SetMathAlphabet{\mathsfit}{bold}{\encodingdefault}{\sfdefault}{bx}{n}
\newcommand{\R}{\mathbb{R}}
\newcommand{\abs}[1]{\ensuremath{\left| #1 \right|}}
\newcommand{\smallparagraph}[1]{\smallskip\noindent\textbf{#1}}
\begin{document}

\maketitle

\begin{abstract}
Safe policy improvement (SPI) offers theoretical control over policy updates, yet existing guarantees largely concern offline, tabular reinforcement learning (RL). We study SPI in general online settings, when combined with world model and representation learning. We develop a theoretical framework showing that restricting policy updates to a well-defined neighborhood of the current policy ensures monotonic improvement and convergence. This analysis links transition and reward prediction losses to representation quality, yielding online, ``deep'' analogues of classical SPI theorems from the offline RL literature. Building on these results, we introduce \texttt{DeepSPI}, a principled on-policy algorithm that couples local transition and reward losses with regularised policy updates. On the ALE-57 benchmark, \texttt{DeepSPI} matches or exceeds strong behaviorals, including PPO and \texttt{DeepMDPs}, while retaining theoretical guarantees.
\end{abstract}

\section{Introduction}\label{sec:intro}
\emph{Reinforcement learning} (RL) trains agents to act in complex environments through trial and error \citep{DBLP:books/lib/SuttonB2018}. To scale to high-dimensional domains, modern approaches rely on function approximation, making \emph{representation learning} \citep{echchahed2025survey} essential for constructing latent spaces where behaviorally similar states are mapped close together and policies and value functions become easier to estimate. A complementary approach is \emph{model learning}, where a predictive model of the environment is trained \citep{DBLP:journals/corr/abs-1803-10122}. Such models can be leveraged for planning, generating simulated experience, or improving value estimates \citep{DBLP:conf/iclr/HafnerL0B21,schrittwieser2020mastering,xiao2019learning}.

In the online setting, where the agent updates its policy during interaction, avoiding catastrophic errors is critical. Two key challenges arise: \emph{out-of-trajectory (OOT) world models} and \emph{confounding policy updates}. OOT issues arise when the world model fails to capture rarely visited regions of the state space, leading to unreliable predictions and unsafe updates when the latent policy explores these regions \citep{suau_bad_2024}. Confounding updates occur when both the policy and its underlying representation are updated simultaneously: poor representations can lock the agent into suboptimal behavior, while the policy itself prevents corrective updates to the representation. \emph{Safe Policy Improvement} (SPI) mitigates such risks by ensuring that new policies are not substantially worse than their predecessors \citep{DBLP:conf/icml/ThomasTG15}. Classical SPI methods provide rigorous results in tabular MDPs but depend on exhaustive state–action coverage, making them unsuitable for continuous or high-dimensional spaces.

We address this gap by directly connecting representation and model learning with safe policy improvement in complex environments with general state spaces. Our contributions are threefold. First, we introduce a novel neighborhood operator that constrains policy updates, enabling policy improvement with convergence guarantees. Second, we combine this operator with principled model losses to bound the gap between a policy’s performance in the world model and in the true environment, thereby enabling safe policy improvement in complex MDPs. This analysis also shows that our scheme enforces representation quality by ensuring that states with similar values remain close in the learned latent space. Third, we connect our theory to PPO \citep{DBLP:journals/corr/SchulmanWDRK17} and propose \texttt{DeepSPI}, a practical algorithm that achieves strong empirical performance on the Arcade Learning Environment (ALE; \citealt{bellemare13arcade}) while retaining theoretical guarantees.

\subsection{Related Work}\label{sec:related-work}
\vspace{-.5em}
\smallparagraph{Regularizing policy improvements.}
Regularized updates, as in TRPO, PPO, and related analyses, are now standard for stabilizing policy optimization \citep{DBLP:conf/icml/SchulmanLAJM15,DBLP:journals/corr/SchulmanWDRK17,DBLP:conf/icml/GeistSP19,DBLP:conf/icml/KubaWF22}. Our work extends this perspective to the joint training of a world model and a representation, where we constrain policy updates in a principled neighborhood while controlling model quality through transition and reward losses.

\smallparagraph{SPI}~%
methods provide principled guarantees on policy updates from fixed datasets  (offline RL) \citep{DBLP:conf/icml/ThomasTG15,DBLP:conf/nips/GhavamzadehPC16,pmlr-v97-laroche19a,DBLP:conf/atal/SimaoLC20,pmlr-v202-castellini23a}.
These methods assume tabular state spaces and offline data, where error bounds must hold globally across all state–action pairs, often via robust MDP formulations \citep{DBLP:journals/mor/Iyengar05,DBLP:journals/ior/NilimG05}. Our setting is fundamentally different: we study \emph{online} RL with high-dimensional inputs, where such global constraints are intractable. We take inspiration from the SPI literature but introduce local, on-policy losses that make safe improvement feasible in practice.
\revision{%
In spirit, other model-based methods share the goal of providing SPI-like guarantees in more general settings, but are again purely offline, omit any form of representation learning, and rely on assumptions that differ substantially from ours \citep{DBLP:conf/nips/YuTYEZLFM20,DBLP:conf/nips/YuKRRLF21,DBLP:conf/nips/KidambiRNJ20}.
}

\smallparagraph{Representation learning and model-based RL.}
Auxiliary transition and reward prediction losses are central to many model-based methods, from \texttt{DeepMDP} to \texttt{Dreamer} and related world-model approaches \citep{DBLP:conf/icml/GeladaKBNB19,DBLP:conf/iclr/HafnerL0B21}.
In particular, the losses we consider for learning transitions and rewards generalize a wide range of objectives used across the model-based RL literature \citep{DBLP:conf/aaai/Francois-LavetB19,DBLP:conf/atal/PolKOW20,DBLP:conf/nips/KidambiRNJ20,delgrange2022aaai,DBLP:conf/aaai/DongFNB23,DBLP:conf/atal/AlegreBRN023}.
Conceptually, our representation guarantees are closely connected to classical notions of state abstraction in MDPs \citep{DBLP:conf/isaim/LiWL06} and to \emph{bisimulation} 
\citep{DBLP:journals/iandc/LarsenS91,DBLP:conf/lics/DesharnaisEP98}.
Building on bisimulation, prior work develops representations that group states into areas in which the agent is guaranteed to behave similarly under the current policy \citep{castro20bisimulation,zhang2021learning,castro2021mico,agarwal2021contrastive,avalosdelgrange2024}.
By contrast, we directly link representation quality and model accuracy to our safe policy improvement analysis, yielding tractable guarantees in the online setting.

\section{Background}
\label{sec:background}
In the following, given a measurable space $\measurableset$, we write $\Delta\fun{\measurableset}$ for the set of distributions over $\measurableset$. For any distribution $\mu \in \Delta\fun{\measurableset}$, we denote by $\support{\mu}$ its support.

\smallparagraph{Markov Decision Processes} (MDPs) offer a formalism for sequential decision-making under uncertainty.
Formally, an MDP is a tuple of the form $\mdp = \mdptuple$ consisting of a set of states $\states$, actions $\actions$, a transition function $\probtransitions: \states \times \actions \to \distributions{\states}$, a bounded reward function $\rewards: \states \times \actions \to \R$ with $\norm{\rewards}_{\infty} = \Rmax$, an initial state $\sinit \in \states$, and a discount factor $\gamma \in [0, 1)$.
Unless otherwise stated, we generally assume that $\states$ and $\actions$ are compact.
An agent interacting in $\mdp$ produces \emph{trajectories}, i.e., infinite sequences of states and actions $\fun{\state_t, \action_t}_{t \geq 0}$ visited along the interaction so that $\state_0 = \sinit$ and $\state_{t + 1} \sim \probtransitions\fun{\sampledot \mid \state_t, \action_t}$ for all $t \geq 0$.

At each time step $t$, the agent selects an action according to a (stationary) \emph{policy} $\policy: \states \to \distributions{\actions}$ mapping states to distributions over actions. Running an MDP under $\policy$ induces a unique probability measure $\Prob_{\policy}$ over trajectories \citep{revuz1984markov}, with associated expectation operator $\E_{\policy}$; we write $\E_{\policy}[\cdot \mid \state_0=\state]$ when the initial state is fixed to $\state \in \states$. A policy has \emph{full support} if $\support{\policy(\cdot \mid \state)} = \actions$ for all $\state \in \states$, and we denote the set of all policies by $\policies$. A \emph{stationary measure} of $\policy$ is a distribution over states visited under $\policy$, and is defined as a solution of $
\stationary{\policy}(\cdot) = \E_{\state \sim \stationary{\policy}} \E_{\action \sim \policy(\cdot \mid \state)} [\probtransitions(\cdot \mid \state, \action)]$.
Such a measure is often assumed to exist in continual RL \citep{DBLP:books/lib/SuttonB2018}, is \emph{unique} in episodic RL \citep{DBLP:conf/nips/Huang20}, and defines the \emph{occupancy measure} in discounted RL \citep{DBLP:conf/aistats/MetelliMR23}.\footnote{%
Details on the formalization of episodic processes and value functions can be found in Appendix~\ref{appendix:episodic-values}.}

\smallparagraph{Value functions.}
The performance of the agent executing a policy $\policy \in \policies$ in each single state $\state \in \states$ can be evaluated through the \emph{value function} $V^{\policy}\fun{\state} = \expected{\policy}{\sum_{t=0}^\infty \gamma^{t} \rewards(\state_t, \action_t) \mid \state_0 = \state}$.
The goal of an agent is to maximize the \emph{return} from the initial state, given by $\mdpreturn{\policy}{\mdp} = V^{\policy}\fun{\sinit}$.
To evaluate the quality of any action $\action \in \actions$, we consider the \emph{action value function} $\qvalues{}{\policy}{\state}{\action} = \rewards(\state, \action) + \discount \expectedsymbol{\state' \sim \probtransitions(\cdot \mid \state, \action)}{\values{}{\policy}{\state'}}$, being the unique solution of \citeauthor{Bellman:DynamicProgramming}'s equation with $\values{}{\policy}{\state} = \expectedsymbol{\action \sim \policy\fun{\sampledot \mid \state}}{\qvalues{}{\policy}{\state}{\action}}$.
Alternatively, any given action can be evaluated through the \emph{advantage function} $A^{\policy}\fun{\state, \action} = Q^{\policy}\fun{\state, \action} - V^{\policy}\fun{\state}$, giving the advantage of selecting an action over the current policy.

\smallparagraph{Representation learning in RL.}
In realistic environments, the state–action space is too large for tabular policies or value functions. Instead, deep RL employs an {encoder} $\embed \colon \states \to \latentstates$ that maps states to a tractable \emph{latent space} $\latentstates$, from which value functions can be approximated. Learning such encoders is referred to as \emph{representation learning} \revision{\citep{echchahed2025survey}}. To improve representations, agents are often trained with additional objectives, commonly \emph{auxiliary tasks} requiring predictive signals.
Policy-based methods then optimize a \emph{latent policy} $\latentpolicy \colon \latentstates \to \distributions{\actions}$ jointly with $\embed$, executed in the environment as $\latentpolicy(\cdot \mid \embed(\state))$. By convention, we write $\latentpolicy(\cdot \mid \state)$ for $\latentpolicy \circ \embed(\state)$ when $\embed$ is clear, and denote the set of all latent policies by $\latentpolicies$. For any $\latentpolicy \in \latentpolicies$, the composed policy $\latentpolicy \circ \embed$ belongs to $\policies$.

\smallparagraph{Model-based RL} augments policy learning with a \emph{world model} $\latentmdp = \latentmdptuple$, which can improve (i) sample efficiency by generating trajectories
\revision{(e.g., \citealt{DBLP:conf/iclr/HafnerL0B21})},
(ii) value estimation through planning \revision{(e.g., \citealt{Buckman2018SampleEfficientRL})}, and
(iii) representation learning by grouping states with similar behavior
\revision{(e.g., \citealt{DBLP:conf/icml/GeladaKBNB19,zhang2021learning})}.
When $\latentstates = \states$, the model must replicate environment dynamics, which is often intractable. Instead, we focus on $\latentstates$ defined by the learned representation $\embed$, so that $\latentmdp$ becomes an abstraction of $\mdp$. Learning transition and reward functions then additionally serves as an auxiliary signal for the representation, encouraging states with similar behavior to map close in $\latentstates$. Since $\latentstates$ is the latent space, $\latentpolicies$ corresponds to the policies of $\latentmdp$. We further assume $\latentstates$ is equipped with a metric $\latentdistance \colon \latentstates \times \latentstates \to [0, \infty)$ to measure distances.

\section{No way Home: {when world models and policies go out of trajectories}}\label{sec:oot}
World models are usually learned toward minimizing a \textbf{reward loss} $\localrewardloss{}$ and/or \textbf{transition loss} $\localtransitionloss{}$ from experiences $\eta$ collected along the agent's trajectories.
Those experiences are either gathered in the form of a \emph{batch} or a \emph{replay buffer} $\dataset$.
In general, the loss functions take the following form: $\localrewardloss{} = \expectedsymbol{\eta \sim \dataset} f_{\rewards}\fun{\embed, \latentrewards; \, \eta}$ and $\localtransitionloss{} = \expectedsymbol{\eta \sim \dataset} f_\probtransitions\fun{\embed, \latentprobtransitions;\, \eta}$, where $f_\rewards$ (resp.~$f_\probtransitions$) assign a ``cost'' relative to the error between $\rewards$ and $\latentrewards$ (resp.~$\probtransitions$ and $\latentprobtransitions$) according to the experiences $\eta$ and their representation.
Henceforth, we refer to the  policy $\bpolicy$ used to insert experiences in $\dataset$ as the {\bfseries behavioral policy}. 

\vspace{-.5em}
\subsection{Out-of-trajectory world model}\label{sec:oot-wm}
\vspace{-.5em}
One may consider leveraging the model $\latentmdp$ to improve the policy $\bpolicy$.
This can be achieved by
    directly planning a new policy $\latentpolicy$ in $\latentmdp$ or
    drawing imagined trajectories in the world model to evaluate new actions and improve on sample complexity during RL.
However, since the world model is learned from experiences stored in $\dataset$, we can only be certain of its average accuracy according to this data.
This is problematic because some regions of the state space of $\mdp$ may have been rarely, or not at all, visited under $\bpolicy$.
In that case, the predictions made in $\latentmdp$ might cause the agent to ``hallucinate'' inaccurate trajectories in the latent space and spoil the policy improvement.
This problem, known as the \textbf{out-of-trajectory} (OOT) issue \citep{suau_bad_2024}, arises when a policy in $\latentmdp$ deviates substantially from $\bpolicy$, which can render the model unreliable.

\begin{figure}[t]
\vspace{-1em}
    \centering
    \begin{subfigure}[c]{.5\linewidth}
        \includegraphics[width=\textwidth]{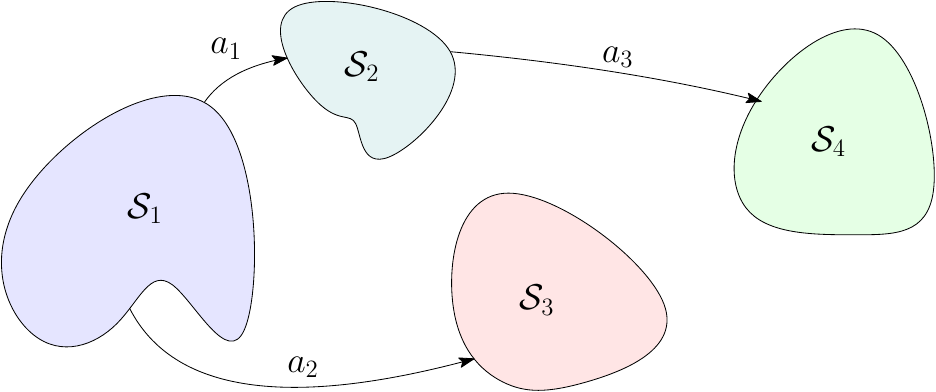}
        \caption{A large MDP $\mdp$ whose state space is divided in four regions 
        $\states = \bigcup_{i = 1}^4 \states_i$.
        }
    \end{subfigure}
    \hfill
    \begin{subfigure}[c]{.45\linewidth}
        \includegraphics[width=\linewidth]{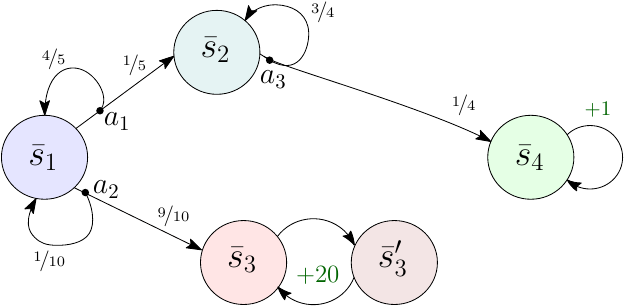}
        \caption{A simple world model $\latentmdp$  whose state space is
        $\latentstates = \set{\latentstate_1, \latentstate_2, \latentstate_3, \latentstate_3', \latentstate_4}$.
        }
    \end{subfigure}
    \vspace{-.5em}
    \caption{
    In $\mdp$, continuously playing $\action_1$ in states from $\states_1$ eventually leads the agent to the region $\states_2$, and playing $\action_3$ in $\states_2$ eventually leads the agent to $\states_4$ where a reward of $1$ is incurred at each time step, whatever the action played.
    Playing $\action_2$ in $\states_1$ leads the agent to the region $\states_3$, where all actions incur negative rewards.
    Here, $\embed\fun{\state} = \latentstate_i$ for any $\state \in \states_i$ and $i = \set{1, 2, 4}$. 
    For $\state \in \states_3$, we have either $\embed\fun{\state} = \latentstate_3$ or $\embed\fun{\state} = \latentstate'_3$.
    }\label{figure:ex-bad-latent-mdp}
    \vspace{-.5em}
\end{figure}
To illustrate this problem, consider the world model of Figure~\ref{figure:ex-bad-latent-mdp}. 
Assume the model is trained by collecting trajectories produced by $\bpolicy$ in $\mdp$ where 
$\bpolicy\fun{\action_2 \mid \state} \leq \epsilon$ for all $\state \in \states_1$, with $\epsilon > 0$.
For a sufficiently small $\epsilon$, the region $\states_3$ in the original environment would remain largely unexplored while having almost no impact on the losses $\localrewardloss{}, \localtransitionloss{}$.
Therefore, the representation of states in $\states_3$ ($\latentstate_3$ and $\latentstate'_3$) may turn completely inaccurate. 
Here, the model incorrectly assigns a reward of $20$ to $\latentstate'_3$, whereas the true reward is strictly negative. Consequently, the optimal policy in $\latentmdp$ deterministically selects $\action_2$ in $\latentstate_1$. When executed in the original environment, this policy drives the agent to $\states_3$ thereby degrading the behavioral policy $\bpolicy$.

\vspace{-.5em}
\subsection{Confounding policy update}\label{sec:confounding-pu}
\vspace{-.5em}
Updating both the representation and the policy solely from experience collected under a behavioral policy can \emph{degrade} performance rather than improve it.
In the same spirit as \emph{policy confounding} \citep{suau_bad_2024},
we call this phenomenon \textbf{confounding policy update}.
The MDP in Figure~\ref{fig:policy-confounding} illustrates the issue.

\begin{wrapfigure}{r}{.32\linewidth}
    \vspace{-1.5em}
    \includegraphics[width=\linewidth]{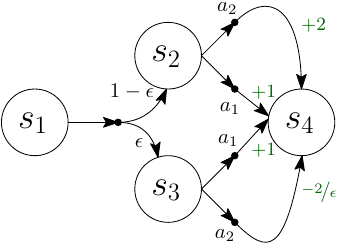}
    \caption{MDP where the probability of transitioning from $\state_1$ to $\state_2$ is $1 - \epsilon$, for $0 < \epsilon < \nicefrac{1}{4}$.}\label{fig:policy-confounding}
    \vspace{-.5em}
\end{wrapfigure}
  The agent maps the states $\state_2$ and $\state_3$ to the
  \emph{same} latent state $\latentstate$, i.e.
  $\phi(\state)=\latentstate$ iff $\state\in\{\state_2,\state_3\}$.
  States $\state_1$ and $\state_4$ each have their own latent
  state.
We consider the behavioral policy $\bpolicy := \latentpolicy_b \circ \phi$, where $\latentpolicy_b$ is a stochastic policy with a small exploration rate $\zeta$:
\begin{equation}
\latentpolicy_b(\action_1 \mid \latentstate) = 1 - \zeta, \qquad
\latentpolicy_b(\action_2 \mid \latentstate) = \zeta,
\end{equation}
for $0 < \zeta \ll \epsilon$.
A good representation would ideally group states from which the agent behaves similarly.
  Because trajectories that reach $\state_3$ \emph{and} pick
  $\action_2$ are unlikely, the two states appear identical under
  $\bpolicy$: $|V^{\bpolicy}(\state_2)-V^{\bpolicy}(\state_3)|\!\approx\!0$.
  Therefore, this justifies using $\embed$ as representation for $\bpolicy$, because the values of $\state_2$ and $\state_3$ are nearly identical: the agent exhibits close behaviors under $\bpolicy$ from those states.
  
  Suppose exploration under $\latentpolicy_b$ eventually discovers that playing
  $\action_2$ in $\latentstate$ sometimes yields the $+2$ reward.
  Based on exploration data, an RL agent might therefore be tempted to change the latent
  policy to
  $\latentpolicy(\action_2\mid\latentstate)=1$
  \emph{without modifying} the representation~$\phi$.
  With the representation still grouping $\state_2$ and $\state_3$, the new policy would now {deterministically} pick
  $\action_2$ in \emph{both} \emph{concrete} states.
  Whenever the agent actually reaches $\state_3$, it would receive the large
  negative reward $\nicefrac{-2}{\epsilon}$, which turns the overall return (from $\state_1$) negative, thus \emph{worse}
  than under $\bpolicy$ even though $\action_2$ is indeed optimal in
  $\state_2$.

A solution to this problem would have been to split the representation of $\state_2$ and $\state_3$ in two distinct latent states.
In general, representation and policy
  learning must be \emph{coupled} since any change in the policy
  that alters the distribution over states can invalidate a previously
  adequate representation.
However, in this example, the agent has no incentive to do so based on the experiences collected under $\bpolicy$.
As we will show below, updating both the policy and the representation jointly should be handled carefully to ensure \emph{policy improvement}.

Our goal is to \emph{establish sufficient conditions} to guarantee \textbf{safe policy improvement} during the RL process, either based on world models, state representations, or both, thus alleviating OOT world model and confounding policy update issues.
Notice that, in the examples, both problems occur when performing \emph{aggressive} updates from $\bpolicy$ to a new policy $\latentpolicy$ (the mode of the distributions drastically shifts).
Intuitively, \emph{smooth} updates indeed ensure to alleviate those issues: constraining the policy search to policies ``close'' to $\bpolicy$
\begin{enumerate*}[(i)]
    \item prevents hallucinations in parts of the world model that have been underexplored;
    \item reduces the risk of  significantly degrading the return when updating the policy.
\end{enumerate*}
While the benefits of regularizing policy improvements have already been both theoretically and practically justified (e.g., \citealt{DBLP:conf/icml/GeistSP19, DBLP:conf/icml/KubaWF22}),
their implications when mixing model-based and representation learning in RL have been underexplored.

\revision{%
\paragraph{Roadmap.}
To rigorously address the OOT and confounding-update issues, the next sections develop the theoretical foundations of our approach, showing how controlled policy updates, local model losses, and representation stability interact.
We briefly summarize how the main results connect.

Our analysis combines \textbf{neighborhood-restricted policy updates}, \textbf{model-quality bounds}, and \textbf{representation guarantees}.
Sect.~\ref{sec:mirror-learning} introduces the neighborhood operator defining a trust region around the behavioral policy; restricting updates to this region ensures monotonic improvement and convergence (Thm.~\ref{thm:mirror-descent}).
Sect.~\ref{sec:deep-spi} then links the reward and transition losses to value discrepancies: Thm.~\ref{thm:value-bound} shows that, when these losses are small, and updates remain in the neighborhood, the world model stays accurate under the learned representation.
Combining these ingredients yields our first SPI result (Thm.~\ref{thm:deep-spi}), guaranteeing that direct policy updates in the world model translate to improvement under controlled error.
Finally, Thm.~\ref{thm:representation-quality} shows that the same loss-based control stabilizes the encoder, ensuring that value-distinct states remain separated in the latent space.
}

\section{Your friendly neighborhood policy}\label{sec:mirror-learning}
Motivated by the intuition that constraining policy updates can mitigate OOT and confounding policy issues, we consider measuring the update as the \textbf{importance ratio} (IR) of the policies.
This measure provides guarantees for constraining policy and representation updates, and with an appropriate optimisation scheme, ensures both policy improvement and convergence.
In Section~\ref{sec:deep-spi}, we will further show that properly constraining the IR allows for safe policy improvements in world models while providing representation guarantees. 

Let $\policy, \policy' \in \policies$, 
the {\bfseries extremal importance ratios} are defined as
$
\eir\fun{\policy, \policy'} = \ext \set{\nicefrac{\policy'\fun{\action \mid \state}}{\policy\fun{\action \mid \state}} \colon \state \in \states, \action \in \support{\policy\fun{\sampledot \mid \state}}},\,
\text{where }\ext \in \set{\inf, \sup}.
$
We define a \textbf{neighborhood operator}\footnote{%
There are clear similarities between the IR, our neighborhood operator, and the PPO loss function \citep{DBLP:journals/corr/SchulmanWDRK17}. We discuss this connection in Section~\ref{sec:in-practice}.
}
based on the IR, $\neighborhood^{C} \colon \policies \to 2^{\policies}$ for some constant
$1 < C < 2$,
establishing a trust region for policies updates that constraints the IR between $2-C$ and $C$:
\begin{equation}\label{eq:neighborhood}
\neighborhood^{C}\fun{\policy} =\set{ \policy' \in \policies \,\bigg|
\, 
\begin{array}{ll} 
2 - C \leq \iir\fun{\policy, \policy'} \leq \sir\fun{\policy, \policy'} \leq C, \\[.5em]
\text{and } \, \support{\policy\fun{\sampledot \mid \state}} = \support{\policy'\fun{\sampledot \mid \state}} \quad \forall \state \in \states
\end{array}
}
\quad\quad \forall \policy \in \policies.
\end{equation}

A critical question is whether an agent that restricts its policy updates to a defined neighborhood is truly following a sound {\bfseries policy improvement} scheme. The following theorem shows that it does and further guarantees convergence.
\begin{restatable}{theorem}{mirrorlearning}\label{thm:mirror-descent}
\textnormal{(Policy improvement and convergence guarantees)}
    Assume $\states$ and $\actions$ are finite spaces.
    Let $\policy_0 \in \policies$ be a policy with full support and  
    $\fun{\policy_n}_{n \geq 0}$ be a sequence of policy updates defined as
    \begin{equation}\label{eq:mirror-learning}
    \policy_{n + 1} \coloneqq \argsup_{\policy' \in \neighborhood^{C}\fun{\policy_n}} \, \expectedsymbol{\state \sim \mu_{\policy_n}}\expectedsymbol{\action \sim \policy'\fun{\sampledot \mid {\state}}}{A^{\policy_n}\fun{\state, \action}},
    \end{equation}
    where $\mu_{\policy_n}$ is a \emph{sampling distribution} with $\support{\mu_{\policy_n}} = \states$ for each $n \geq 0$.
    Then, 
        the value function
        $V^{\policy_n}$ is monotonically improving, 
        converges to $V^*$, and so is the return $\mdpreturn{\policy_n}{\mdp}$.
\end{restatable}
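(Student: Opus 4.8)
The plan is to prove monotonicity by a pointwise policy-improvement argument, deduce convergence of the value sequence from monotonicity together with boundedness, and then identify the limit with $V^{*}$ via a compactness-and-contradiction argument. The overall scheme instantiates the Mirror Learning framework of \citet{DBLP:conf/icml/KubaWF22} with our neighborhood operator $\neighborhood^{C}$ and a trivial (zero) drift, and I would ultimately invoke its convergence theorem to discharge the most delicate step.

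\emph{Monotonic improvement.} First I would note that the update \eqref{eq:mirror-learning} is feasible at the current policy: since $1 \in [2 - C, C]$ and support is trivially preserved, $\policy_n \in \neighborhood^{C}\fun{\policy_n}$. Evaluating the objective at $\policy' = \policy_n$ yields $\expectedsymbol{\state\sim\mu_{\policy_n}}{\expectedsymbol{\action\sim\policy_n\fun{\sampledot\mid\state}}{A^{\policy_n}\fun{\state,\action}}} = 0$, because the expected advantage of a policy against itself vanishes; hence the optimal value of \eqref{eq:mirror-learning} is non-negative. The key structural observation is that the constraints defining $\neighborhood^{C}$ in \eqref{eq:neighborhood} are \emph{per-state} box constraints on the importance ratio (together with a per-state support condition), so the feasible set factorizes across states. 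Because $\support{\mu_{\policy_n}} = \states$ assigns strictly positive weight to every state, any global maximizer $\policy_{n+1}$ must simultaneously maximize the inner expectation $\expectedsymbol{\action\sim\policy'\fun{\sampledot\mid\state}}{A^{\policy_n}\fun{\state,\action}}$ for each $\state$ separately. Feasibility of $\policy_n$ in each per-state subproblem then gives the pointwise inequality $g_n\fun{\state} \coloneqq \expectedsymbol{\action\sim\policy_{n+1}\fun{\sampledot\mid\state}}{A^{\policy_n}\fun{\state,\action}} \geq 0$ for all $\state$, irrespective of how ties in the $\argsup$ are broken.

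\emph{Value convergence.} I would then turn the pointwise advantage inequality into value improvement through the performance-difference identity: expanding along trajectories of $\policy_{n+1}$ gives $V^{\policy_{n+1}}\fun{\state} - V^{\policy_n}\fun{\state} = \sum_{t \geq 0}\discount^{t}\,\E\!\left[g_n\fun{\state_t}\mid\state_0 = \state\right]$, where every summand is non-negative and the $t = 0$ term equals $g_n\fun{\state}$. This yields the sandwich $V^{\policy_{n+1}}\fun{\state} - V^{\policy_n}\fun{\state} \geq g_n\fun{\state} \geq 0$, so $\fun{V^{\policy_n}}_{n\geq 0}$ is pointwise non-decreasing. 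Being bounded above by $\nicefrac{\Rmax}{\fun{1 - \discount}}$, it converges to some $V^{\infty}$; the consecutive differences then vanish, and the sandwich forces $g_n\fun{\state} \to 0$ for every $\state$.

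\emph{Optimality of the limit.} Finally, using compactness of the finite-dimensional policy space I would extract a subsequence $\policy_{n_k} \to \policy^{\infty}$, with $A^{\policy_{n_k}} \to A^{\policy^{\infty}}$ and $V^{\policy^{\infty}} = V^{\infty}$ by continuity of the value and advantage functions in the policy on finite spaces. Suppose $V^{\infty} \neq V^{*}$. Since $V^{\policy^{\infty}}$ is the fixed point of $\policy^{\infty}$'s Bellman operator and is dominated by the Bellman optimality operator, it cannot itself be a fixed point of the latter, so there is a state $\state^{*}$ with $\max_{\action} A^{\policy^{\infty}}\fun{\state^{*},\action} > 0$. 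The aim is a contradiction with $g_{n_k}\fun{\state^{*}} \to 0$: shifting mass within $\neighborhood^{C}$ toward the (persistently) positive-advantage action realizes a strictly positive inner expectation. \textbf{The main obstacle is that this lower bound scales with the probability the iterate assigns to that action}, which could a priori collapse to zero in the limit; ruling this out---arguing that the probabilities on persistently positive-advantage actions stay bounded away from zero, which is where the support-preservation clause of $\neighborhood^{C}$ and the full support of $\policy_0$ enter---is precisely the content of the Mirror Learning convergence theorem. I would therefore verify that $\neighborhood^{C}$ satisfies its neighborhood-operator axioms (closedness, $\policy_n \in \neighborhood^{C}\fun{\policy_n}$, and compactness) and invoke it to conclude $V^{\infty} = V^{*}$. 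Since $\mdpreturn{\policy_n}{\mdp} = V^{\policy_n}\fun{\sinit}$, the return converges to $V^{*}\fun{\sinit}$ as well.
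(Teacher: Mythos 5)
Your monotonicity and value-convergence arguments are correct and, in fact, more self-contained than the paper's: the per-state factorization of the feasible set in Eq.~\ref{eq:neighborhood}, the full support of $\mu_{\policy_n}$, and the performance-difference identity do give pointwise monotone improvement and hence convergence of $\fun{V^{\policy_n}}_{n \geq 0}$ without any appeal to mirror learning, whereas the paper derives even this component wholesale from that framework. You also honestly flag that your compactness-and-contradiction sketch for optimality of the limit cannot by itself exclude the collapse of probability mass on persistently advantageous actions, and that you must therefore invoke the mirror-learning convergence theorem of \citet{DBLP:conf/icml/KubaWF22} to finish.

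The gap is in how you propose to invoke it. The prerequisites you list --- ``closedness, $\policy_n \in \neighborhood^{C}\fun{\policy_n}$, and compactness'' --- are not the axioms of a mirror-learning neighborhood operator. The definition (restated in the paper's appendix) requires (i) compact values, (ii) a closed-ball property with respect to a metric on $\policies$, which subsumes reflexivity but is strictly stronger (an entire ball around $\policy$ must lie inside $\neighborhood^{C}\fun{\policy}$; the paper verifies this with radius $\epsilon = \fun{C-1}\cdot\min_{\state,\action}\policy\fun{\action \mid \state}$ over supported actions), and, crucially, (iii) \emph{continuity} of $\neighborhood^{C}$ as a correspondence $\policies \to 2^{\policies}$. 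Axiom (iii) never appears in your proposal, yet its verification is essentially the entire content of the paper's proof: one must establish upper hemicontinuity (easy, since $\states$ and $\actions$ are finite) and lower hemicontinuity, which requires, given $\policy_n \to \policy$ and a target $\policy' \in \neighborhood^{C}\fun{\policy}$, explicitly constructing mixtures $\policy'_n = \fun{1-\epsilon_n}\policy' + \epsilon_n \policy_n$ with $\epsilon_n$ tuned to $\norm{\policy_n - \policy}_{\infty}$ and to the minimal supported probability of $\policy$, and then checking by a careful computation that $2 - C \leq \nicefrac{\policy'_n\fun{\action\mid\state}}{\policy_n\fun{\action\mid\state}} \leq C$ eventually, so that $\policy'_n \in \neighborhood^{C}\fun{\policy_n}$. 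Without continuity the mirror-learning theorem does not apply, so the final step of your proof is unsupported as written; supplying that verification would make your argument coincide, in substance, with the paper's.
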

The proof consists in showing the resulting policy update scheme is an instance of \emph{mirror learning} \citep{DBLP:conf/icml/KubaWF22}, which yields the guarantees.
Notice that since $\policy_0$ has full support, all the subsequent policies $\policy_n$ have full support as well.
To maintain the guarantees, considering a stationary measure $\stationary{\policy_n}$ as the sampling distribution is only possible when $\support{\stationary{\policy_n}} = \states$.
Note that this is always the case in episodic tasks (as the policy itself has full support). This is more generally true in ergodic MDPs \citep{DBLP:books/wi/Puterman94}.

\section{With great world models comes great representation}\label{sec:deep-spi}


This section explains how the neighborhood operator of Eq.~\ref{eq:neighborhood} enables safe policy improvement during world-model planning and representation updates in complex environments. Standard SPI methods ignore representation learning and require exhaustive state–action coverage in $\dataset$ to obtain guarantees, making them unsuitable for general state-action spaces. Even in finite domains, bounding the count of each state–action pair does not scale. \citet{pmlr-v97-laroche19a} proposed \emph{baseline bootstrapping} for under-sampled pairs, but their approach remains impractical in large-scale settings despite conceptual similarities to our operator. Further discussion of SPI limitations is provided in Appendix~\ref{appendix:spi-rmk}.

\smallparagraph{Learning an accurate world model.}
SPI typically relies on optimizing a policy with respect to a latent model learned from the data stored in $\dataset$.
In contrast to previous methods, our approach scales to high-dimensional feature spaces by
\begin{enumerate*}[(i)]
    \item learning a representation $\embed$ and
    \item considering \textbf{local error measures} as opposed to global measures across the whole state-action space.
\end{enumerate*}
We formalize them as tractable \emph{loss functions}. Their local nature makes them compliant with stochastic gradient descent methods.
Formally, given a distribution $\dataset \in \distributions{\states \times \actions}$, we define the \emph{reward loss} $\localrewardloss{\dataset}$ and the \emph{transition loss} $\localtransitionloss{\dataset}$ as
\begin{align}
    \localrewardloss{\dataset} \coloneqq \expectedsymbol{\state, \action \sim \dataset} \abs{\rewards\fun{\state, \action} - \latentrewards\fun{\latentstate, \action}}, && 
    \localtransitionloss{\dataset} \coloneqq \expectedsymbol{\state, \action \sim \dataset} \wassersteindist{}{\embed_{\sharp}\probtransitions\fun{\sampledot \mid \state, \action}}{\latentprobtransitions\fun{\sampledot \mid \embed\fun{\state}, \action}}\label{eq:lr-lp}
\end{align}
where $\embed_{\sharp}\probtransitions$ is the \emph{pushforward measure} of $\probtransitions$ by $\embed$, and $\wassersteinsymbol{}$ the \emph{Wasserstein distance} \citep{wasserstein69}. 
$\wassersteinsymbol{}$ between $\mu, \nu \in \distributions{\latentstates}$ is defined as $\wassersteindist{}{\mu}{\nu} = \inf_{\lambda \in \Lambda\fun{\mu, \nu}} \expectedsymbol{\fun{\latentstate, \latentstate'} \sim \lambda} \; \latentdistance\fun{\latentstate, \latentstate'}$, where $\Lambda(\mu, \nu)$ is the set of all couplings of $\mu$ and $\nu$.
While the Wasserstein operator may seem scary at first glance, it generalizes over transition losses that can be found in the literature 
\revision{(cf.~Sect.~\ref{sec:related-work})}.
In particular, when the latent space is discrete, this distance boils down to the \emph{total variation distance}. 
Another notable case is when the transition dynamics are deterministic, in which case the transition loss reduces to $\localtransitionloss{\dataset} = \expectedsymbol{\state, \action, \state' \sim \dataset}\,\latentdistance\fun{\embed\fun{\state'}, \latentprobtransitions\fun{\embed\fun{\state}, \action}}$.
Finally, in general, a tractable upper bound can be obtained as $L^{\mathcal{B}}_P \leq \expectedsymbol{\state, \action, \state' \sim \dataset}\expectedsymbol{\latentstate' \sim \latentprobtransitions\fun{\sampledot \mid \embed\fun{\state}, \action}}{\latentdistance\fun{\embed\fun{\state'}, \latentstate'}}$ (proof in Appendix~\ref{appendix:crude-wasserstein-bound}).

\smallparagraph{Lipschitz constants.} To provide the guarantees, for any particular policy $\latentpolicy \in \latentpolicies$, we assume the world model is equipped with \emph{Lipschitz constants} $K^{\latentpolicy}_{\latentrewards}$, $K^{\latentpolicy}_{\latentprobtransitions}$ defined as follows:
for all $\latentstate_1, \latentstate_2 \in \latentstates$,
\begin{align*}
\abs{\expectedsymbol{\action_1 \sim \latentpolicy\fun{\sampledot \mid \latentstate_1}}\latentrewards\fun{\latentstate_1, \action_1} - \expectedsymbol{\action_2 \sim \latentpolicy\fun{\sampledot \mid \latentstate_2}}\latentrewards\fun{\latentstate_2, \action_2}} &\leq K^{\latentpolicy}_{\latentrewards} \cdot \latentdistance\fun{\latentstate_1, \latentstate_2}, \\
\wassersteindist{}{\expectedsymbol{\action_1 \sim \latentpolicy\fun{\sampledot \mid \latentstate_1}}\latentprobtransitions\fun{\sampledot \mid \latentstate_1, \action_1}}{\expectedsymbol{\action_2 \sim \latentpolicy\fun{\sampledot \mid \latentstate_2}}\latentprobtransitions\fun{\sampledot \mid \latentstate_2, \action_2}} &\leq K^{\latentpolicy}_{\latentprobtransitions} \cdot \latentdistance\fun{\latentstate_1, \latentstate_2}.
\end{align*}
Intuitively, the Lipschitzness of the latent reward and transition functions guarantees that the latent space is well-structured, so that nearby latent states exhibit similar latent dynamics.
\citet{DBLP:conf/icml/GeladaKBNB19} control those bounds by adding a \emph{gradient penalty term} to the loss and enforce Lipschitzness \citep{NIPS2017_892c3b1c}.
One can also obtain constrained Lipchitz constants as a side effect by enforcing the metric $\latentdistance$ to match the \emph{bisimulation distance} in the latent space \citep{zhang2021learning}.
Interestingly, when the latent space is discrete, Lipschitz constants can be trivially inferred since $K^{\latentpolicy}_{\latentrewards} = {2\Rmax}$ and $K^{\latentpolicy}_{\latentprobtransitions} = 1$ \citep{delgrange2022aaai}.
Note also that as the spaces are assumed compact, restricting to continuous functions ensures Lipschitz continuity.

\revision{%
For the sake of presentation, we restrict our attention to the following assumption for Thms.~2 and~3:
\begin{assumption}
 We assume that the agent operates in the episodic RL setting, i.e., we consider the standard RL framework where the environment is eventually reset with probability one.
\end{assumption}
Our results extend to general settings where a stationary distribution is accessible (c.f.~Remark~\ref{rmk:episodic-assumption}).
}%

\smallparagraph{World model quality.}
Before introducing our safe policy improvement theorem, we first show that the local losses effectively measure the world model's quality with respect to the original environment.
Namely, their difference in return obtained \textbf{under any latent policy in a well-defined neighborhood} is bounded by the local losses \textbf{derived from the reference, behavioral policy's state-action distribution}.
This is formalized in the following theorem.
\begin{restatable}{theorem}{valuebound}\label{thm:value-bound}
    Suppose $\discount > \nicefrac{1}{2}$ and $K^{\latentpolicy}_{\latentprobtransitions} < \nicefrac{1}{\discount}$.
    Let $C \in \mathopen(1, \nicefrac{1}{\discount}\mathclose)$, $\bpolicy \in \policies$ be the base policy, $\fun{\latentpolicy \circ \embed} \in \neighborhood^{C}\fun{\bpolicy}$ where $\latentpolicy \in \latentpolicies$ is a latent policy and $\embed \colon \states \to \latentstates$ a state representation.
    Then, 
    \[
    \left| \mdpreturn{\latentpolicy \circ \embed}{\mdp} -  \mdpreturn{\latentpolicy}{\latentmdp} \right|
        \leq  \ael{\bpolicy} \cdot\frac{\nicefrac{\localrewardloss{\stationary{\bpolicy}}}{\discount} + K_{V} \cdot  \localtransitionloss{\stationary{\bpolicy}}}{{\nicefrac{1}{\drift{\bpolicy}{\latentpolicy}} {-} \discount}},
    \]
    where $\ael{\bpolicy}$ denotes the \emph{average episode length} when $\mdp$ runs under $\bpolicy$, $K_V = \nicefrac{K^{\latentpolicy}_{\latentrewards}}{\fun{1 - \discount K^{\latentpolicy}_{\latentprobtransitions}}}$,
    \revision{and $\localrewardloss{\stationary{\bpolicy}}, \localtransitionloss{\stationary{\bpolicy}}$ are the local losses of Eq.~\ref{eq:lr-lp} over the stationary distribution $\stationary{\bpolicy}$ induced by $\bpolicy$}.
\end{restatable}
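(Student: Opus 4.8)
The plan is to bound the pointwise gap $g(\state) = V^{\latentpolicy\circ\embed}(\state) - \overbar{V}^{\latentpolicy}(\embed(\state))$ between the value of the composed policy in $\mdp$ and the value of $\latentpolicy$ in $\latentmdp$, and then evaluate it at $\sinit$ (with $\embed(\sinit)=\zinit$) to recover $\abs{\mdpreturn{\latentpolicy\circ\embed}{\mdp} - \mdpreturn{\latentpolicy}{\latentmdp}}$. First I would subtract the two Bellman equations; since both values use the \emph{same} action law $\latentpolicy(\sampledot\mid\embed(\state))$, the action distribution factors out and I obtain, with $\state'\sim\probtransitions(\sampledot\mid\state,\action)$ and $\latentstate'\sim\latentprobtransitions(\sampledot\mid\embed(\state),\action)$, the identity $g(\state) = \expectedsymbol{\action\sim\latentpolicy}\big[\,(\rewards(\state,\action)-\latentrewards(\embed(\state),\action)) + \discount\,\expectedsymbol{\state'}g(\state') + \discount\big(\expectedsymbol{\state'}\overbar{V}^{\latentpolicy}(\embed(\state')) - \expectedsymbol{\latentstate'}\overbar{V}^{\latentpolicy}(\latentstate')\big)\big]$.

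The crucial estimate turns the transition mismatch into a Wasserstein term. I would first show $\overbar{V}^{\latentpolicy}$ is $K_V$-Lipschitz on $(\latentstates,\latentdistance)$ with $K_V = K^{\latentpolicy}_{\latentrewards}/(1-\discount K^{\latentpolicy}_{\latentprobtransitions})$, a standard contraction argument on the latent Bellman operator that is well posed exactly because $\discount K^{\latentpolicy}_{\latentprobtransitions}<1$ (ensured by $K^{\latentpolicy}_{\latentprobtransitions}<\nicefrac{1}{\discount}$) and that uses the assumed Lipschitz constants of $\latentrewards$ and $\latentprobtransitions$ under $\latentpolicy$. Kantorovich--Rubinstein duality then bounds the last bracket --- a difference of expectations of the $K_V$-Lipschitz map $\overbar{V}^{\latentpolicy}$ under $\embed_{\sharp}\probtransitions(\sampledot\mid\state,\action)$ and $\latentprobtransitions(\sampledot\mid\embed(\state),\action)$ --- by $K_V\,\wassersteindist{}{\embed_{\sharp}\probtransitions(\sampledot\mid\state,\action)}{\latentprobtransitions(\sampledot\mid\embed(\state),\action)}$. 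Setting $\psi(\state,\action)=\abs{\rewards(\state,\action)-\latentrewards(\embed(\state),\action)}+\discount K_V\,\wassersteindist{}{\embed_{\sharp}\probtransitions(\sampledot\mid\state,\action)}{\latentprobtransitions(\sampledot\mid\embed(\state),\action)}$, the identity collapses to the contraction $\abs{g(\state)}\leq\expectedsymbol{\action\sim\latentpolicy}\psi(\state,\action)+\discount\,\expectedsymbol{\action\sim\latentpolicy}\expectedsymbol{\state'}\abs{g(\state')}$, which unrolls along $\latentpolicy\circ\embed$-trajectories of $\mdp$ into $\abs{g(\sinit)}\leq\sum_{t\geq0}\discount^{t}\,\expectedsymbol{(\state_t,\action_t)\sim\latentpolicy\circ\embed}\psi(\state_t,\action_t)$.

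It remains to rewrite this accumulated error --- taken under the \emph{evaluated} policy $\latentpolicy\circ\embed$ --- through the losses $\localrewardloss{\stationary{\bpolicy}}$ and $\localtransitionloss{\stationary{\bpolicy}}$, which average under the \emph{baseline} occupancy. Here I would use $\latentpolicy\circ\embed\in\neighborhood^{C}(\bpolicy)$: the supports coincide and each ratio $\latentpolicy(\action\mid\embed(\state))/\bpolicy(\action\mid\state)$ is at most $\sir(\bpolicy,\latentpolicy)=\drift{\bpolicy}{\latentpolicy}$, so a change of measure over each length-$t$ trajectory prefix together with the action at time $t$ reweights the $\bpolicy$-expectation by at most $\drift{\bpolicy}{\latentpolicy}^{\,t+1}$, yielding $\abs{g(\sinit)}\leq\drift{\bpolicy}{\latentpolicy}\sum_{t\geq0}(\discount\,\drift{\bpolicy}{\latentpolicy})^{t}\,\expectedsymbol{(\state_t,\action_t)\sim\bpolicy}\psi(\state_t,\action_t)$. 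Because $\drift{\bpolicy}{\latentpolicy}\leq C<\nicefrac{1}{\discount}$ we have $\discount\,\drift{\bpolicy}{\latentpolicy}<1$, so $\drift{\bpolicy}{\latentpolicy}\sum_{t}(\discount\,\drift{\bpolicy}{\latentpolicy})^{t}$ sums to $1/(\nicefrac{1}{\drift{\bpolicy}{\latentpolicy}}-\discount)$, the denominator of the claim. Finally I would pass from the $\bpolicy$-occupancy to the stationary measure $\stationary{\bpolicy}$ via the episodic identity $\expectedsymbol{\bpolicy}\big[\sum_{t<\mathbf{T}}h(\state_t,\action_t)\big]=\ael{\bpolicy}\,\expectedsymbol{(\state,\action)\sim\stationary{\bpolicy}}h$, which supplies the factor $\ael{\bpolicy}$; splitting $\psi$ into its reward and transition parts and identifying $\expectedsymbol{\stationary{\bpolicy}}\abs{\rewards-\latentrewards}=\localrewardloss{\stationary{\bpolicy}}$ and $\expectedsymbol{\stationary{\bpolicy}}\wassersteindist{}{\embed_{\sharp}\probtransitions}{\latentprobtransitions}=\localtransitionloss{\stationary{\bpolicy}}$ then gives the numerator $\nicefrac{\localrewardloss{\stationary{\bpolicy}}}{\discount}+K_V\localtransitionloss{\stationary{\bpolicy}}$ after collecting the discount factors.

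I expect the main difficulty to lie in this last step: reconciling the trajectory-level importance weights $\drift{\bpolicy}{\latentpolicy}^{\,t+1}$ with the stationary losses and justifying the episodic conversion that introduces $\ael{\bpolicy}$, all while tracking the discount factors so that the reward contribution lands as $\nicefrac{\localrewardloss{\stationary{\bpolicy}}}{\discount}$. The Lipschitz bound on $\overbar{V}^{\latentpolicy}$ and the Kantorovich--Rubinstein step are routine; the care is in preserving the geometric summability that the hypotheses $C<\nicefrac{1}{\discount}$ (and hence $\discount>\nicefrac{1}{2}$, needed for the interval $(1,\nicefrac{1}{\discount})$ to be compatible with the neighborhood's $C<2$) are engineered to guarantee. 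The general, non-episodic case would simply replace the episodic identity by the assumed stationary distribution, as noted in the referenced remark.
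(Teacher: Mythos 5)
Your proposal is correct in its architecture but takes a genuinely different route from the paper's. The paper never unrolls trajectories. It first proves an \emph{averaged} bound (its Lemma on average value difference): $\expectedsymbol{\state \sim \stationary{\bpolicy}}\abs{V^{\latentpolicy}\fun{\state} - \overbar{V}^{\latentpolicy}\fun{\embed\fun{\state}}} \leq \fun{\localrewardloss{\stationary{\bpolicy}} + \discount K_V \localtransitionloss{\stationary{\bpolicy}}}/\fun{\nicefrac{1}{\drift{\bpolicy}{\latentpolicy}} - \discount}$, using exactly your ingredients (Bellman telescoping, support matching and the SIR bound, Lipschitzness of $\overbar{V}^{\latentpolicy}$ plus Kantorovich--Rubinstein), but it closes the recursion by \emph{stationarity} of $\stationary{\bpolicy}$: the next-state average under $\stationary{\bpolicy}$ is again an average under $\stationary{\bpolicy}$, giving $X \leq a + \discount\,\drift{\bpolicy}{\latentpolicy}\, X$, which is solved for $X$. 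It then extracts the pointwise return gap through the reset state: $\abs{V^{\latentpolicy}\fun{\sinit} - \overbar{V}^{\latentpolicy}\fun{\zinit}} = \tfrac{1}{\discount}\abs{V^{\latentpolicy}\fun{\sreset} - \overbar{V}^{\latentpolicy}\fun{\embed\fun{\sreset}}} \leq \tfrac{1}{\discount\,\stationary{\bpolicy}\fun{\sreset}}\expectedsymbol{\state\sim\stationary{\bpolicy}}\abs{\cdot}$, and identifies $\nicefrac{1}{\stationary{\bpolicy}\fun{\sreset}} = \ael{\bpolicy}$ by Kac's mean-recurrence formula; this is where both the $\ael{\bpolicy}$ factor and the $\nicefrac{1}{\discount}$ on the reward loss originate. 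Your pointwise unrolling with per-step importance weights $\drift{\bpolicy}{\latentpolicy}^{t+1}$ buys two things: a slightly tighter constant (your numerator is $\localrewardloss{\stationary{\bpolicy}} + \discount K_V \localtransitionloss{\stationary{\bpolicy}}$, which implies the stated bound since $\discount < 1$), and independence from the episodic structure and reset alignment of the \emph{latent} model (the paper's Assumptions 1--2 on $\latentmdp$); you only need $\embed\fun{\sinit} = \zinit$ and episodicity of $\mdp$ under $\bpolicy$.

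The one step you must repair is the passage to the stationary measure. The identity you invoke, $\expectedsymbol{\bpolicy}\left[\sum_{t < \mathbf{T}} h\fun{\state_t, \action_t}\right] = \ael{\bpolicy} \cdot \expectedsymbol{\fun{\state,\action}\sim\stationary{\bpolicy}} h$, concerns an \emph{undiscounted sum over a single episode}, whereas your quantity $\sum_{t \geq 0}\fun{\discount\,\drift{\bpolicy}{\latentpolicy}}^t\,\expectedsymbol{\fun{\state_t,\action_t}\sim\bpolicy}\psi\fun{\state_t,\action_t}$ is discounted and runs across episode boundaries, so the identity does not apply verbatim. What you actually need is domination of every time marginal: $\mu^{\bpolicy}_t \leq \ael{\bpolicy}\cdot\stationary{\bpolicy}$ as measures. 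This is true for the regenerative process: partitioning on the last regeneration time $s \leq t$ and using the Markov property gives $\mu^{\bpolicy}_t = \sum_{s \leq t} u_s f_{t-s}$ with $u_s \leq 1$ the probability of a regeneration at time $s$ and $f_k$ the within-episode occupancy at step $k$, whence $\mu^{\bpolicy}_t \leq \sum_{k \geq 0} f_k = \expectedsymbol{\bpolicy}\left[\#\text{visits per episode}\right] = \ael{\bpolicy}\,\stationary{\bpolicy}$, the last equality being precisely the renewal-reward identity you cited. With this inserted, $\sum_t \fun{\discount\drift{\bpolicy}{\latentpolicy}}^t \expectedsymbol{\mu^{\bpolicy}_t}\psi \leq \ael{\bpolicy}\,\expectedsymbol{\stationary{\bpolicy}}\psi / \fun{1 - \discount\drift{\bpolicy}{\latentpolicy}}$ and your argument closes; the remaining steps (Lipschitz value lemma, KR duality, the geometric sum requiring $\drift{\bpolicy}{\latentpolicy} \leq C < \nicefrac{1}{\discount}$, and boundedness of the gap so the unrolled remainder $\discount^T \E\abs{g\fun{\state_T}}$ vanishes) are all sound.
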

In simpler terms, if the deviation (\emph{supremum} IR, or SIR for short) between the behavioral policy and \emph{any \textbf{new} policy $\latentpolicy$} stays strictly lower than $\nicefrac{1}{\discount}$, the gap in return between the environment and the world model for this new policy can be bounded using data collected via $\bpolicy$. 
Minimizing local losses from $\bpolicy$’s data ensures that refining the representation $\embed$ for $\latentpolicy$ improves model quality: when these losses vanish, $\mdp$ and $\latentmdp$ are almost surely equivalent under $\latentpolicy$. The bound depends on the Average Episode Length (AEL), but even a loose upper bound is sufficient to preserve guarantees. It is also strongly influenced by the discount factor $\discount$, which defines an implicit horizon. Smaller values permit larger deviations from $\bpolicy$ and relax the accuracy required of the world model.


\smallparagraph{Safe policy improvement.} We consider the setting where the world model is used to improve the behavioral policy $\bpolicy = \latentpolicy_b \circ \embed$, with $\latentpolicy_b \in \latentpolicies$ and the representation $\embed$ is fixed during each update. Restricting updates to a well-defined neighborhood guarantees that $\mdpreturn{\latentpolicy \circ \embed}{\mdp} - \mdpreturn{\bpolicy}{\mdp} \geq \mdpreturn{\latentpolicy}{\latentmdp} - \mdpreturn{\latentpolicy_b}{\latentmdp} - \zeta$, where $\zeta$ is defined as the cumulative \emph{modeling error} from the local losses.

\begin{restatable}{theorem}{spi}\label{thm:deep-spi}\emph{(Deep, Safe Policy Improvement)}
Under the same preamble as in Thm.~\ref{thm:value-bound},
assume that $\embed$ if fixed during the policy update and the behavioral is a latent policy with $\bpolicy \coloneqq \latentpolicy_b \circ \embed$ and $\latentpolicy_b \in \latentpolicies$.
Then, the improvement of the return of $\mdp$ under $\latentpolicy$ can be guaranteed on $\bpolicy$ as
\vspace{-.5em}
\begin{multline*}
\mdpreturn{\latentpolicy \circ \embed}{\mdp} - \mdpreturn{\bpolicy}{\mdp}
\geq \mdpreturn{\latentpolicy}{\latentmdp} -  \mdpreturn{\latentpolicy_b}{\latentmdp} - \zeta,\\
\text{ where }
\zeta \coloneqq \ael{\bpolicy} \cdot \fun{\nicefrac{\localrewardloss{\stationary{\bpolicy}}}{\discount} + K_V \localtransitionloss{\stationary{\bpolicy}}}  \fun{\frac{1}{\nicefrac{1}{\drift{\bpolicy}{\latentpolicy}} - \discount} + \frac{1}{1 - \discount}}.
\end{multline*}
\vspace{-1.5em}
\end{restatable}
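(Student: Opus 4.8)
The plan is to derive the inequality directly from the world-model quality bound of Theorem~\ref{thm:value-bound}, applied \emph{twice}: once to the improved latent policy $\latentpolicy$ and once to the baseline latent policy $\latentpolicy_b$. To lighten notation, let $P \coloneqq \nicefrac{\localrewardloss{\stationary{\bpolicy}}}{\discount} + K_V \localtransitionloss{\stationary{\bpolicy}}$ and write $B(\latentpolicy) \coloneqq \ael{\bpolicy} \cdot P / \bigl(\nicefrac{1}{\drift{\bpolicy}{\latentpolicy}} - \discount\bigr)$ for the right-hand side of Theorem~\ref{thm:value-bound}, so that $\bigl|\mdpreturn{\latentpolicy \circ \embed}{\mdp} - \mdpreturn{\latentpolicy}{\latentmdp}\bigr| \leq B(\latentpolicy)$ for any latent policy whose composition with $\embed$ lies in $\neighborhood^{C}\fun{\bpolicy}$.

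The first step is a telescoping decomposition through the latent returns. Since $\bpolicy = \latentpolicy_b \circ \embed$, I would write
\begin{multline*}
\mdpreturn{\latentpolicy \circ \embed}{\mdp} - \mdpreturn{\bpolicy}{\mdp}
= \bigl(\mdpreturn{\latentpolicy \circ \embed}{\mdp} - \mdpreturn{\latentpolicy}{\latentmdp}\bigr)
+ \bigl(\mdpreturn{\latentpolicy}{\latentmdp} - \mdpreturn{\latentpolicy_b}{\latentmdp}\bigr) \\
+ \bigl(\mdpreturn{\latentpolicy_b}{\latentmdp} - \mdpreturn{\latentpolicy_b \circ \embed}{\mdp}\bigr).
\end{multline*}
The middle term is exactly the latent improvement $\mdpreturn{\latentpolicy}{\latentmdp} - \mdpreturn{\latentpolicy_b}{\latentmdp}$ retained in the statement, so it remains only to lower-bound the two outer modeling-error terms. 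The first outer term is $\geq -B(\latentpolicy)$ by Theorem~\ref{thm:value-bound}, which applies because $\latentpolicy \circ \embed \in \neighborhood^{C}\fun{\bpolicy}$ by hypothesis.

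For the third term I would invoke Theorem~\ref{thm:value-bound} a second time with $\latentpolicy_b$ in place of $\latentpolicy$. The key observation is that $\latentpolicy_b \circ \embed = \bpolicy$ trivially lies in its own neighborhood $\neighborhood^{C}\fun{\bpolicy}$ (the importance ratio is identically $1 \in [2-C, C]$ and the supports coincide), so the hypotheses are met, and its self-drift is $\drift{\bpolicy}{\latentpolicy_b} = 1$. Hence the denominator collapses to $\nicefrac{1}{1} - \discount = 1 - \discount$, giving $\mdpreturn{\latentpolicy_b}{\latentmdp} - \mdpreturn{\latentpolicy_b \circ \embed}{\mdp} \geq -B(\latentpolicy_b)$ with $B(\latentpolicy_b) = \ael{\bpolicy} \cdot P / (1-\discount)$. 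Combining the three bounds yields $\mdpreturn{\latentpolicy \circ \embed}{\mdp} - \mdpreturn{\bpolicy}{\mdp} \geq \bigl(\mdpreturn{\latentpolicy}{\latentmdp} - \mdpreturn{\latentpolicy_b}{\latentmdp}\bigr) - B(\latentpolicy) - B(\latentpolicy_b)$, and factoring the common prefactor $\ael{\bpolicy}\cdot P$ out of $B(\latentpolicy) + B(\latentpolicy_b)$ reproduces precisely the $\zeta$ of the statement.

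The argument is essentially mechanical once Theorem~\ref{thm:value-bound} is available; there is no genuine analytic obstacle. The only point that requires care is recognising that the baseline's self-drift equals $1$ rather than $C$: this is what produces the asymmetric $\tfrac{1}{1-\discount}$ summand in $\zeta$ instead of a second copy of the $\latentpolicy$-dependent denominator, and it is also why one must verify that $\bpolicy$ sits in its own trust region before reusing the value bound.
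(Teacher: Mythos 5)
Your proof is correct and follows essentially the same route as the paper's: the same decomposition through the latent returns, and two applications of Theorem~\ref{thm:value-bound} --- once to $\latentpolicy$ with drift $\drift{\bpolicy}{\latentpolicy}$ and once to the baseline with self-drift $1$, which yields the asymmetric $\nicefrac{1}{(1-\discount)}$ term. Your explicit verification that $\bpolicy$ lies in its own neighborhood is a point the paper glosses over, but otherwise the arguments coincide.
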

Theorem~\ref{thm:deep-spi} addresses the OOT issue (Section~\ref{sec:oot-wm}): if the SIR of the behavioral remains strictly below $\nicefrac{1}{\discount}$, then minimizing the local losses reduces the error $\zeta$, ensuring safe policy improvement when the world model is used to enhance the policy. While our focus is not on offline SPI, Appendix~\ref{appendix:spi-proofs} (Thm.~\ref{thm:deep-spi-pac}) additionally provides a PAC variant of the result, following the standard use of confidence bounds in the SPI literature.

\smallparagraph{Representation learning.}
Finally, we analyze how learning a world model using our loss functions as an auxiliary task facilitates the learning of a useful representation.
A good representation should ensure that environment states that are close in the representation also have close values, directly supporting policy learning.
Specifically, we seek ``\emph{almost}'' Lipschitz continuity \citep{vanderbei_uniform_1991} of the form $\exists K \colon \forall \state_1, \state_2 \in \states, \abs{V^{\bpolicy}\fun{\state_1} - V^{\bpolicy}\fun{\state_2}} \leq K \cdot \latentdistance\fun{\embed_{\text{old}}\fun{\state_1}, \embed_{\text{old}}\fun{\state_2}} + \mathcal{L}_{\bpolicy}\fun{\embed_{\text{old}}}$ where $\mathcal{L}_{\bpolicy}$ is an auxiliary loss \textbf{depending on the data collected by $\bm{\bpolicy}$}.
Notably, a critical question is whether updating the policy and its representation, respectively to $\latentpolicy$ and $\embed$, maintains Lipschitz continuity.
Crucially, as the behavioral $\bpolicy$ is updated to $\latentpolicy \circ \embed$ with respect to the experience collected under $\bpolicy$, the bound must hold for $\mathcal{L}_{\bpolicy}$.
The following theorem is a probabilistic version of this statement, formalized as a concentration inequality:
\begin{restatable}{theorem}{representationquality}\label{thm:representation-quality} \textnormal{(Deep SPI for representation learning)}
    Under the same preamble as in Thm.~\ref{thm:value-bound},
    let $\error > 0$ and
    $
    \delta \coloneqq 4 \cdot \frac{\localrewardloss{\stationary{\bpolicy}} + \discount K_V \cdot \localtransitionloss{\stationary{\bpolicy}}}{\error \cdot \fun{\nicefrac{1}{\drift{\bpolicy}{\latentpolicy}} - \discount}}.
    $
    Then, with probability at least $1 - \delta$ under $\stationary{\bpolicy}$, we have for all $\state_1, \state_2 \in \states$ that
    \begin{equation*}
    \abs{V^{\latentpolicy}\fun{s_1} - V^{\latentpolicy}\fun{s_2}} \leq K_V \cdot \latentdistance\fun{\embed\fun{\state_1}, \embed\fun{\state_2}} + \error.
    \end{equation*}
\end{restatable}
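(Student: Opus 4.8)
The plan is to route the comparison through the \emph{latent} value function and split the error into a deterministic Lipschitz term and two stochastic ``local value gaps''. Writing $e\fun{\state} \coloneqq \abs{V^{\latentpolicy}\fun{\state} - \overline{V}^{\latentpolicy}\fun{\embed\fun{\state}}}$ for the gap between the true return of the composed policy $\latentpolicy \circ \embed$ in $\mdp$ and the latent return of $\latentpolicy$ in $\latentmdp$ evaluated at $\embed\fun{\state}$, the triangle inequality gives, for every $\state_1, \state_2 \in \states$,
\[
\abs{V^{\latentpolicy}\fun{\state_1} - V^{\latentpolicy}\fun{\state_2}} \leq e\fun{\state_1} + \abs{\overline{V}^{\latentpolicy}\fun{\embed\fun{\state_1}} - \overline{V}^{\latentpolicy}\fun{\embed\fun{\state_2}}} + e\fun{\state_2}.
\]
I would control the middle term deterministically by Lipschitz continuity of $\overline{V}^{\latentpolicy}$, bound the two outer terms in expectation under $\stationary{\bpolicy}$, and then turn that into the stated high-probability statement.

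First I would show that $\overline{V}^{\latentpolicy}$ is $K_V$-Lipschitz with respect to $\latentdistance$. Since $\overline{V}^{\latentpolicy}$ is the fixed point of the latent Bellman operator $\overline{\bellmanop}^{\latentpolicy}$, it suffices to check that $\overline{\bellmanop}^{\latentpolicy}$ maps any $K$-Lipschitz function to a $\fun{K^{\latentpolicy}_{\latentrewards} + \discount K K^{\latentpolicy}_{\latentprobtransitions}}$-Lipschitz one, using the reward Lipschitz assumption for the reward term and Kantorovich--Rubinstein duality together with the transition Lipschitz assumption for the discounted next-state term. The affine map $K \mapsto K^{\latentpolicy}_{\latentrewards} + \discount K^{\latentpolicy}_{\latentprobtransitions} K$ is a contraction because $\discount K^{\latentpolicy}_{\latentprobtransitions} < 1$ (guaranteed by $K^{\latentpolicy}_{\latentprobtransitions} < \nicefrac{1}{\discount}$), and its unique fixed point is exactly $K_V = \nicefrac{K^{\latentpolicy}_{\latentrewards}}{\fun{1 - \discount K^{\latentpolicy}_{\latentprobtransitions}}}$; combined with the $\discount$-contraction of $\overline{\bellmanop}^{\latentpolicy}$ in sup-norm, this forces $\overline{V}^{\latentpolicy}$ to lie in the closed invariant set of $K_V$-Lipschitz functions, so the middle term is at most $K_V \cdot \latentdistance\fun{\embed\fun{\state_1}, \embed\fun{\state_2}}$.

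Next I would bound $\expectedsymbol{\state \sim \stationary{\bpolicy}} e\fun{\state}$, which is the per-state refinement underlying Theorem~\ref{thm:value-bound}. Subtracting the two Bellman fixed-point equations and inserting $\overline{V}^{\latentpolicy}\fun{\embed\fun{\state'}}$ yields the one-step recursion
\[
e\fun{\state} \leq \expectedsymbol{\action \sim \latentpolicy\fun{\sampledot \mid \embed\fun{\state}}}\bigl[ \abs{\rewards\fun{\state,\action} - \latentrewards\fun{\embed\fun{\state},\action}} + \discount K_V \wassersteindist{}{\embed_{\sharp}\probtransitions\fun{\sampledot\mid\state,\action}}{\latentprobtransitions\fun{\sampledot\mid\embed\fun{\state},\action}} + \discount \expectedsymbol{\state'\sim\probtransitions\fun{\sampledot\mid\state,\action}} e\fun{\state'}\bigr],
\]
where the Wasserstein term again uses the $K_V$-Lipschitzness from the previous step. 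Iterating expresses $e$ as the discounted sum $\sum_{k \geq 0}\discount^k$ of the per-step local cost propagated $k$ times under $\latentpolicy$ in $\mdp$. Integrating against $\stationary{\bpolicy}$ and bounding each $k$-step $\latentpolicy$-trajectory probability by $\drift{\bpolicy}{\latentpolicy}^{k+1}$ times its $\bpolicy$-probability (the transitions coincide, so only the action ratios contribute, each at most $\drift{\bpolicy}{\latentpolicy}$), the stationarity of $\stationary{\bpolicy}$ under $\bpolicy$ collapses every term to the base local losses. Summing the geometric series $\sum_k \fun{\discount \drift{\bpolicy}{\latentpolicy}}^k$, which converges precisely because $\fun{\latentpolicy \circ \embed} \in \neighborhood^{C}\fun{\bpolicy}$ forces $\drift{\bpolicy}{\latentpolicy} \leq C < \nicefrac{1}{\discount}$, gives
\[
\expectedsymbol{\state \sim \stationary{\bpolicy}} e\fun{\state} \leq \frac{\localrewardloss{\stationary{\bpolicy}} + \discount K_V \localtransitionloss{\stationary{\bpolicy}}}{\nicefrac{1}{\drift{\bpolicy}{\latentpolicy}} - \discount} \eqqcolon B.
\]

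Finally I would convert this expected bound into the concentration inequality. By Markov's inequality, $\stationary{\bpolicy}\bigl(\set{\state : e\fun{\state} > \nicefrac{\error}{2}}\bigr) \leq \nicefrac{2B}{\error}$; a union bound over the two draws $\state_1, \state_2 \sim \stationary{\bpolicy}$ shows that with probability at least $1 - \nicefrac{4B}{\error} = 1 - \delta$ both gaps satisfy $e\fun{\state_i} \leq \nicefrac{\error}{2}$, whence $e\fun{\state_1} + e\fun{\state_2} \leq \error$ and the triangle-inequality decomposition yields the claim. The main obstacle is the middle step: controlling the off-policy change of measure from the $\latentpolicy$-induced occupancy back to the base distribution $\stationary{\bpolicy}$ \emph{uniformly over the unrolled horizon}, which is exactly where the neighborhood constraint (the bounded supremum importance ratio) is indispensable---without $\drift{\bpolicy}{\latentpolicy} < \nicefrac{1}{\discount}$ the geometric series diverges and the bound becomes vacuous.
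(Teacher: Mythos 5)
Your proposal is correct and takes essentially the same approach as the paper: the same triangle-inequality decomposition through the latent value function, Markov's inequality applied to $\expectedsymbol{\state \sim \stationary{\bpolicy}}\abs{V^{\latentpolicy}\fun{\state} - \overbar{V}^{\latentpolicy}\fun{\embed\fun{\state}}}$, a union bound over the two sampled states, and the $K_V$-Lipschitzness of the latent value function. The only differences are presentational: the paper cites the Lipschitz bound from \citet{DBLP:conf/icml/GeladaKBNB19} (Lemma~\ref{lemma:lipschitz-value}) and proves the expected-gap bound as a standalone result (Lemma~\ref{lem:avd}) via a one-step recursion solved self-referentially, whereas you re-derive both inline and unroll the recursion into a geometric series with per-step importance-ratio factors, which yields the identical constant.
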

Theorem~\ref{thm:representation-quality} addresses confounding policy updates (Section~\ref{sec:confounding-pu}): minimizing the losses increases the probability that learned representations remain almost Lipschitz under controlled policy changes (with an SIR below $\nicefrac{1}{\discount}$). This prevents distinct states from collapsing into identical latent representations that degrade performance.
We note that \cite{DBLP:conf/icml/GeladaKBNB19} proved a similar bound when $\bpolicy = \latentpolicy$ (the policy update was disregarded), which in contrast to ours, \emph{surely} holds with
\begin{equation*}
\error \coloneqq \frac{\localrewardloss{\stationary{\latentpolicy}} + \discount K_V \cdot \localtransitionloss{\stationary{\latentpolicy}}}{1 - \discount} \cdot \fun{\frac{1}{\stationary{\latentpolicy}\fun{\state_1}} + \frac{1}{\stationary{\latentpolicy}\fun{\state_2}}}.
\end{equation*}
However, in general spaces, for any specific $\state \in \states$, $\stationary{\latentpolicy}\fun{\state}$ might simply equal zero, making the bound undefined.
In particular, in the continuous setting, $\states$ is widely assumed to be endowed with a Borel sigma-algebra, where the probability of every single point is indeed zero. 

\section{Across the SPI-verse: PPO comes into play}\label{sec:in-practice}
These theorems inspire a practical RL algorithm that combines policy improvement and  guarantees with solid empirical performance.
The critical part of our approach is to make sure updates are restricted to the policy neighborhood while minimizing the auxiliary losses $\localrewardloss{}$, $\localtransitionloss{}$.
In fact, our neighborhood operator has close connections to PPO \citep{DBLP:journals/corr/SchulmanWDRK17}, where the policy update is given by\footnote{we give the formulation of \citet{DBLP:conf/icml/KubaWF22}, which is equal to the one of \citet{DBLP:journals/corr/SchulmanWDRK17}.}
\vspace{-1em}
\begin{align} 
\policy_{n + 1} &\coloneqq 
\argsup_{\policy' \in \policies} \expected{\state \sim \stationary{\policy_n}}{ \expectedsymbol{\action \sim \policy'\fun{\sampledot \mid \state}}{A^{\policy_n}\fun{\state, \action}} - \mathfrak{D}_{\policy_n}\fun{\policy' \mid \state}},
\label{eq:ppo-drift}
\end{align} 
\text{with } 
$\mathfrak{D}_{\policy_n}\fun{\policy' \mid \state} = \expectedsymbol{\action \sim \policy_n\fun{\sampledot \mid \state}}{\text{ReLu}\Big({\big[{ \nicefrac{\policy'\fun{\action \mid \state}}{\policy_n\fun{\action \mid \state}} - \text{clip}\fun{\nicefrac{\policy'\fun{\action \mid \state}}{\policy_n\fun{\action \mid \state}}, \, 1 \pm \epsilon}}\big] \cdot A^{\policy_n}\fun{\state, \action}}}\Big)$,
for some $\epsilon > 0$.
By fixing $\epsilon = C -1$, instead of strictly constraining the updates to the neighborhood, the regularization $\mathfrak{D}_{\policy_n}\fun{\policy' \mid \state}$ corrects the utility $\expectedsymbol{\action \sim \policy'\fun{\sampledot \mid \state}} A^{\policy_n}\fun{\state, \action}$ (compare Eq.~\ref{eq:mirror-learning} and Eq.~\ref{eq:ppo-drift}), so that there is no incentive for $\policy'$ to deviate from $\policy_n$ with an IR outside the range $\mathopen[ 2 -C, C \mathclose]$.
Under the same assumption as in Theorem~\ref{thm:mirror-descent}, PPO is also an instance of mirror learning \citep{DBLP:conf/icml/KubaWF22}, meaning it also benefits from the same convergence guarantees.

Strictly restricting the IR in a neighborhood is much harder in practice, considering a PPO objective is thus an appealing alternative.
However, it is not sufficient to add the auxiliary losses $L_P, L_R$ to the objective of Eq.~\ref{eq:ppo-drift} to maintain the guarantees.
Indeed, updating the representation $\embed$ by minimizing the additional losses may push the the policy $\latentpolicy \circ \embed$ outside the neighborhood.
As a solution we propose to incorporate the local losses by replacing all occurrences of $A^{\policy_n}$ in Eq.~\ref{eq:ppo-drift} by the utility
\begin{equation}
U^{\policy_n}\fun{\state, \action, \state'} \coloneqq A^{\policy_n}\fun{\state, \action} - \alpha_R \cdot \ell_R\fun{\state, \action} - \alpha_P \cdot \ell_P\fun{\state, \action, \state'}, \label{eq:deep-spi-utility}
\end{equation}
where $\ell_R\fun{\state, \action} \coloneqq \abs{\rewards\fun{\state, \action} - \latentrewards\fun{\embed\fun{\state}, \action}}$, $\ell_P\fun{\state, \action, \state'} \coloneqq \expectedsymbol{\latentstate' \sim \latentprobtransitions\fun{\sampledot \mid \embed\fun{\state}, \action}}\latentdistance\fun{\embed\fun{\state'}, \latentstate'}$,
$\state' \sim \probtransitions\fun{\sampledot \mid \state, \action}$,
and $\alpha_R, \alpha_P \in \mathopen(0, 1\mathclose]$.
Intuitively, $\ell_R$, $\ell_P$ are transition-wise auxiliary losses that allow retrieving $\localrewardloss{\stationary{\policy_n}}$ and $\localtransitionloss{\stationary{\policy_n}}$ in expectation w.r.t.~the current policy $\policy_n$. When optimized, since they are clipped in a PPO-fashion, $U^{\policy_n}$ allows restricting the policy updates to the neighborhood.
\vspace{-1em}
\begin{center}
\begin{minipage}[t]{.5\linewidth}
\vspace{0pt}
\scalebox{.9}{
\begin{algorithm}[H]
\caption{\texttt{DeepSPI}}\label{alg:deepspi}
\SetAlgoInsideSkip{small}
\IncMargin{-\parindent}
\DontPrintSemicolon
\SetKwInput{KwIn}{Inputs}
\KwIn{Horizon $T$, batch size $B$, vectorized environment \texttt{env}, parameters $\theta$}
\SetKwComment{Comment}{$\triangleright$\ }{}
\SetCommentSty{textit}
\LinesNotNumbered 
\SetKwBlock{Begin}{function}{end function}
Initialize vectors $\vect{\state} \in \states^{\fun{T+1} \times B}, \vect{\action} \in \actions^{T \times B}, \vect{\reward} \in \R^{T \times B}$ \;
\Repeat{convergence}{
\For{$t \leftarrow 1$ to $T$}{
    Draw actions from the current policy: 
    $\vect{\action}_{t, i} \sim \latentpolicy\fun{\sampledot \mid \embed\fun{\vect{\state}_{t, i}}}$ \hfill $\forall 1 \leq i \leq B $\;
    Perform a single parallelized ($B$) step: $\vect{\reward}_t, \vect{\state}_{t + 1} \leftarrow \texttt{env.step}\fun{\vect{\state}_t, \vect{\action}_t}$\;
}
Update $\theta$ by descending $ \nabla_{\theta}\, {\texttt{DeepSPI\_loss}}(\vect{\state}, \vect{\action}, \vect{\reward}, U^{\latentpolicy \circ \embed}, \, \theta)$\\
\Comment*{change $A$ in Eq.~\ref{eq:ppo-drift} by $U$ from Eq.~\ref{eq:deep-spi-utility}}
$\vect{\state}_1 \leftarrow \vect{\state}_{T+1}$
}
\Return{$\theta$}\;
\end{algorithm}
}
\end{minipage}
\hspace{.02\textwidth}
\begin{minipage}[t]{0.46\textwidth}
\vspace{.25em}
From this loss, we propose \texttt{DeepSPI}, \textbf{a principled algorithm leveraging the policy improvement and representation learning capabilities developed in our theory}. 
As our losses rely on distributions defined over the current policy, we focus on the on-policy setting.
While model-based approaches are not standard in this setting, we stress that \textbf{highly parallelized collection of data} (e.g., via vectorized environments) \textbf{enables a wide coverage of the state space} (cf.~\citealp{mayor2025the,gallici2025simplifying}), which is suitable to optimize the latent model.
\texttt{DeepSPI} updates the world model, the encoder, and the policy simultaneously while guaranteeing the representation is suited to perform safe policy updates.
\end{minipage}
\end{center}
\vspace{-1em}
\subsection{Illustrative Example}\label{sec:illustrative-example}
\begin{wrapfigure}{r}{.4\linewidth}
    \vspace{-3em}
    \includegraphics[width=\linewidth]{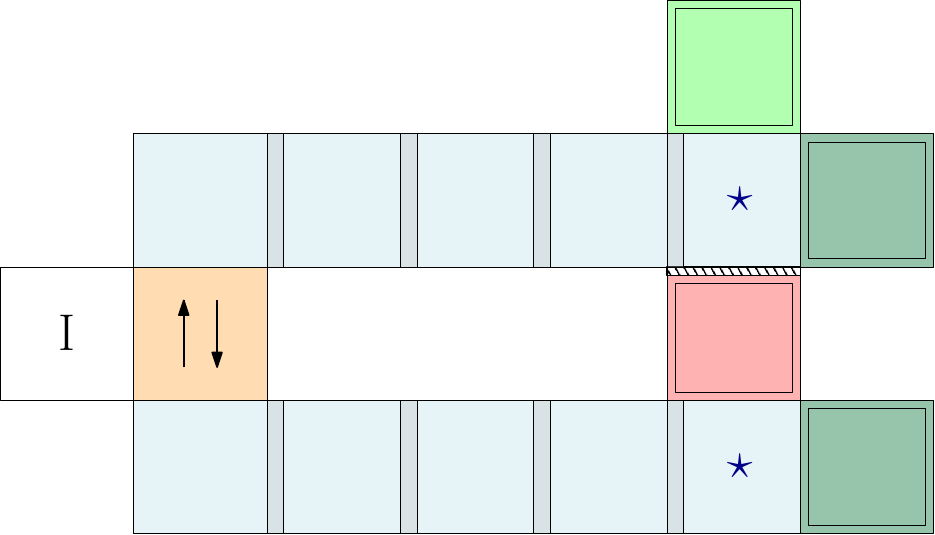}
    \caption{Toy maze environment illustrating the confounding policy update problem.}\label{fig:toy-maze}
    \vspace{-1.5em}
\end{wrapfigure}
To illustrate the representation learning capabilities of \texttt{DeepSPI}, we consider the toy grid-world shown in Fig.~\ref{fig:toy-maze}.
This environment mirrors the confounding policy update discussed in Sect.~\ref{sec:confounding-pu}, instantiated earlier in Fig.~\ref{fig:policy-confounding}.

The agent starts in the cell labeled~I. Upon leaving the orange cell immediately to its right, it is sent to the top branch with probability 
$1-\epsilon$ and to the bottom branch with probability $\epsilon$.
It must then traverse a corridor of $n$ blue cells (here $n$=5). Moving one cell to the right yields a reward of +1, and the agent cannot move backwards.

At the final corridor cell, marked with a~$\star$, moving right yields a reward of +1 regardless of whether the agent is in the top or bottom branch, and the episode terminates.
\emph{The difference is when the agent moves up from the $\star$ cell}: in the top branch, it receives a reward of $\nicefrac{+n}{\gamma^n}$, whereas in the bottom branch it receives $\nicefrac{-(2 - \epsilon)n}{\fun{\epsilon\,\gamma^n}}$ before termination.
As in Sect.~\ref{sec:confounding-pu}, this construction ensures that if both $\star$ states are merged in the latent space, choosing “right’’ remains acceptable (their values coincide), but choosing “up’’ produces a negative expected return from the initial state~I (details in Appendix~\ref{appendix:toy-example}).
To improve upon the policy that always chooses “right,” the agent must learn to assign distinct representations to the two $\star$ cells, select “up’’ in the top one, and “right’’ in the bottom one.

We compare the behaviour of PPO and \texttt{DeepSPI} in this environment. Since our goal is to highlight the agent's representation-learning capabilities, each observation is provided as raw pixels.
\begin{wrapfigure}{r}{.65\linewidth}
\vspace{-1em}
\begin{subfigure}{.475\linewidth}
    \includegraphics[width=\linewidth]{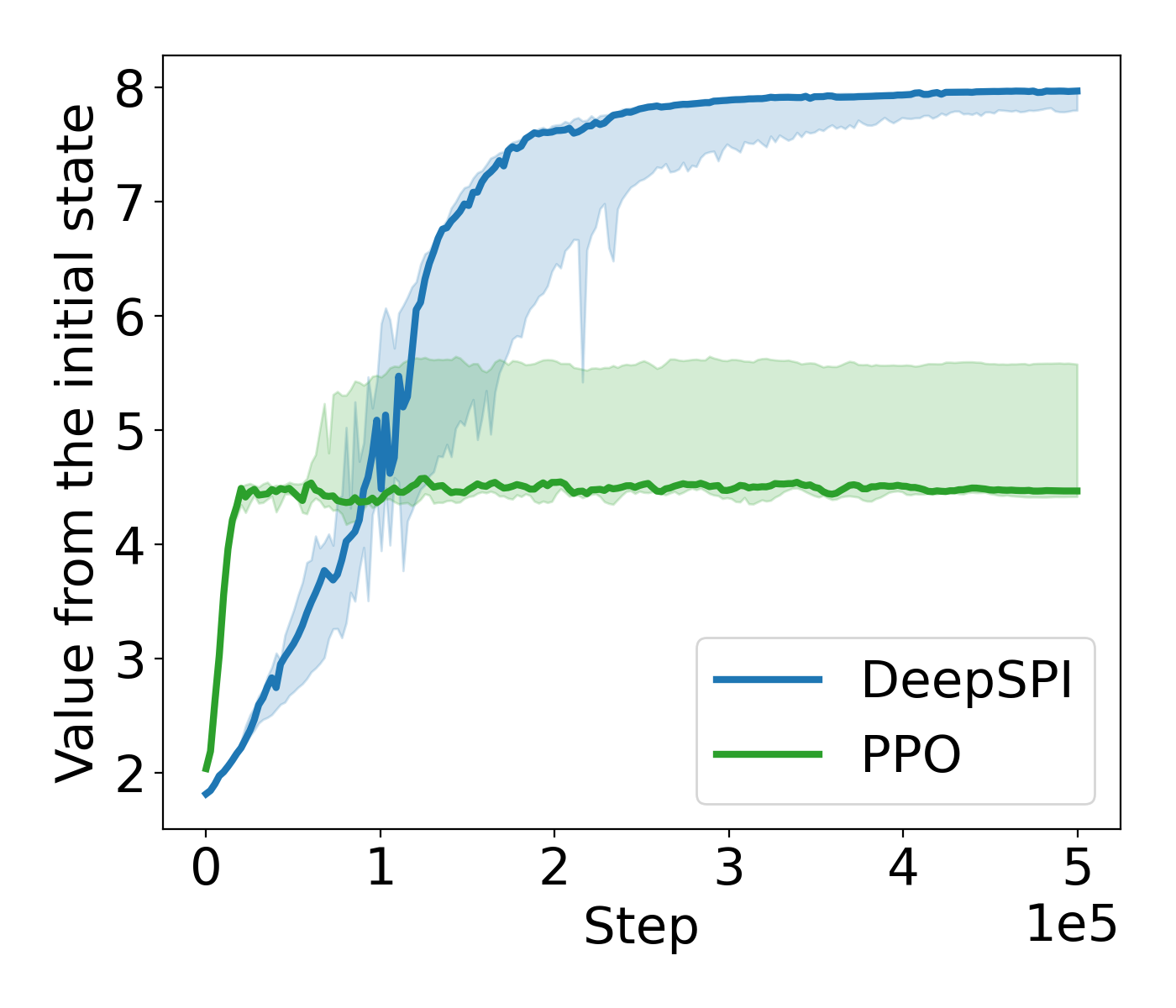}
\end{subfigure}
\hfill
\begin{subfigure}{.475\linewidth}
    \includegraphics[width=\linewidth]{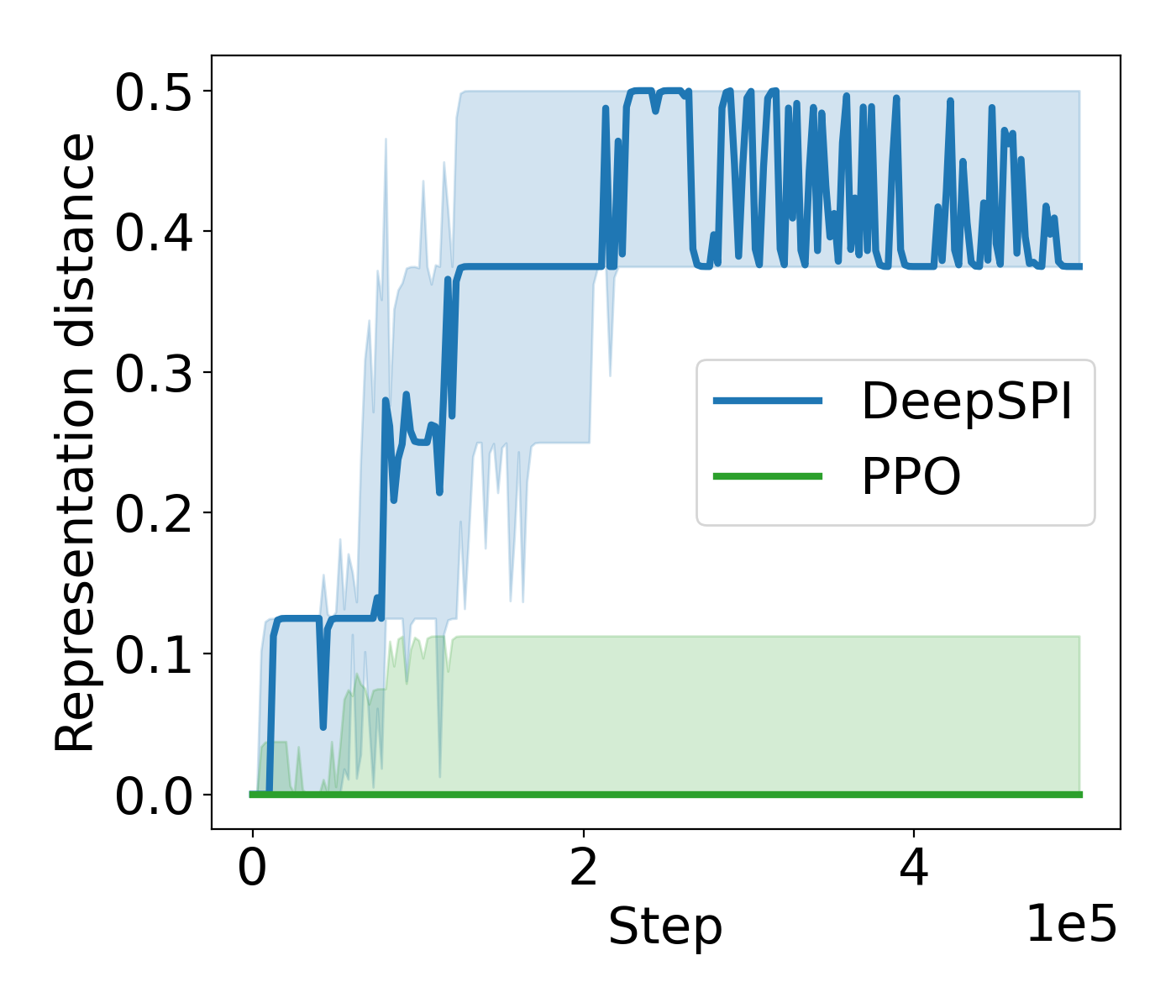}
\end{subfigure}
\vspace{-.5em}
\caption{Value from cell I in the maze (left) and distance between the representation of the $\star$ cell from the top and bottom branches (right).}\label{fig:deepspi-vs-ppo-toy}
\vspace{-1em}
\end{wrapfigure}
The agent must therefore learn both a policy and an encoder mapping pixels to a structured latent space. Further details on the environment and observation scheme are given in Appendix~\ref{appendix:toy-example}.
As shown in Fig.~\ref{fig:deepspi-vs-ppo-toy}, the representation learned by PPO collapses the top and bottom $\star$ cells into a single latent state. With such a representation, the best policy PPO can learn is to always choose “right,” which leads to a return of $\sim\!4.8$.
In contrast, \texttt{DeepSPI} benefits from the representation quality guarantees of Thm.~\ref{thm:representation-quality}, which ensure that states with different values remain separated in the latent space for all policies in a suitable neighborhood. This is exactly what we observe: the learned representation distinguishes the two $\star$ cells. As a result, the agent learns to choose “up’’ in the top $\star$ cell and “right’’ in the bottom one, achieving a return of $\sim\!8$.
%


\color{black}
\vspace{-1em}
\section{Experiments}
\vspace{-.5em}
\begin{figure}
\vspace{-.5em}
    \centering
    \includegraphics[width=\linewidth]{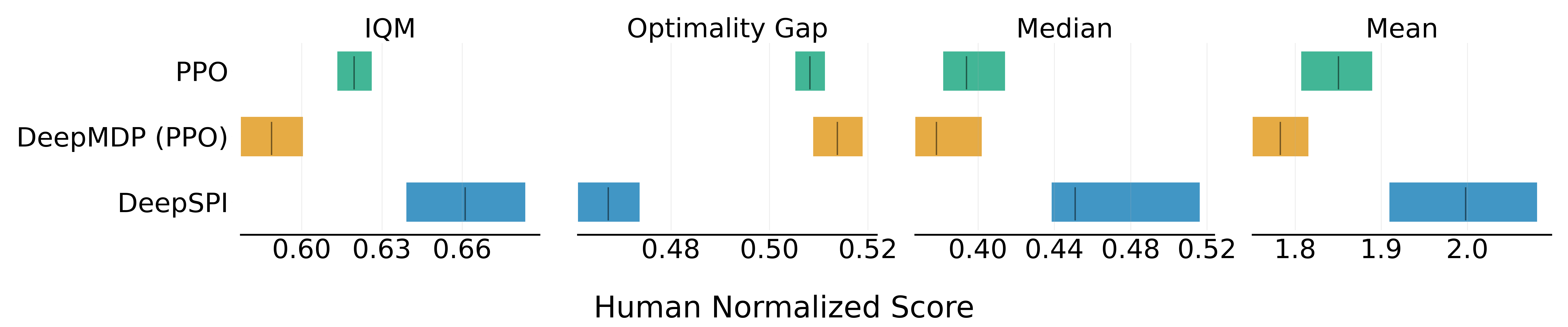}
    \vspace{-1.75em}
    \caption{Aggregate results on \textbf{stochastic} versions of the standard 57 environments from ALE, with 95\% confidence intervals (CIs).
    Higher values for the mean, median, and \emph{interquartile mean} (IQM) indicate better performance, while a lower optimality gap is preferable (cf.~\citealt{DBLP:conf/nips/AgarwalSCCB21}).
    CIs are obtained through percentile bootstrapping with stratified resampling.
    Plots per environment available in Appendix~\ref{appendix:full-plots}.}
    \label{fig:aggregate-metrics}
    \vspace{-1em}
\end{figure}
\begin{wrapfigure}{r}{0.4\linewidth}
    \centering
    \vspace{-5.5em}
    \includegraphics[width=\linewidth]{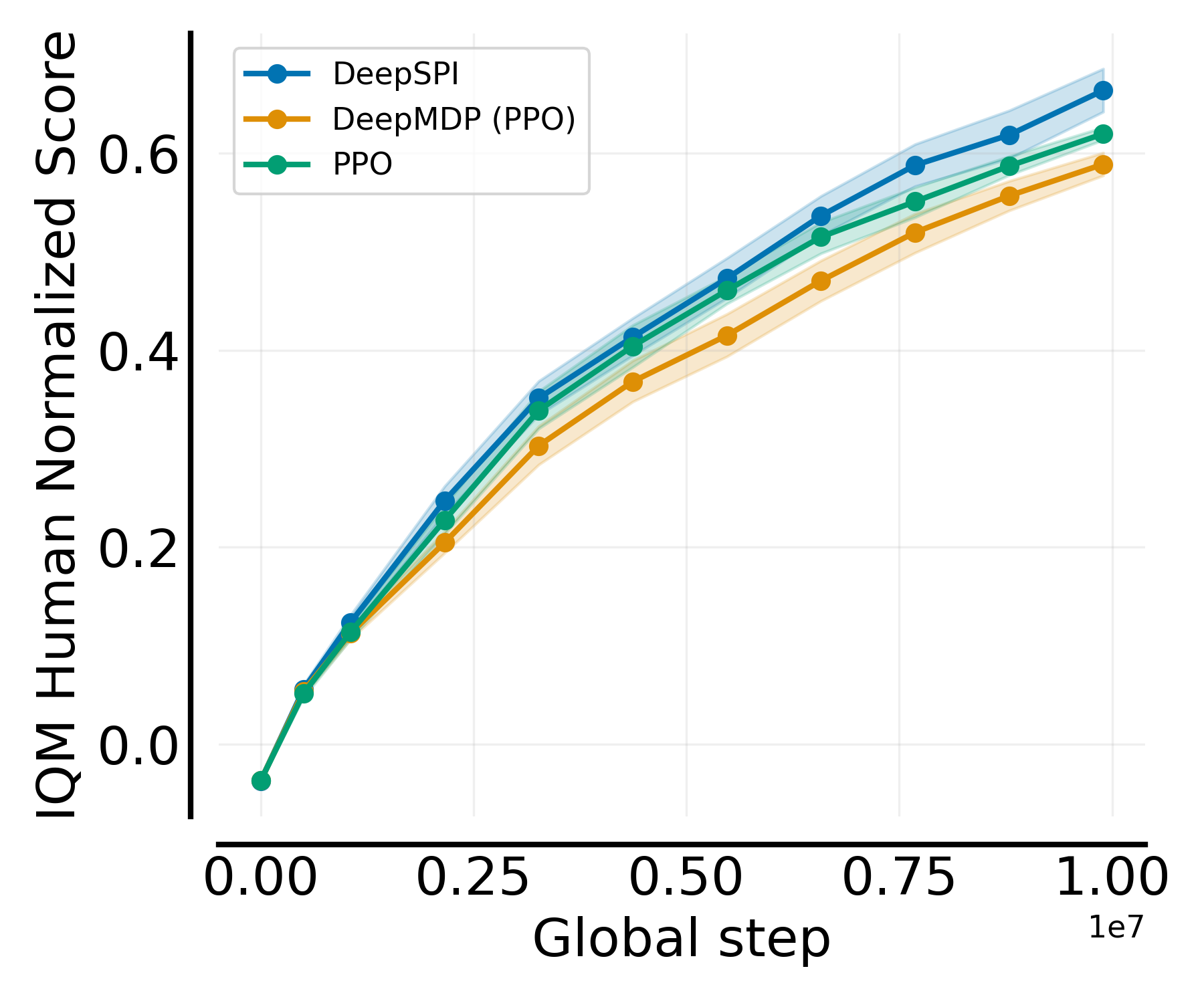}
    \vspace{-2em}
    \caption{Sample efficiency w.r.t.~IQM normalized scores on the {stochastic} ALE-57.
    Shaded regions give pointwise 95\% CIs obtained via percentile stratified bootstrap.
    }\label{fig:full-atari}
    \vspace{-2em}
\end{wrapfigure}
In this section, we evaluate the practical performance of \texttt{DeepSPI} in environments where (i) representation learning is essential and (ii) dynamics are complex. We use the Atari Arcade Learning Environment (ALE; \citealt{bellemare13arcade}) and consider each state as four stacked frames.
ALE domains feature a wide range of dynamics; to further introduce stochasticity, we follow \citet{machado18arcade} and employ two standard tricks: \emph{sticky actions}, where with probability $p_{\action}$ the previous action is repeated (simulating joystick or reaction-time imperfections), and \emph{random initialisation}, where the agent begins after $n_{\text{NOOP}}$ initial no-op frames. We set $p_{\action}=0.3$ and $n_{\text{NOOP}}=60$.

%
%

As baselines, we consider PPO (vectorized \texttt{cleanRL} implementation; \citealp{huang2022cleanrl}) and \texttt{DeepMDPs} \citep{DBLP:conf/icml/GeladaKBNB19}.
Essentially, \texttt{DeepMDPs} are principled auxiliary tasks (the losses $\localrewardloss{}, \localtransitionloss{}$ presented in Sect.~\ref{sec:deep-spi}) that can be plugged to any RL algorithm to improve the representation learned (with guarantees).
The main difference with \texttt{DeepSPI} is that $\localrewardloss{}$, $\localtransitionloss{}$ are able to push the updated policy out of the neighborhood by learning the representation via the additional losses, for which updates are not constrained.
This means that \emph{none of the SPI guarantees presented in this paper apply} to \texttt{DeepMDPs}.
For a fair comparison, we plugged the \texttt{DeepMDP} losses to (vectorized) PPO, and we use the architecture as for \texttt{DeepSPI}.
We use the default \texttt{cleanRL}'s hyperparameters for the three algorithms, except for the data collection ($128$ environments with a horizon of $8$ steps).

As latent space, we use the raw $3$D representation obtained after the convolution layers (as recommended and used by \citealp{DBLP:conf/icml/GeladaKBNB19}).
For modeling the transition function, we found best to use a mixture of multivariate normal distributions (the transition network outputs $5$ means/diagonal matrices). 
\revision{%
To deal with the Lipschitz constraints that need to be enforced
on the reward and transition functions, we found the most efficient to model $\latentrewards, \latentprobtransitions$ via Lipschitz networks (precisely, we use norm-constrained GroupSort architectures to enforce $1$-Lipschitzness; \citealp{DBLP:conf/icml/AnilLG19}). 
}

\revision{%
As shown in Fig.~\ref{fig:aggregate-metrics} and~\ref{fig:full-atari}, \texttt{DeepSPI} delivers strong performance, improving on both PPO and \texttt{DeepMDP}. Notably, these results are obtained while preserving SPI-style properties; a valuable combination, as such theoretical control typically comes at the expense of performance and substantial data requirements.
}
Beyond pure performance, we want to assess whether the world model, learned via \texttt{DeepSPI}, exhibits accurate dynamics.
\begin{wrapfigure}{l}{.5\linewidth}
    \vspace{-1em}
    \centering
    \includegraphics[width=\linewidth]{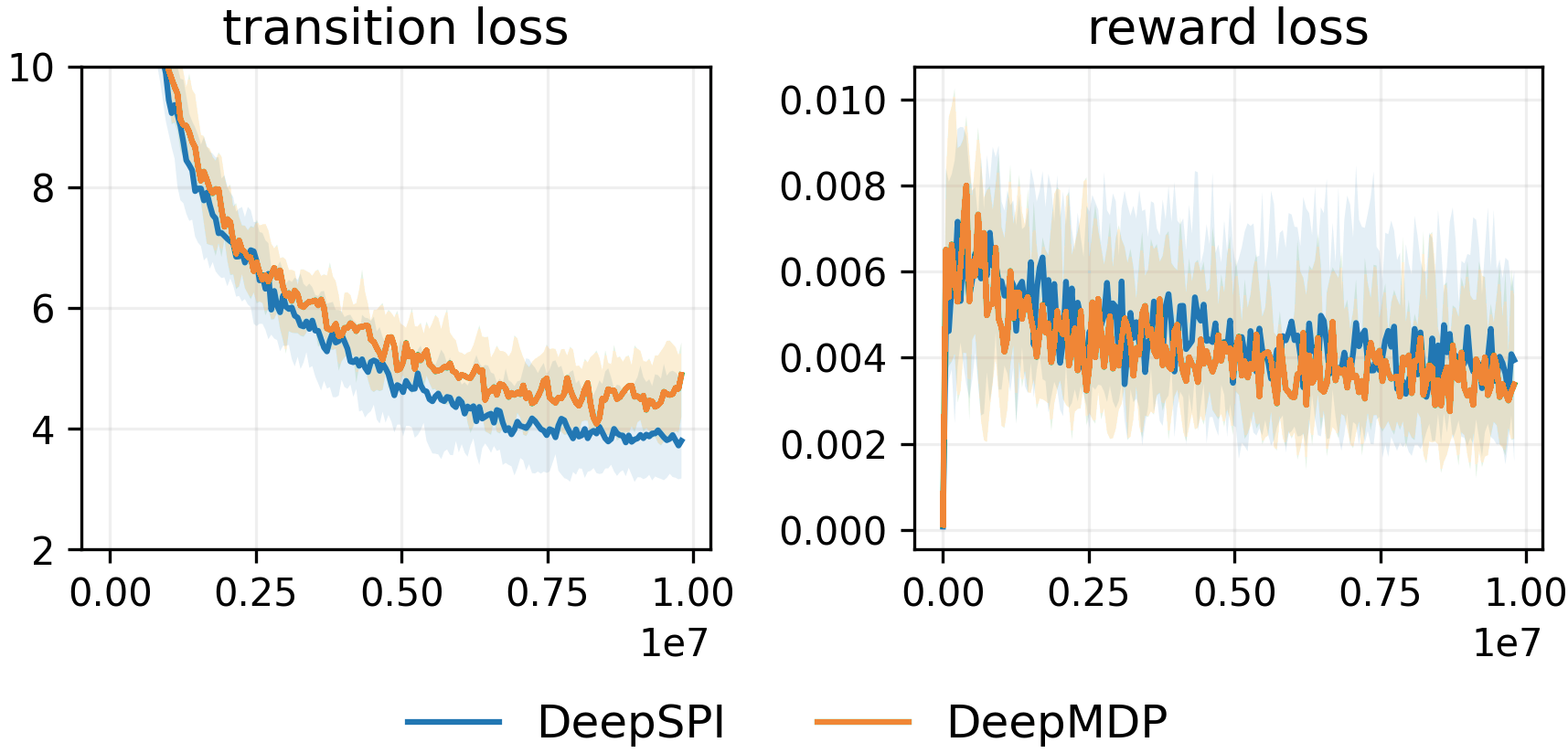}
    \vspace{-2em}
    \caption{Median transition and reward losses during training, aggregated across all the ALE. For the sake of visualization, we cut $\localtransitionloss{}$ lower values from the plot.}\label{fig:aux-losses}
    \vspace{-1em}
\end{wrapfigure}
Fig.~\ref{fig:aux-losses} reports $\localtransitionloss{}$, $\localrewardloss{}$ during training.
Note that \texttt{DeepSPI} consistently achieves lower transition losses, indicating more accurate transition functions.
\revision{We discuss the statistical significance of that statement in Appendix~\ref{appendix:full-plots}.}
In contrast to the off-policy setting where \citet{DBLP:conf/icml/GeladaKBNB19} reported competing transition and reward losses, we did not observe such behavior in our parallel on-policy setting. We attribute this stability to the fact that our losses are always computed under the current policy, unlike off-policy methods that rely on replay buffers.

To probe the predictive quality of the latent model and illustrate Thm.~\ref{thm:deep-spi}, we introduced \texttt{DreamSPI}, a naïve variant where \texttt{DeepSPI} learns the world model and representation, and PPO updates the policy \emph{from imagined trajectories} (Appendix~\ref{appendix:dream-spi}).
Unlike off-policy approaches that exploit replay buffers and update the model at every interaction step, our fully on-policy setting updates the world model only from fresh interaction data, which makes combining model learning and planning inherently more challenging.
Even so, \texttt{DreamSPI} achieves learning progress in several environments and exhibits coherent behaviours (cf.~Fig.~\ref{fig:atari-special6} \& Appendix~\ref{appendix:full-plots}). While its aggregate median score remains below the baselines, this is somehow expected given the stricter data requirements compared to usual model-based approaches. Importantly, the ability to maintain a model offers benefits that extend far beyond raw scores, enabling future applications in safety, verification, and reactive synthesis.

\begin{figure}[t]
    \centering
    \vspace{-0em}
    \includegraphics[width=\linewidth,keepaspectratio,clip,trim=4mm 4mm 0mm 4mm]{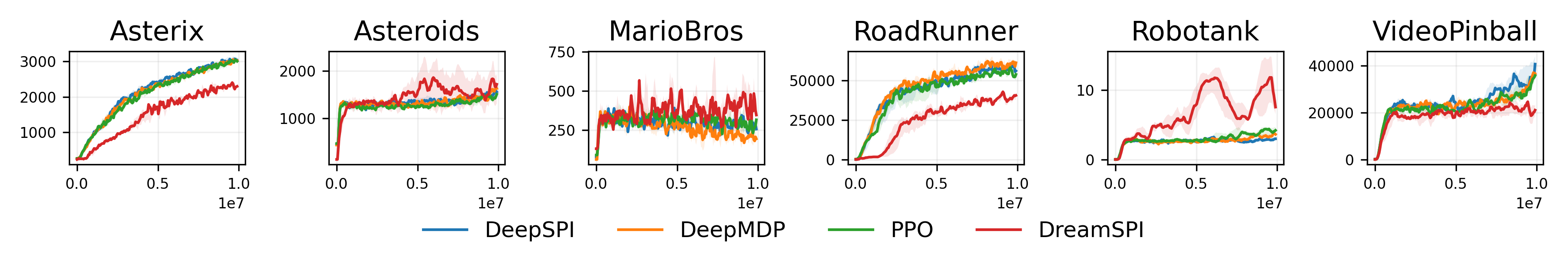}
    \vspace{-1.5em}
    \caption{Sample environments from ALE where \texttt{DreamSPI} learns meaningful behaviors.}
    \label{fig:atari-special6}
    \vspace{-2em}
\end{figure}

\vspace{-.5em}
\section{Conclusion and Future Work}
\vspace{-.5em}
We developed a theoretical framework for safe policy improvement (SPI) that combines world-model and representation learning in nontrivial settings. Our results show that constraining policy updates within a well-defined neighborhood yields monotonic improvement and convergence, while auxiliary transition and reward losses ensure that the latent space remains suitable for policy optimisation. We further provided model-quality guarantees in the form of a ``deep'' SPI theorem, which jointly accounts for the learned representation and the reward/transition losses. These results directly address two critical issues in model-based RL: out-of-trajectory errors and confounding policy updates.
Building on this analysis, we proposed \texttt{DeepSPI}, a principled algorithm that integrates the theoretical ingredients with PPO. 
On ALE, \texttt{DeepSPI} is competitive with and often improves upon PPO and \texttt{DeepMDPs}, while providing SPI guarantees. 

This work opens several directions. A first avenue is to make pure deep SPI model-based planning practical. Our experiments with \texttt{DreamSPI} suggest that this is feasible but requires improved sample efficiency. Another direction goes beyond return optimization: a principled world model, grounded in our theory, can support safe reinforcement learning via formal methods, through synthesis \citep{delgrange2025composing,DBLP:conf/aaai/LechnerZCH22}, or shielding \citep{DBLP:conf/concur/0001KJSB20}.

\subsubsection*{Acknowledgments}
We thank Marnix Suilen and Guillermo A. Pérez for their valuable feedback during the preparation of this manuscript.
This research was supported by the Belgian Flemish AI Research Program and the “DESCARTES” iBOF project.
W.~Röpke and R.~Avalos are supported by the Research Foundation – Flanders (FWO), with respective grant numbers 1197622N and 11F5721N.
\bibliographystyle{plainnat}
\bibliography{refs}

\newpage
\appendix
\begin{adjustwidth}{-0.5cm}{-1.5cm}
\textbf{\huge Appendix}
\allowdisplaybreaks
\section{Remark on value functions and episodic processes}\label{appendix:episodic-values}
An \emph{episodic process} is formally defined as an MDP $\mdp = \mdptuple$ where:
\begin{enumerate}[(i)]
    \item there is a special state $\sreset \in \states$, intuitively indicating the termination of any \emph{episode};
    \item the reset state does not incur any reward: $\rewards\fun{\sreset, \action} = 0$ for all actions $\action \in \actions$;
    \item $\sreset$ is almost surely visited under any policy: for all policies $\policy \in \policies$, $\Prob_{\policy}\fun{\set{\fun{\state_t, \action_t}_{t \geq 0} \mid \exists i \colon \state_i = \sreset}} = 1$; and\label{enum:epis-3}
    \item $\mdp$ restarts from the initial state once reset: $\probtransitions\fun{\set{\sinit} \mid \sreset, \action} = 1$ for all $\action \in \actions$.\label{enum:epis-4}
\end{enumerate}
Note that by items \ref{enum:epis-3}
 and \ref{enum:epis-4}, $\sreset$ is almost surely \textbf{infinitely often} visited: we have for all $\policy \in \policies$ that \[\Prob_{\policy}\fun{\set{\fun{\state_t ,\action_t}_{t \geq 0} \mid \forall i \geq 0,\, \exists j > i \colon \state_j = \sreset}} = 1.\]
 Alternatively and equivalently, an episodic process may also be defined without a unique reset state by the means of several \emph{terminal states}, which go back to the initial state with probability one.
 
 An \emph{episode} of $\mdp$ is thus the prefix $\state_0, \action_0, \dots, \action_{t - 1}, \state_t$ of a trajectory where $\state_t = \sreset$ and for all $i < t$, $\state_i \neq \sreset$.
Notice that our formulation embeds (but is not limited to) finite-horizon tasks, where an upper bound on the length of the episodes is fixed.
The \emph{average episode length} (AEL) of $\policy$ is then formally defined as $\textsc{AEL}\fun{\policy} = \expectmdp{\policy}{}{\mathbf{T}}$ with
\[
\mathbf{T}\fun{\trajectory} = \sum_{i = 0}^{\infty} \fun{i + 1} \cdot \condition{\state_i = \sreset \,\text{ and }\, \forall j < i,\, \state_j \neq \sreset}
\]
for any trajectory $\tau = \fun{\state_t, \action_t}_{t \geq 0}$.

Often, when considering episodic tasks, RL algorithms stops accumulating rewards upon the termination of every episode.
In practical implementations, this corresponds to discarding rewards when a flag \texttt{done}, indicating episode termination, is set to \texttt{true}.
In such case, we may slightly adapt our value functions as:
\begin{align*}
    V^{\policy}\fun{\state} &= \begin{cases}
        \expectmdp{\policy}{}{\sum_{t = 0}^{\infty} \fun{\prod_{i = 1}^{t} \condition{\state_i \neq \sreset} \cdot \discount} \rewards\fun{\state_t, \action_t} \, \Big|\,  \state_0 = \state} & \text{if } \state \neq \sreset\\
        0 & \text{otherwise};
    \end{cases}
\end{align*}
or, when formalized as \citeauthor{Bellman:DynamicProgramming}'s equation:
\begin{align*}
    Q^{\policy}\fun{\state, \action} &= \begin{cases}
        \rewards\fun{\state, \action} + \discount \cdot \expectedsymbol{\state' \sim \probtransitions\fun{\sampledot \mid \state, \action}} V^{\policy}\fun{\state'} & \text{if } \state \neq \sreset\\
        0 & \text{otherwise; and}
    \end{cases}\\
    V^{\policy}\fun{\state} &= \expectedsymbol{\action \sim \policy\fun{\sampledot \mid \state}} Q^{\policy}\fun{\state, \action}.
\end{align*}
All our results extend to this formulation (cf.~Remark~\ref{rmk:extension-lem-episodic}).\\

 \begin{remark}[Occupancy measure]\label{rmk:occupancy-measure}
     In RL theory, the discounted occupancy measure \[\mu^{\discount}_{\policy}\fun{\state} \coloneqq \fun{1 - \discount} \cdot \sum_{t = 0}^{\infty} \discount^t \Prob_{\policy}\fun{\set{ \fun{\state_i, \action_i}_{i \geq 0} \mid \state_t = \state}}\] is often considered as the default marginal distribution over states the agent visit along the interaction, mostly because of its suitable theoretical properties.
     In fact, for any arbitrary MDP, $\mu^{\discount}_{\policy}$ \textbf{is the stationary distribution} of the episodic process obtained by considering a reset probability of $1 - \discount$ from every state of the original MDP \citep{DBLP:books/wi/Puterman94,DBLP:conf/aistats/MetelliMR23}.
     Again, we contend that all our results can be extended to the occupancy measure with little effort.
 \end{remark}

\section{Policy Improvements through Mirror Learning and Convergence Guarantees}
In this section, we prove that $\neighborhood^C$ (Eq.~\ref{eq:neighborhood}) is a proper \emph{mirror learning neighborhood operator}.
As a consequence, appropriately updating the policy according to $\neighborhood^C$ is guaranteed to be an instance of mirror learning, yielding the convergence guarantees of Theorem~\ref{thm:mirror-descent}.

For completeness, we recall the definition of neighborhood operator from \cite{DBLP:conf/icml/KubaWF22}.

\begin{definition}[Neighborhood operator]
The mapping $\neighborhood \colon \policies \to 2^{\policies}$ is a (mirror learning) \emph{neighborhood operator}, if
\begin{enumerate}
    \item \emph{(continuity)} It is a continuous map;
    \item \emph{(compactness)} Every $\neighborhood\fun{\policy}$ is a compact set; and
    \item \emph{(closed ball)} There exists a metric $d \colon \policies \times \policies \to \mathopen[0, \infty\mathclose)$, such that for all policies $\policy \in \policies$, there exists $\epsilon > 0$, such that $d\fun{\policy, \policy'} \leq \epsilon$ implies $\policy' \in \neighborhood\fun{\policy}$.
\end{enumerate}
The trivial neighborhood operator is $\neighborhood\fun{\policy} = \policies$.
\end{definition}
\begin{lemma}\label{lem:neighborhood}
    $\neighborhood^C$ is a neighborhood operator.
\end{lemma}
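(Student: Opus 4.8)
The plan is to verify the three defining properties of a mirror-learning neighborhood operator---continuity, compactness, and the closed-ball condition---for the set-valued map $\policy \mapsto \neighborhood^{C}\fun{\policy}$ on the finite-space policy set $\policies = \prod_{\state \in \states}\distributions{\actions}$, which is compact. Throughout I use that $\policy \in \neighborhood^{C}\fun{\policy}$ (the ratio $1$ lies in $[2-C,C]$ and supports trivially match), so all images are nonempty.

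Compactness is the routine part. For a fixed $\policy$, on the finite on-support set each map $\policy' \mapsto \policy'\fun{\action \mid \state}/\policy\fun{\action \mid \state}$ is continuous, so $\set{\policy' : \sir\fun{\policy, \policy'} \le C}$ and $\set{\policy' : \iir\fun{\policy, \policy'} \ge 2 - C}$ are closed; the off-support condition $\policy'\fun{\action \mid \state} = 0$ whenever $\policy\fun{\action \mid \state} = 0$ is also closed. Hence $\neighborhood^{C}\fun{\policy}$ is a closed subset of the compact set $\policies$, therefore compact. For the closed-ball property I would tailor the metric to the support structure: define $d\fun{\policy, \policy'} = \max_{\state, \action}\abs{\policy\fun{\action \mid \state} - \policy'\fun{\action \mid \state}}$ when $\support{\policy\fun{\sampledot \mid \state}} = \support{\policy'\fun{\sampledot \mid \state}}$ for every $\state$, and $d\fun{\policy, \policy'} = 2$ otherwise. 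A short case analysis (within-class distances are $\le 1$, across-class distances are exactly $2$) shows $d$ is a genuine metric. Given $\policy$, set $p_{\min} = \min\set{\policy\fun{\action \mid \state} : \policy\fun{\action \mid \state} > 0} > 0$ and choose $\epsilon = \min\set{1, (C-1)p_{\min}}$. If $d\fun{\policy, \policy'} \le \epsilon < 2$, then $\policy'$ shares the support of $\policy$, and on that support $\abs{\policy'\fun{\action \mid \state} - \policy\fun{\action \mid \state}} \le \epsilon \le (C-1)\policy\fun{\action \mid \state}$, so $\policy'\fun{\action \mid \state}/\policy\fun{\action \mid \state} \in [2-C, C]$; hence $\policy' \in \neighborhood^{C}\fun{\policy}$.

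The continuity of the set-valued map is the main obstacle, and I would prove it by establishing upper and lower hemicontinuity separately (which together give Hausdorff continuity, since $\policies$ is compact). For upper hemicontinuity I would verify the closed-graph property: suppose $\policy_n \to \policy$, $\psi_n \in \neighborhood^{C}\fun{\policy_n}$, and $\psi_n \to \psi$. Passing to the limit in the bounds $\fun{2-C}\policy_n\fun{\action \mid \state} \le \psi_n\fun{\action \mid \state} \le C\,\policy_n\fun{\action \mid \state}$ (with $\psi_n = 0$ off the support of $\policy_n$) shows simultaneously that $\psi\fun{\action \mid \state} > 0$ exactly when $\policy\fun{\action \mid \state} > 0$ and that the on-support ratios stay in the closed interval $[2-C, C]$, so $\psi \in \neighborhood^{C}\fun{\policy}$. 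The delicate case is when $\support{\policy_n\fun{\sampledot \mid \state}}$ strictly contains $\support{\policy\fun{\sampledot \mid \state}}$ (approaching the boundary from the interior); here the multiplicative structure is essential, because $\policy_n\fun{\action \mid \state} \to 0$ on the extra actions forces $\psi_n\fun{\action \mid \state} \le C\,\policy_n\fun{\action \mid \state} \to 0$, so no spurious support survives in the limit.

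For lower hemicontinuity I would exploit that each $\neighborhood^{C}\fun{\policy_n}$ is convex (the ratio bounds and support constraints are affine in $\policy'$ for fixed $\policy_n$) and that these sets converge in Hausdorff distance as $\policy_n \to \policy$; given $\psi \in \neighborhood^{C}\fun{\policy}$, the metric projections $\psi_n = \mathrm{proj}_{\neighborhood^{C}\fun{\policy_n}}\fun{\psi}$ then satisfy $\psi_n \to \psi$. The only point requiring care is again the boundary: when $\psi$ lies on a face of $\neighborhood^{C}\fun{\policy}$ where a ratio attains $C$ or $2-C$, $\psi$ itself may violate the constraint for some $\policy_n$, but the projections still converge to $\psi$ because the constraint-interval endpoints depend continuously on $\policy_n$ and collapse onto those of $\policy$. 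I expect these boundary (support-changing) cases to be the genuinely technical part; away from the boundary, restricting to a fixed full support makes both hemicontinuities immediate, since the ratios are jointly continuous in $\fun{\policy, \policy'}$ there.
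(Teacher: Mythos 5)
Your compactness argument (a closed subset of the compact set $\policies$), your closed-ball argument (the support-separating metric with $\epsilon = \min\set{1, \fun{C-1}p_{\min}}$), and your closed-graph proof of upper hemicontinuity are all sound; the first and third are in fact cleaner than the paper's own treatment, which for uhc invokes continuity of the extremal-ratio maps and, by building the support condition into its convergence metric, never has to face your ``support shrinks in the limit'' case at all.

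The genuine gap is in lower hemicontinuity, which is the technically substantive part of this lemma. You assert that the sets $\neighborhood^{C}\fun{\policy_n}$ converge to $\neighborhood^{C}\fun{\policy}$ in Hausdorff distance and then take metric projections; but for compact sets, Hausdorff convergence is \emph{equivalent} to the conjunction of the closed-graph property and precisely the lower-hemicontinuity statement you are trying to prove, so invoking it is circular. Equivalently, the projection argument only helps once you know $\mathrm{dist}\fun{\psi, \neighborhood^{C}\fun{\policy_n}} \to 0$ for every $\psi \in \neighborhood^{C}\fun{\policy}$, and that is the whole claim. Your remark that ``the constraint-interval endpoints depend continuously on $\policy_n$'' does not settle the boundary case you yourself flag: if some ratio of $\psi$ equals $C$ exactly, the natural candidates fail --- keeping $\psi$'s ratios relative to $\policy_n$ and renormalizing, or clipping $\psi$ into the box $\left[\fun{2-C}\policy_n, C\policy_n\right]$ and renormalizing, both push that ratio strictly above $C$ after normalization (e.g.\ with two actions, $\policy = \fun{\nicefrac{1}{2}, \nicefrac{1}{2}}$, $C = \nicefrac{3}{2}$, $\psi = \fun{\nicefrac{3}{4}, \nicefrac{1}{4}}$, $\policy_n = \fun{\nicefrac{1}{2} - \delta_n, \nicefrac{1}{2} + \delta_n}$). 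The paper closes exactly this gap with an explicit construction: $\policy'_n \coloneqq \fun{1 - \epsilon_n}\policy' + \epsilon_n \policy_n$, where $\delta_n = \norm{\policy_n - \policy}_{\infty}$ and $\epsilon_n = \frac{2C\delta_n}{\policy_{\min}\fun{C-1} + 2C\delta_n}$. Mixing with $\policy_n$ itself contracts every ratio toward $1$, and the paper's computation shows this slack exactly absorbs the perturbation $\delta_n$, so that $\sir\fun{\policy_n, \policy'_n} \leq C$ and $\iir\fun{\policy_n, \policy'_n} \geq 2 - C$ hold exactly (not merely asymptotically), while $\epsilon_n \to 0$ gives $\policy'_n \to \policy'$. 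Without this construction (or an equivalent one proving the projection distances vanish), your lhc step is an assertion rather than a proof.
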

\begin{proof}
Henceforth, fix a policy $\policy \in \policies$. 
\emph{
When taking the supremum, infimum, maximum, or minimum value over states and actions, we always consider actions to be taken from the support of the behavioral policy (in the denominator of the quotient).
}

Item 2 (compactness) is trivial due to $\iir\fun{\policy, \policy'} \geq 2 - C$ and  $\sir\fun{\policy, \policy'} \leq C$ for any $\policy' \in \neighborhood^C\fun{\policy}$. 
This means $\neighborhood^C\fun{\policy}$ contains its extrema, i.e., all the policies $\policy'$ satisfying 
$\iir\fun{\policy, \policy'} = 2 -C$ and $\sir\fun{\policy, \policy'} \leq C$, or $\iir\fun{\policy, \policy'} \geq 2 -C$ and $\sir\fun{\policy, \policy'} = C$.

In the following, for any $\policy \in \policies$ and  sequence $\fun{\policy_n}_{n \geq 0}$, we write $\policy_n \to \policy$ for the convergence of the sequence to $\policy$ with respect to the metric 
\begin{equation}\label{eq:policy-convergence-metric}
d\fun{\policy_1, \policy_2} = \begin{cases}
    \norm{\policy_1 - \policy_2}_{\infty} & \text{if } \support{\policy_1\fun{\sampledot \mid \state}} = \support{\policy_2\fun{\sampledot \mid \state}} \quad \forall \state \in \states, \text{ and} \\
    1 & \text{otherwise}.
\end{cases}
\end{equation}
In other words, $\policy_n \to \policy$ means that $\policy_n$ converges to $\policy$ in supremum norm as $n \to \infty$ when the support of the converging policy stabilizes and becomes the same as the limit policy.

Let us prove item 1 (continuity).
We show that $\neighborhood^C$ is a continuous \emph{correspondence} by showing it is upper and lower \emph{hemicontinuous} \citep{hemicontinuity}.

$\neighborhood^C$ is \emph{upper hemicontinuous} (uhc) if it is \emph{compact-valued} (item 1) and, for all policies $\policy \in \policies$ and every sequences $\fun{\policy_n}_{n \geq 0}$ and $\fun{\policy'_n}_{n \geq 0}$ with $\policy'_n \in \neighborhood^C\fun{\policy_n}$ for all $n \geq 0$, $\policy_n \to \policy$ and $\policy'_n \to \policy'$ implies $\policy' \in \neighborhood^C\fun{\policy}$.
Let $\fun{\policy_n}_{n \geq 0}$ and $\fun{\policy'_n}_{n \geq 0}$ be sequences of policies with $\policy'_n \in \neighborhood^C\fun{\policy_n}$ for all $n \geq 0$.

Fix $\state \in \states$ and $\action \in \actions$.
Consider the mapping
\[
f_{\state, \action} \colon \set{\fun{\policy, \policy'} \in \policies \times \policies \mid \action \in \support{\policy\fun{\sampledot \mid \state}} } \to \mathopen[0, \infty), \quad
\fun{\policy, \policy'} \mapsto \frac{\policy'\fun{\action \mid \state}}{\policy\fun{\action \mid \state}}.
\]
It is clear $f_{\state, \action}$ is continuous since the application of $\policy$ to $\policy\fun{\action \mid \state}$ is continuous and the division of two continuous functions is also continuous (when  considering actions from the support of $\policy\fun{\sampledot \mid \state}$).
Importantly, for $\ext \in \set{\sup, \inf}$, $\eir\fun{\policy, \policy'} =
\ext \set{f_{\state, \action}\fun{\policy, \policy'} \colon \state \in \states, \action \in \support{\policy\fun{\sampledot \mid \state}}}$
is also continuous:
    since $\states$ and $\actions$ are finite, the supremum (resp.~infimum) boils down to taking the maximum (resp.~minimum) of finitely many many continuous functions, which is a continuous operation.

Now, assume that $\policy_n \to \policy$ and $\policy'_n \to \policy'$.
The continuity of $\eir$ means that 
$\eir\fun{\policy_n, \policy'_n} \to \eir\fun{\policy, \policy'}$.
Since $\policy'_n \in \neighborhood^C\fun{\policy_n}$, we have $ \iir\fun{\policy_n, \policy'_n} \geq 2 - C$ and  $\sir\fun{\policy_n, \policy'_n} \leq C$ for all $n \geq 0$.
By the fact that $\eir\fun{\policy_n, \policy'_n}$ converges to $\eir\fun{\policy, \policy'}$ for $\ext \in \set{\inf, \sup}$, we also have that $\iir\fun{\policy, \policy'} \geq 2 - C$ and $\sir\fun{\policy, \policy'} \leq C$.

Then, $\neighborhood^C$ is uhc.

$\neighborhood^C$ is \emph{lower hemicontinuous} (lhc) if, for every policy $\policy$, sequence $\fun{\policy_n}_{n \geq 0}$ with $\policy_n \to \policy$, and policy $\policy' \in \neighborhood^C\fun{\policy}$, there exists a sequence $\fun{\policy'_n}_{n \geq 0}$ with $\policy'_n \to \policy'$ and such that there is a $n_0 \geq 0$ from which, for all $n \geq n_0$, $\policy'_n \in \neighborhood^C\fun{\policy_n}$. 

Therefore, let $\fun{\policy_n}_{n \geq 0}$ be a sequence of policies so that $\policy_n \to \policy$ and $\policy' \in \neighborhood\fun{\policy}$.
Since $\policy_n \to \policy$, we have 
\[
\forall \delta > 0, \exists n_0 \in \N \colon \forall n \geq n_0, \norm{\policy_n - \policy}_{\infty} \leq \delta \; \text{ and } \; \support{\policy_n\fun{\sampledot \mid \state}} = \support{\policy\fun{\sampledot \mid \state}} \quad \forall \state \in \states.
\]
In particular, this holds for $\delta < \nicefrac{\policy_{\min}}{2}$, where
$
\policy_{\min} = \min \set{ \policy\fun{\action \mid \state} \colon \state \in \states, \action \in \support{\policy\fun{\sampledot \mid \state}}}.
$
Let $n_0 \geq 0$ be the step associated with $\delta  < \nicefrac{\policy_{\min}}{2}$ and $n \geq n_0$.
Write $\delta_n = \norm{\policy_n - \policy}_{\infty}$ and let 
\[
\epsilon_n = \frac{2 C \delta_n}{\policy_{\min}\fun{C - 1} + 2 C \delta_n} \in \mathopen(0, 1\mathclose)
\]
Construct a sequence $\fun{\policy'_n}_{n \geq 0}$ so that, for all $\state \in \states$, $\action \in \actions$, and $n \geq n_0$,
\[
\policy'_n\fun{\action \mid \state}  = \fun{1 - \epsilon_n} \cdot \policy'\fun{\action \mid \state} + \epsilon_n \cdot \policy_n\fun{\action \mid \state}.
\]
Intuitively, $\policy'_n$ is a mixture of distributions $\policy'\fun{\sampledot \mid \state}$ and $\policy_n\fun{\sampledot \mid \state}$.
Consequently, $\policy'_n\fun{\sampledot \mid \state}$ is a well-defined distribution.
Finally, note that $\policy'_n \to \policy'$ because $\delta_n \to 0$, and so does $\epsilon_n$.

Now, we restrict our attention to $\action \in \support{\policy_n\fun{\sampledot \mid \state}}$.
Note that since $\policy_n$ stably converges to $\policy$ with its support, $\policy$ has the same support as $\policy_n$. 
Furthermore, since $\policy' \in \neighborhood^C\fun{C}$, $\policy'$ has also the same support as $\policy_n$.
In consequence, $\policy'_n$ has the same support as $\policy_n$.

Having that said, we start by showing the upper bound:
\begin{align*}
\frac{\policy'_n\fun{\action \mid \state}}{\policy_n\fun{\action \mid \state}} &= \fun{1 - \epsilon_n}\frac{\policy'\fun{\action \mid \state}}{\policy_n\fun{\action \mid \state}} + \epsilon_n\\
&\leq \fun{1 - \epsilon_n} \frac{C \cdot \policy\fun{\action \mid \state}}{\policy_n\fun{\action \mid \state}} + \epsilon_n
\tag{because $\policy'\fun{\action \mid \state} \leq C \cdot \policy\fun{\action \mid \state}$}\\
&\leq \fun{1 - \epsilon_n} \cdot \frac{C \cdot \policy\fun{\action \mid \state}}{\policy\fun{\action \mid \state} - \delta_n} + \epsilon_n \tag{because $\policy_n\fun{\action \mid \state} \geq \policy\fun{\action \mid \state} - \delta_n$}\\
&= \fun{1 - \epsilon_n} \frac{C}{1 - \nicefrac{\delta_n}{\policy\fun{\action \mid \state}}} + \epsilon_n \\
&\leq \fun{1 - \epsilon_n} \frac{C}{1 - \nicefrac{\delta_n}{\policy_{\min}}} + \epsilon_n.
\end{align*}
Note that for all $x \in \mathopen[0, \nicefrac{1}{2}\mathclose]$, 
\[
\frac{1}{1 - x} \leq 1 + 2x \;\text{ because }\;1 + 2 x - \frac{1}{1 - x} \geq 0 \iff \frac{\fun{1+ 2x} \fun{1 - x} - 1}{1 - x} \geq 0 \iff \frac{x \fun{1 - 2x}}{1 - x} \geq 0.
\]
Then, since $0 < \nicefrac{\delta_n}{\policy_{\min}} < \nicefrac{1}{2}$, we have
\[
\frac{\policy'_n\fun{\action \mid \state}}{\policy_n\fun{\policy \mid \state}} \leq \fun{1 - \epsilon_n} \cdot C \cdot \fun{1 + \nicefrac{2\delta_n}{\policy_{\min}} } + \epsilon_n.
\]
Let $x_n = 1 + \frac{2 \delta_n}{\policy_{\min}}$, and note that
\[
\epsilon_n = \frac{2C \delta_n}{\policy_{\min} \fun{C - 1} + 2 C \delta_n} = \frac{2 C \cdot \nicefrac{\delta_n}{\policy_{\min}}}{C + 2C \cdot \nicefrac{\delta_n}{\policy_{\min}} - 1} = \frac{- 2C \cdot \nicefrac{ \delta_n}{\policy_{\min}}}{1 - C - 2C \cdot \nicefrac {\delta_n}{\policy_{\min}}} = \frac{C \fun{1 - x_n}}{1 - x_n \cdot C}.
\]
Then,
\begin{align*}
    \frac{\policy'_n\fun{\action \mid \state}}{\policy_n\fun{\action \mid \state}} &\leq \fun{1 - \epsilon_n} x_n \cdot C + \epsilon_n \\
    &= x_n \cdot C - \epsilon_n \cdot x_n \cdot  C + \epsilon_n\\
    &= x_n \cdot C - \frac{C \fun{1 - x_n}}{1 - x_n \cdot C} \cdot x_n \cdot C + \frac{C \fun{1 - x_n}}{1 - x_n \cdot C}\\
    &= \frac{x_n \cdot C \fun{1 - x_n \cdot C} - x_n \cdot C^2 \fun{1 - x_n } + C \fun{1 - x_n}}{1 - x_n \cdot C}\\
    &= \frac{x_n \cdot C - x_n^2 C^2 - x_n \cdot C^2 + x_n^2 C^2 + C - x_n \cdot C}{1 - x_n \cdot C}\\
    &= \frac{- x_n \cdot C^2 + C }{1 - x_n \cdot C}\\
    &= C \cdot \frac{1 - x_n \cdot C}{1 - x_n \cdot C}\\
    &= C,
\end{align*}
which means that $\sir\fun{\policy_n, \policy'_n} \leq C$.

We now show the lower bound:
\begin{align*}
\frac{\policy'_n\fun{\action \mid \state}}{\policy_n\fun{\action \mid \state}} &= \fun{1 - \epsilon_n}\frac{\policy'\fun{\action \mid \state}}{\policy_n\fun{\action \mid \state}} + \epsilon_n\\
&\geq \fun{1 - \epsilon_n} \frac{\fun{2 - C} \cdot \policy\fun{\action \mid \state}}{\policy_n\fun{\action \mid \state}} + \epsilon_n
\tag{because $\policy'\fun{\action \mid \state} \geq (2 - C) \cdot \policy\fun{\action \mid \state}$}\\
&\geq \fun{1 - \epsilon_n} \frac{\fun{2 - C} \cdot \policy\fun{\action \mid \state}}{\policy\fun{\action \mid \state} + \delta_n} + \epsilon_n
\tag{because $\policy_n\fun{\action \mid \state} \leq \policy\fun{\action \mid \state} + \delta_n$}\\
&= \fun{1 - \epsilon_n} \, \frac{\fun{2 - C} }{1 + \nicefrac{\delta_n}{\policy\fun{\action \mid \state}}} + \epsilon_n\\
&\geq \fun{1 - \epsilon_n} \, \frac{\fun{2 - C} }{1 + \nicefrac{\delta_n}{\policy_{\min}}} + \epsilon_n\\
&\geq \fun{1 - \epsilon_n} \cdot \fun{2 - C} \cdot \fun{1 - \nicefrac{\delta_n}{\policy_{\min}}} + \epsilon_n \tag{because for all $x \in \R, \, \frac{1}{1 + x} \geq 1 - x$}\\
&= \fun{1 - \epsilon_n} \cdot \fun{2 - C - 2\cdot \nicefrac{\delta_n}{{\policy_{\min}}} + C \cdot \nicefrac{\delta_n}{{\policy_{\min}}}} + \epsilon_n\\
&= \fun{1 - \epsilon_n} \cdot \fun{2 - C + \fun{C - 2} \cdot \nicefrac{\delta_n}{{\policy_{\min}}}} + \epsilon_n\\
&= 2 - C + \fun{C - 2} \cdot \nicefrac{\delta_n}{{\policy_{\min}}} - \epsilon_n \fun{2 - C + \fun{C - 2} \cdot \nicefrac{\delta_n}{{\policy_{\min}}}} + \epsilon_n\\
&= 2 - C + \fun{C - 2} \cdot \nicefrac{\delta_n}{{\policy_{\min}}} + \epsilon_n \fun{ C - 1 + \fun{2 - C} \cdot \nicefrac{\delta_n}{{\policy_{\min}}}}\\
&= 2 - C + \fun{C - 2} \cdot \nicefrac{\delta_n}{{\policy_{\min}}} + \epsilon_n \, \fun{ C - 1} +\epsilon_n\cdot \fun{2 - C} \cdot \nicefrac{\delta_n}{{\policy_{\min}}}\\
&= 2 - C + \fun{C - 2} \cdot \nicefrac{\delta_n}{{\policy_{\min}}} + \frac{2 C \delta_n \cdot \fun{ C - 1}}{\policy_{\min}\fun{C - 1} + 2 C \delta_n} + \frac{2 C \delta_n \cdot \fun{ 2 - C}}{\policy_{\min}\fun{C - 1} + 2 C \delta_n} \cdot \nicefrac{\delta_n}{{\policy_{\min}}}\\
&= 2 - C + \delta_n \cdot \fun{\frac{C - 2}{\policy_{\min}} + \frac{2 C \cdot \fun{ C - 1}}{\policy_{\min}\fun{C - 1} + 2 C \delta_n} + \frac{2 C \delta_n \cdot \policy_{\min}^{-1} \cdot \fun{ 2 - C}}{\policy_{\min}\fun{C - 1} + 2 C \delta_n}}\\
&\geq 2 -C.
\end{align*}
To see how we obtain the last line, note that it suffices to show the content of the parenthesis multiplied by $\delta_n$ is greater than zero, i.e.,
\begin{align*}
    && \frac{C - 2}{\policy_{\min}} + \frac{2 C \cdot \fun{ C - 1}}{\policy_{\min}\fun{C - 1} + 2 C \delta_n} + \frac{2 C \delta_n \cdot \policy_{\min}^{-1} \cdot \fun{ 2 - C}}{\policy_{\min}\fun{C - 1} + 2 C \delta_n} &\geq 0&\\
    &\iff& \frac{2 C \cdot \fun{ C - 1} + 2 C \delta_n \cdot \policy_{\min}^{-1} \cdot \fun{ 2 - C}}{\policy_{\min}\fun{C - 1} + 2 C \delta_n} &\geq \frac{2 - C}{\policy_{\min}}&\\
    &\iff& 2 C \policy_{\min}\cdot \fun{ C - 1} + 2 C \delta_n \cdot \fun{ 2 - C} &\geq \fun {2 - C} \cdot \fun{\policy_{\min}\fun{C - 1} + 2 C \delta_n}&\\
    &\iff& 2 C \policy_{\min}\cdot \fun{ C - 1} &\geq \fun {2 - C} \cdot \fun{\policy_{\min}\fun{C - 1} + 2 C \delta_n - 2C \delta_n}&\\
    &\iff& 2 C \policy_{\min}\cdot \fun{ C - 1} &\geq \fun {2 - C} \cdot \fun{\policy_{\min}\fun{C - 1}}&\\
    &\iff& 2 C  &\geq {2 - C}, &
\end{align*}
which is always satisfied because $C \geq 1$.
Therefore, since this holds for any $\state \in \states$ and both $\policy'_n$ and $\policy_n$ have the same support, we have that $\iir\fun{\policy, \policy'} \geq 2 - C$.

Thus, we have $\iir\fun{\policy, \policy'} \geq 2 - C$ and $\sir\fun{\policy, \policy'} \leq C$, $\policy'_n \in \neighborhood^C\fun{\policy_n}$. Therefore, $\neighborhood^C$ is lhc.

Since $\neighborhood^C$ is uhc and lhc, it is continuous. This concludes the proof of item 1.

It remains to show item 3. 
Let $\epsilon = \fun{C - 1} \cdot \min_{\state, \action} \policy\fun{\action \mid \state}$, with $\action$ taken from $\support{\policy\fun{\sampledot \mid \state}}$.
Assume $d\fun{\policy, \policy'} \leq \epsilon$ (cf.~ Eq.~\ref{eq:policy-convergence-metric}).
For all $\state \in \states, \action \in \support{\policy\fun{\sampledot \mid \state}}$, we have
\begin{align*}
&\policy'\fun{\action \mid \state}\\
\leq\,& \policy\fun{\action \mid \state} + \epsilon \\
\leq\,& \policy\fun{\action \mid \state} + \fun{C - 1} \cdot  \min_{\state, \action} \policy\fun{\action \mid \state}\\
\leq\,& \policy\fun{\action \mid \state} + \fun{C - 1} \cdot  \policy\fun{\action \mid \state}\\
=\,& \policy\fun{\action \mid \state} \cdot \fun{1 + C - 1}\\
=\,& \policy\fun{\action \mid \state} \cdot C,
\end{align*}
or equivalently:
\[
\frac{\policy'\fun{\action \mid \state}}{\policy\fun{\action \mid \state}} \leq C.
\]
It remains to show the lower bound:
\begin{align*}
    \policy'\fun{\action \mid \state} &\geq \policy\fun{\action \mid \state} - \epsilon\\
    &= \policy\fun{\action \mid \state} - \fun{C -1} \cdot \min_{\state, \action} \policy\fun{\action \mid \state} \\
    &\geq \policy\fun{\action \mid \state} - \fun{C - 1} \cdot \policy\fun{\action \mid \state} \\
    &= \policy\fun{\action \mid \state} \cdot \fun{1 - C + 1} \\
    & = \policy\fun{\action \mid \state} \cdot C\\
    & \geq \policy\fun{\action \mid \state} \fun{2 - C},
\end{align*}
or equivalently:
\[
\frac{\policy'\fun{\action \mid \state}}{\policy\fun{\action \mid \state}} \geq2 - C.
\]

This concludes the proof of item 3.
\end{proof}
Then, Theorem~\ref{thm:mirror-descent} is obtained as a corollary of Lemma~\ref{lem:neighborhood}, and the fact that the update process
\[
\policy_{n + 1} \coloneqq \argsup_{\policy' \in \neighborhood^{C}\fun{\policy_n}} \, \expectedsymbol{\state \sim \stationary{\policy_n}}\expected{\action \sim \policy'\fun{\sampledot \mid {\state}}}{A^{\policy_n}\fun{\state, \action}},
\]
is an instance of mirror learning \citep{DBLP:conf/icml/KubaWF22}.

\section{Crude Wasserstein Upper Bound}\label{appendix:crude-wasserstein-bound}
\begin{lemma}
    Let $\state \in \states$ and $\action \in \actions$, the following upper bound holds:
    \[
    \mathcal{W}\big( \phi_{\sharp} P(\cdot \mid s,a), \; \overbar P(\cdot \mid \phi(s),a) \big) \leq \E_{s' \sim P(\cdot \mid s,a)} \; \E_{\bar s' \sim \overbar P(\cdot \mid \phi(s),a)} 
        \, \latentdistance\big(\phi(s'), \bar s'\big).
    \]
\end{lemma}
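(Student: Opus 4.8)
The plan is to exploit the fact that the Wasserstein distance $\wassersteindist{}{\mu}{\nu}$ is defined as an \emph{infimum} over all couplings $\lambda \in \Lambda\fun{\mu, \nu}$, so that evaluating the transport functional at any single admissible coupling of the two measures $\mu = \embed_{\sharp}\probtransitions\fun{\sampledot \mid \state, \action}$ and $\nu = \latentprobtransitions\fun{\sampledot \mid \embed\fun{\state}, \action}$ immediately yields an upper bound. The natural choice is the \textbf{independent (product) coupling} $\lambda^{\star} = \mu \otimes \nu$, under which the two latent states are drawn independently from their respective marginals. Since the coupling is a product measure, its marginals are $\mu$ and $\nu$ by construction, so it is admissible; the only measurability point to check is that $\embed$ is measurable, which guarantees that $\embed_{\sharp}\probtransitions\fun{\sampledot \mid \state, \action}$ is a well-defined distribution on $\latentstates$.

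Evaluating the cost at $\lambda^{\star}$ and factorizing the expectation over the product measure (Tonelli applies since $\latentdistance \geq 0$, so no integrability subtlety arises) gives
\[
\wassersteindist{}{\mu}{\nu} \leq \expectedsymbol{\latentstate \sim \mu}\expectedsymbol{\latentstate' \sim \nu}{\latentdistance\fun{\latentstate, \latentstate'}}.
\]
The final step is to rewrite the outer expectation via the \textbf{change-of-variables identity} for pushforward measures: for any integrable $f$ one has $\expectedsymbol{\latentstate \sim \embed_{\sharp}\probtransitions\fun{\sampledot \mid \state, \action}}{f\fun{\latentstate}} = \expectedsymbol{\state' \sim \probtransitions\fun{\sampledot \mid \state, \action}}{f\fun{\embed\fun{\state'}}}$. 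Applying this with $f$ taken to be the inner expectation $\latentstate \mapsto \expectedsymbol{\latentstate' \sim \nu}{\latentdistance\fun{\latentstate, \latentstate'}}$ turns the right-hand side into exactly $\expectedsymbol{\state' \sim \probtransitions\fun{\sampledot \mid \state, \action}}\expectedsymbol{\latentstate' \sim \latentprobtransitions\fun{\sampledot \mid \embed\fun{\state}, \action}}{\latentdistance\fun{\embed\fun{\state'}, \latentstate'}}$, which is the stated bound.

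There is no genuine obstacle here: the inequality is simply the observation that the optimal-transport cost is dominated by the cost of the trivial, independent transport plan that makes no attempt to align the two distributions. The only points requiring a modicum of care are the admissibility of the product coupling (marginals and measurability of $\embed$) and the correct bookkeeping when pushing the expectation through $\embed$, both of which are routine.
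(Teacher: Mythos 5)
Your proof is correct, but it takes a different route from the paper's. You work entirely in the primal formulation: bound the infimum over couplings by the cost of one admissible coupling, namely the independent product coupling $\fun{\embed_{\sharp}\probtransitions\fun{\sampledot \mid \state, \action}} \otimes \latentprobtransitions\fun{\sampledot \mid \embed\fun{\state}, \action}$, then rewrite the outer expectation via the pushforward change of variables. The paper instead starts from the Kantorovich--Rubinstein \emph{dual} (supremum over $1$-Lipschitz functions), pushes the supremum inside the expectation over $\state' \sim \probtransitions\fun{\sampledot \mid \state, \action}$, recognizes the resulting inner quantity as $\wassersteinsymbol{}\fun{\delta_{\embed\fun{\state'}}, \latentprobtransitions\fun{\sampledot \mid \embed\fun{\state}, \action}}$, and only then drops back to the primal, where the coupling with a Dirac marginal is forced to be the product. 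The two arguments are closely related --- averaging the paper's per-state Dirac couplings $\delta_{\embed\fun{\state'}} \otimes \latentprobtransitions\fun{\sampledot \mid \embed\fun{\state}, \action}$ over $\state' \sim \probtransitions\fun{\sampledot \mid \state, \action}$ yields exactly your product coupling --- but yours is the more elementary and direct of the two: a single inequality from the definition, with no appeal to duality or to the sup-expectation interchange. What the paper's detour buys is mainly stylistic consistency with its other proofs (e.g., the value-difference lemma), which repeatedly toggle between the dual and primal formulations; for this particular lemma the dual adds nothing, and your bookkeeping (admissibility of the product coupling, measurability of $\embed$, Tonelli for the nonnegative cost) covers all the points that need care.
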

\begin{proof}
\begin{align*}
    &\mathcal{W}\big( \phi_{\sharp} P(\cdot \mid s,a), \; \overbar P(\cdot \mid \phi(s),a) \big) \\
    &= \sup_{\lVert f \rVert_{\text{Lip}} \leq 1} \; \Bigg[ 
        \E_{s' \sim P(\cdot \mid s,a)} f\big(\phi(s')\big) 
        - \E_{\bar s' \sim \overbar P(\cdot \mid \phi(s),a)} f(\bar s') 
    \Bigg] \tag{1} \\
    &\leq \E_{s' \sim P(\cdot \mid s,a)} \Bigg[ 
        \sup_{\lVert f \rVert_{\text{Lip}} \leq 1} \; 
        f\big(\phi(s')\big) - \E_{\bar s' \sim \overbar P(\cdot \mid \phi(s),a)} f(\bar s') 
    \Bigg] \\
    &= \E_{s' \sim P(\cdot \mid s,a)} 
        \mathcal{W}\big( \delta_{\phi(s')}, \; \overbar P(\cdot \mid \phi(s),a) \big) \\
    &= \E_{s' \sim P(\cdot \mid s,a)} 
        \Bigg[ \min_{\lambda \in \Lambda(\delta_{\phi(s')},\, \overbar P(\cdot \mid \phi(s),a))} 
        \E_{(\bar s_1,\bar s_2) \sim \lambda} \, d(\bar s_1, \bar s_2) \Bigg] \tag{2} \\
    &= \E_{s' \sim P(\cdot \mid s,a)} \; \E_{\bar s' \sim \overbar P(\cdot \mid \phi(s),a)} 
        \, \latentdistance\big(\phi(s'), \bar s'\big).
\end{align*}
Here, (1) corresponds to the dual Kantorovich--Rubinstein formulation \citep{Kantorovich-Rubinstein-1958} where $\norm{\cdot}_{\text{Lip}}$ corresponds to the Lipschitz norm, while (2) follows from the primal Monge formulation \citep{monge1781}, with a trivial coupling induced by $\delta_{\embed\fun{\state'}}$, the Dirac measure with impulse $\embed\fun{\state'}$.
\end{proof}

\section{Remark on Safe Policy Improvement Methods}\label{appendix:spi-rmk}
Standard principled \emph{safe policy improvement} methods (SPI; \citealp{DBLP:conf/icml/ThomasTG15,DBLP:conf/nips/GhavamzadehPC16,pmlr-v97-laroche19a, DBLP:conf/atal/SimaoLC20,pmlr-v202-castellini23a,DBLP:conf/ijcai/WienhoftSSDB023}) do not consider representation learning.
Instead, SPI methods assume $\latentstates \coloneqq \states$ and learn $\latentrewards$, $\latentprobtransitions$ by maximum likelihood estimation with respect to the experience stored in $\dataset$ collected by the behavioral $\bpolicy$.
Then, the policy improvement relies on finding the best policy in $\latentmdp$ that is (probably approximately correctly) guaranteed to improves on the behavioral policy (up to an error term $\zeta > 0$) against a set of all admissible MDPs, called \emph{robust MDPs} \citep{DBLP:journals/mor/Iyengar05,DBLP:journals/ior/NilimG05,DBLP:journals/mor/WiesemannKR13,NIPS2016_9a3d4583,DBLP:conf/birthday/SuilenBB0025}:
\begin{align*}
\argsup_{\latentpolicy \in \latentpolicies} \mdpreturn{\latentpolicy}{\latentmdp} &&\text{such that}&& \arginf_{\mdp' \in \Xi\fun{\latentmdp, e}} \mdpreturn{\policy}{\mdp'} \geq \mdpreturn{\bpolicy}{\mdp'} - \zeta, \text{ where}
\end{align*}
\[
\Xi\fun{\latentmdp, e} \coloneqq \set{ \mdp = \mdptuple \, \bigg| \!
\begin{array}{rl}
     \abs{\rewards\fun{\state, \action} - \latentrewards\fun{\state, \action}} \!\!\!\!& \leq \Rmax\cdot e\fun{\state, \action} \quad\quad \text{ and}  \\[1mm]
     \dtv{\probtransitions\fun{\sampledot \mid \state, \action}}{\latentprobtransitions\fun{\sampledot \mid \state, \action}} \!\!\!\!& \leq e\fun{\state, \action} \quad\quad \forall \state \in \states, \action \in \actions
\end{array}
},
\]
$e\fun{\state, \action}$ being an \emph{error} term depending on the number of times each state $\state$ and action $\action$ are present in the dataset $\dataset$, and $\dtvsymbol$ being the \emph{total variation distance} \citep{e999eb1a-76a6-3d4a-b9e6-48809754773a} which boils down to the $L_1$ distance when the state-action space is finite.
To provide \emph{probably approximately correct} (PAC) guarantees, the state-action pairs need to be visited a \emph{sufficient amount of time}, depending on the size of the state-action space, to ensure $e$ is sufficiently small.

Note that the reward and total variation constraints are very related to our local losses $\localrewardloss{}$ and $\localtransitionloss{}$: the representation corresponds here to the identity and $\dtvsymbol$ coincides with Wasserstein as the state space is discrete \citep{Villani2009}.
The major difference here is that the bounds need to hold \emph{globally}, i.e., for all state-action pairs, which make their computation typically intractable in complex settings (e.g., high-dimensional feature spaces).

We argue \textbf{this objective is ill-suited to complex settings}.
First, classic SPI does not apply to general spaces.
Second, assuming we deal with \emph{finite}, high-dimensional feature spaces (e.g., visual inputs or the RAM of a video game), it is simply unlikely that $\dataset$ contains all state-action pairs. 
\emph{SPI with baseline bootstrapping} \citep{pmlr-v97-laroche19a} allows bypassing this requirement by updating $\bpolicy$ only in state-action pairs where a \emph{sufficient} number of samples are present in $\dataset$. Nevertheless, this number is gigantic and is linear in the state-action space while being exponential in the size of the encoding of $\discount$ and the desired error $\zeta$. This deems the policy update intractable.
Finally, as mentioned, standard SPI does not consider representation learning.
This is a further obstacle to its application in complex settings.
\section{Safe Policy Improvements: Proofs}\label{appendix:spi-proofs}
\smallparagraph{Notations}
Henceforth, we denote by $\latentvaluessymbol{\latentpolicy}{}$ the value function of the world model $\latentmdp$ obtained under any latent policy $\latentpolicy \in \latentpolicies$.
When it is clear from the context that $\embed$ is the representation used jointly with a latent policy $\latentpolicy$, we may simply write $V^{\latentpolicy}$ instead of $V^{\fun{\latentpolicy \circ \embed}}$ for the value function of executing $\latentpolicy$ in $\mdp$.
In the following, we may also write $\fun{\state, \action} \sim \stationary{\policy}$ as a shorthand for first drawing $\state \sim \stationary{\policy}$ and then $\action \sim \policy\fun{\sampledot \mid \state}$ for any policy $\policy \in \policies$.

We start by recalling a result from \cite{DBLP:conf/icml/GeladaKBNB19} that will be useful in the subsequent proofs.
\begin{lemma}[Lipschitzness of the \emph{latent} value function]\label{lemma:lipschitz-value}
    Let $\latentmdp$ be a latent MDP and $\latentpolicy$ be a policy for $\latentmdp$.
    Assume that $\latentmdp$ has reward and transition constants $K^{\latentpolicy}_{\latentrewards}$ and $K^{\latentpolicy}_{\latentprobtransitions}$ with $K^{\latentpolicy}_{\latentprobtransitions} < \nicefrac{1}{\discount}$. Then, the latent value function is $\nicefrac{K^{\latentpolicy}_{\latentrewards}}{\fun{1 - \discount K^{\latentpolicy}_{\latentprobtransitions}}}$-Lipschitz, i.e., for all $\latentstate_1, \latentstate_2 \in \latentstates$,
    \[
        \abs{\latentvalues{\latentpolicy}{}{\latentstate_1} - \latentvalues{\latentpolicy}{}{\latentstate_2}} \leq \frac{K^{\latentpolicy}_{\latentrewards}}{1 - \discount K^{\latentpolicy}_{\latentprobtransitions}} \cdot \latentdistance\fun{\latentstate_1, \latentstate_2}
    \]
\end{lemma}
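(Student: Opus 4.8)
The plan is to realize $\latentvalues{\latentpolicy}{}{\latentstate}$ as the fixed point of the policy-evaluation operator and to show that this operator preserves $K_V$-Lipschitz continuity, where $K_V = \nicefrac{K^{\latentpolicy}_{\latentrewards}}{\fun{1 - \discount K^{\latentpolicy}_{\latentprobtransitions}}}$. For a bounded measurable $V \colon \latentstates \to \R$, define the latent Bellman evaluation operator
\[
\fun{\bellmanop^{\latentpolicy} V}\fun{\latentstate} = \expected{\action \sim \latentpolicy\fun{\sampledot \mid \latentstate}}{\latentrewards\fun{\latentstate, \action} + \discount \expected{\latentstate' \sim \latentprobtransitions\fun{\sampledot \mid \latentstate, \action}}{V\fun{\latentstate'}}}.
\]
Since $\norm{\latentrewards}_\infty = \Rmax < \infty$, this operator is a $\discount$-contraction in supremum norm on the Banach space of bounded functions, and its unique fixed point (viewed as a function of $\latentstate$) is the latent value function $\latentvalues{\latentpolicy}{}{\latentstate}$.

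The core step bounds the Lipschitz constant of $\bellmanop^{\latentpolicy} V$ assuming $V$ is $K'$-Lipschitz. Fix $\latentstate_1, \latentstate_2 \in \latentstates$ and split the difference $\fun{\bellmanop^{\latentpolicy} V}\fun{\latentstate_1} - \fun{\bellmanop^{\latentpolicy} V}\fun{\latentstate_2}$ into a reward term and a discounted transition term. The reward term is controlled directly by the action-averaged reward-Lipschitz assumption, yielding the bound $K^{\latentpolicy}_{\latentrewards} \cdot \latentdistance\fun{\latentstate_1, \latentstate_2}$. For the transition term, write $\mu_i = \expectedsymbol{\action \sim \latentpolicy\fun{\sampledot \mid \latentstate_i}}\latentprobtransitions\fun{\sampledot \mid \latentstate_i, \action}$ for the action-averaged next-state distribution from $\latentstate_i$, so that the term equals $\discount \cdot \abs{\expected{\latentstate' \sim \mu_1}{V\fun{\latentstate'}} - \expected{\latentstate' \sim \mu_2}{V\fun{\latentstate'}}}$. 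Because $\nicefrac{V}{K'}$ is $1$-Lipschitz, the Kantorovich--Rubinstein duality \citep{Kantorovich-Rubinstein-1958} gives $\abs{\expected{\mu_1}{V} - \expected{\mu_2}{V}} \leq K' \cdot \wassersteindist{}{\mu_1}{\mu_2}$, and the action-averaged transition-Lipschitz assumption bounds $\wassersteindist{}{\mu_1}{\mu_2} \leq K^{\latentpolicy}_{\latentprobtransitions} \cdot \latentdistance\fun{\latentstate_1, \latentstate_2}$. Combining the two terms shows that $\bellmanop^{\latentpolicy} V$ is $\fun{K^{\latentpolicy}_{\latentrewards} + \discount K^{\latentpolicy}_{\latentprobtransitions} K'}$-Lipschitz.

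With this recursion on Lipschitz constants in hand, I would conclude by value iteration: starting from $V_0 \equiv 0$ (which is $0$-Lipschitz) and setting $V_{n+1} = \bellmanop^{\latentpolicy} V_n$, the constants satisfy $K_{n+1} = K^{\latentpolicy}_{\latentrewards} + \discount K^{\latentpolicy}_{\latentprobtransitions} K_n$ with $K_0 = 0$. Since $\discount K^{\latentpolicy}_{\latentprobtransitions} < 1$, this affine recursion is monotone increasing and converges to its unique fixed point $K_V$; hence $K_n \leq K_V$ for all $n$. As $V_n \to \latentvalues{\latentpolicy}{}{\latentstate}$ uniformly by the contraction property, and each $V_n$ is $K_V$-Lipschitz, the uniform limit $\latentvalues{\latentpolicy}{}{\latentstate}$ is $K_V$-Lipschitz, which is exactly the claim. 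Equivalently, one observes that the (sup-norm closed, hence complete) set of bounded $K_V$-Lipschitz functions is invariant under $\bellmanop^{\latentpolicy}$ and applies Banach's fixed-point theorem on this subspace.

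The main obstacle is the transition term: one must convert a difference of expectations of the value function under two state-dependent next-state distributions into a Wasserstein distance, which is precisely where the Kantorovich--Rubinstein duality and the rescaling of $V$ by $\nicefrac{1}{K'}$ (to obtain an admissible $1$-Lipschitz test function) are needed. The remaining subtlety is pure fixed-point bookkeeping, namely verifying that $K_V$ is exactly the value at which the Lipschitz-constant recursion stabilizes, i.e. $K^{\latentpolicy}_{\latentrewards} + \discount K^{\latentpolicy}_{\latentprobtransitions} K_V = K_V$; this rearranges to $K_V = \nicefrac{K^{\latentpolicy}_{\latentrewards}}{\fun{1 - \discount K^{\latentpolicy}_{\latentprobtransitions}}}$ and relies on the hypothesis $K^{\latentpolicy}_{\latentprobtransitions} < \nicefrac{1}{\discount}$ to keep the denominator $1 - \discount K^{\latentpolicy}_{\latentprobtransitions}$ strictly positive.
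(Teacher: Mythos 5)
Your proof is correct. Note that the paper itself does not prove this lemma: it is recalled verbatim from \citet{DBLP:conf/icml/GeladaKBNB19} and used as an imported result, so there is no in-paper argument to compare against. Your argument --- showing that the policy-evaluation operator maps a $K'$-Lipschitz function to a $\fun{K^{\latentpolicy}_{\latentrewards} + \discount K^{\latentpolicy}_{\latentprobtransitions} K'}$-Lipschitz one via Kantorovich--Rubinstein duality, then closing the loop with the affine recursion $K_{n+1} = K^{\latentpolicy}_{\latentrewards} + \discount K^{\latentpolicy}_{\latentprobtransitions} K_n$ and the contraction property (or, equivalently, invariance of the closed set of bounded $K_V$-Lipschitz functions under the operator) --- is the standard one and essentially reproduces the proof in that cited source; the hypothesis $K^{\latentpolicy}_{\latentprobtransitions} < \nicefrac{1}{\discount}$ enters exactly where you say, in making the recursion stable with fixed point $K_V = \nicefrac{K^{\latentpolicy}_{\latentrewards}}{\fun{1 - \discount K^{\latentpolicy}_{\latentprobtransitions}}}$. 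The only cosmetic remark is that your rescaling of $V$ by $\nicefrac{1}{K'}$ implicitly assumes $K' > 0$; the case $K' = 0$ (e.g.\ the base case $V_0 \equiv 0$) is trivial since the transition term then vanishes, so nothing breaks.
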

Note that the bound is straightforward when the latent space is discrete and the discrete metric $\condition{\neq}$ is chosen for $\latentdistance$: the largest possible difference in values is $\nicefrac{2\Rmax}{1 - \discount}$.

We also consider bounding
expected value difference between the original MDP and the latent MDP by the local losses evaluated with respect to a behavioral policy $\bpolicy$.
Importantly, the expectation is measured over states and actions generated according to $\bpolicy$, whereas the values correspond to those evaluated under \emph{another latent policy} $\latentpolicy$.
The following Lemma states that the value difference yielded by a latent policy can be measured according to another behavioral policy, provided that the latent policy lies within a well-defined neighborhood of the behavioral policy.

\begin{lemma}[Average value difference bound]\label{lem:avd}
    Let $\bpolicy \in \policies$ be the behavioral policy, $\fun{\latentpolicy \circ \embed} \in \neighborhood^{\nicefrac{1}{\discount}}\fun{\bpolicy}$ so that $\latentpolicy \in \latentpolicies$ and $\embed\colon \states \to \latentstates$ is a state representation.
    Assume $\latentmdp$ is equipped by the Lipschitz constants $K^{\latentpolicy}_{\latentrewards}$ and $K^{\latentpolicy}_{\latentprobtransitions}$ and let $K_V = \nicefrac{K^{\latentpolicy}_{\latentrewards}}{\fun{1 - \discount K^{\latentpolicy}_{\latentprobtransitions}}}$.
    Assume that  $K^{\latentpolicy}_{\latentprobtransitions}$ is strictly lower than $\nicefrac{1}{\discount}$.
    Then, the average difference of value of $\mdp$ and $\latentmdp$ under $\latentpolicy$ is bounded by
    \begin{align*}
    \expectedsymbol{\state \sim \stationary{\bpolicy}} \abs{{V^{\latentpolicy}\fun{\state} - \latentvalues{\latentpolicy}{}{\embed\fun{\state}}}} \leq \frac{\localrewardloss{\stationary{\bpolicy}} + \discount K_V \cdot \localtransitionloss{\stationary{\bpolicy}}}{\nicefrac{1}{\drift{\bpolicy}{\latentpolicy}} - \discount}.
\end{align*}
\end{lemma}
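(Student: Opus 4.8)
The plan is to run a discounted simulation-lemma argument in expectation under $\stationary{\bpolicy}$, transferring from the evaluated latent policy $\latentpolicy$ to the baseline $\bpolicy$ by importance sampling. Write $g\fun{\state} = V^{\latentpolicy}\fun{\state} - \latentvalues{\latentpolicy}{}{\embed\fun{\state}}$ for the pointwise value gap, where by convention $V^{\latentpolicy}$ is the value of $\latentpolicy \circ \embed$ in $\mdp$. Starting from the Bellman equations for $V^{\latentpolicy}$ in $\mdp$ and for $\latentvalues{\latentpolicy}{}{}$ in $\latentmdp$ evaluated at $\embed\fun{\state}$, both action-expectations are taken under $\latentpolicy\fun{\sampledot \mid \embed\fun{\state}}$, so their difference gives, for each fixed $\action$, a reward term $\abs{\rewards\fun{\state, \action} - \latentrewards\fun{\embed\fun{\state}, \action}}$ plus a transition term $\discount\big(\expectedsymbol{\state' \sim \probtransitions\fun{\sampledot \mid \state, \action}} V^{\latentpolicy}\fun{\state'} - \expectedsymbol{\latentstate' \sim \latentprobtransitions\fun{\sampledot \mid \embed\fun{\state}, \action}} \latentvalues{\latentpolicy}{}{\latentstate'}\big)$.

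The key decomposition is to add and subtract $\discount \expectedsymbol{\state' \sim \probtransitions\fun{\sampledot \mid \state, \action}} \latentvalues{\latentpolicy}{}{\embed\fun{\state'}}$ inside the transition term. This splits it into a recursive part $\discount \expectedsymbol{\state' \sim \probtransitions\fun{\sampledot \mid \state, \action}} g\fun{\state'}$ and a model-mismatch part comparing $\latentvalues{\latentpolicy}{}{}$ under the pushforward $\embed_{\sharp}\probtransitions\fun{\sampledot \mid \state, \action}$ against $\latentprobtransitions\fun{\sampledot \mid \embed\fun{\state}, \action}$. Since $\latentvalues{\latentpolicy}{}{}$ is $K_V$-Lipschitz by Lemma~\ref{lemma:lipschitz-value}, Kantorovich--Rubinstein duality bounds this mismatch by $K_V \cdot \wassersteindist{}{\embed_{\sharp}\probtransitions\fun{\sampledot \mid \state, \action}}{\latentprobtransitions\fun{\sampledot \mid \embed\fun{\state}, \action}}$. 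Taking absolute values and then expectation over $\state \sim \stationary{\bpolicy}$ and $\action \sim \latentpolicy\fun{\sampledot \mid \embed\fun{\state}}$ yields a pointwise inequality whose right-hand side contains only nonnegative quantities.

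Because every remaining term is nonnegative and $\fun{\latentpolicy \circ \embed} \in \neighborhood^{\nicefrac{1}{\discount}}\fun{\bpolicy}$ forces matching supports and $\sir\fun{\bpolicy, \latentpolicy} = \drift{\bpolicy}{\latentpolicy} \leq \nicefrac{1}{\discount}$, I can replace each $\expectedsymbol{\action \sim \latentpolicy\fun{\sampledot \mid \embed\fun{\state}}}$ by $\drift{\bpolicy}{\latentpolicy} \cdot \expectedsymbol{\action \sim \bpolicy\fun{\sampledot \mid \state}}$ via the one-sided ratio bound $\expectedsymbol{\latentpolicy} f \leq \sir\fun{\bpolicy, \latentpolicy}\cdot\expectedsymbol{\bpolicy} f$, valid for $f \geq 0$. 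The reward and Wasserstein terms then collapse to $\drift{\bpolicy}{\latentpolicy}\fun{\localrewardloss{\stationary{\bpolicy}} + \discount K_V \localtransitionloss{\stationary{\bpolicy}}}$. For the recursive term, after the same importance-sampling step I invoke the defining invariance of the stationary measure, $\stationary{\bpolicy}\fun{\sampledot} = \expectedsymbol{\state \sim \stationary{\bpolicy}}\expectedsymbol{\action \sim \bpolicy\fun{\sampledot \mid \state}}\probtransitions\fun{\sampledot \mid \state, \action}$, to rewrite $\expectedsymbol{\state \sim \stationary{\bpolicy}}\expectedsymbol{\action \sim \bpolicy}\expectedsymbol{\state' \sim \probtransitions} \abs{g\fun{\state'}}$ as $\expectedsymbol{\state' \sim \stationary{\bpolicy}}\abs{g\fun{\state'}}$, i.e. the quantity $D \coloneqq \expectedsymbol{\state \sim \stationary{\bpolicy}}\abs{g\fun{\state}}$ being bounded.

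This produces the self-referential inequality $D \leq \drift{\bpolicy}{\latentpolicy}\fun{\localrewardloss{\stationary{\bpolicy}} + \discount K_V \localtransitionloss{\stationary{\bpolicy}}} + \discount\, \drift{\bpolicy}{\latentpolicy}\, D$; solving for $D$ and dividing numerator and denominator by $\drift{\bpolicy}{\latentpolicy}$ gives exactly the claimed bound (the denominator $\nicefrac{1}{\drift{\bpolicy}{\latentpolicy}} - \discount$ being positive under the neighborhood constraint, with the boundary case yielding a vacuous bound). The main obstacle is the careful ordering in the recursive term: the conversion to $\bpolicy$ must precede the use of invariance, since stationarity holds only for $\bpolicy$-induced transitions, and this is precisely where nonnegativity of $\abs{g}$ is essential for the one-sided ratio bound to apply. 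A minor technical point is that the argument is written for the standard discounted Bellman recursion and must be checked against the episodic value function, where the reset state carries zero value and reward; this adaptation is routine and is the content of Remark~\ref{rmk:extension-lem-episodic}.
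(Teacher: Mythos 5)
Your proposal is correct and follows essentially the same route as the paper's proof: the same add-and-subtract decomposition of the transition term, the same use of the Lipschitz latent value function with Kantorovich--Rubinstein duality to extract the Wasserstein loss, the same importance-sampling step (valid by the matching-support condition and nonnegativity of the integrands) bounded by the supremum IR, the same invocation of stationarity of $\stationary{\bpolicy}$ to close the recursion, and the same resolution of the resulting self-referential inequality. Your remarks on the ordering of importance sampling before invariance and on the episodic adaptation match the paper's treatment exactly.
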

\begin{proof}
The proof follows by adapting the proof of \cite[Lemma~3]{DBLP:conf/icml/GeladaKBNB19} by taking extra care of the behavioral policy.
Namely, we want to evaluate the value difference bound for the latent policy $\latentpolicy$, assuming states and actions are/have been produced by executing the behavioral policy $\bpolicy$.
The idea is to incorporate the divergence from $\bpolicy$ to $\latentpolicy$ in the bound, formalized as the supremum IR between the underlying distribution of the two policies.
\begin{align}
    &\expectedsymbol{\state \sim \stationary{\bpolicy}}\abs{V^{\latentpolicy}\fun{\state} - \latentvalues{\latentpolicy}{}{\embed\fun{\state}}} \notag \\
    =& \expectedsymbol{\state \sim \stationary{\bpolicy}}\abs{ \expected{\action \sim \latentpolicy\fun{\sampledot \mid \embed\fun{\state}}}{\rewards\fun{\state, \action} + \discount \expected{\state' \sim \probtransitions\fun{\sampledot \mid \state, \action}}{V^{\latentpolicy}\fun{\state'}}} -  \expected{\action \sim \latentpolicy\fun{\sampledot \mid \embed\fun{\state}}}{\latentrewards\fun{\embed\fun{\state}, \action} + \discount \expected{\latentstate' \sim \latentprobtransitions\fun{\sampledot \mid \embed\fun{\state}, \action}}{\latentvalues{\latentpolicy}{}{\latentstate'}}}} \notag \\
    =& \expectedsymbol{\state \sim \stationary{\bpolicy}}\abs{ \expected{\action \sim \latentpolicy\fun{\sampledot \mid \embed\fun{\state}}}{\rewards\fun{\state, \action} - \latentrewards\fun{\embed\fun{\state}, \action}} + \discount \expected{\action \sim \latentpolicy\fun{\sampledot \mid \embed\fun{\state}}}{\expectedsymbol{}_{\substack{\state' \sim \probtransitions\fun{\sampledot \mid \state, \action} \atop \latentstate' \sim \latentprobtransitions\fun{\sampledot \mid \embed\fun{\state}, \action}}}\left[{V^{\latentpolicy}\fun{\state'} - \latentvalues{\latentpolicy}{}{\latentstate'}}\right]}} \notag \\
    =& \expectedsymbol{\state \sim \stationary{\bpolicy}}\abs{ \expected{\action \sim \latentpolicy\fun{\sampledot \mid \embed\fun{\state}}}{\rewards\fun{\state, \action} - \latentrewards\fun{\embed\fun{\state}, \action}} + \discount \expected{\action \sim \latentpolicy\fun{\sampledot \mid \embed\fun{\state}}}{
        \expectedsymbol{}_{\substack{\state' \sim \probtransitions\fun{\sampledot \mid \state, \action} \atop \latentstate' \sim \latentprobtransitions\fun{\sampledot \mid \embed\fun{\state}, \action}}}\left[{V^{\latentpolicy}\fun{\state'} - \latentvalues{\latentpolicy}{}{\embed\fun{\state'}} + \latentvalues{\latentpolicy}{}{\embed\fun{\state'}} - \latentvalues{\latentpolicy}{}{\latentstate'}}\right]
    }} \notag \\
    \begin{aligned}=\\\,\\\,\\\,\end{aligned}&
    \begin{multlined}[.95\linewidth]
    \expectedsymbol{\state \sim \stationary{\bpolicy}}\left| \expected{\action \sim \latentpolicy\fun{\sampledot \mid \embed\fun{\state}}}{\rewards\fun{\state, \action} - \latentrewards\fun{\embed\fun{\state}, \action}} \right. \\ 
    \left. + \, \discount \expected{\action \sim \latentpolicy\fun{\sampledot \mid \embed\fun{\state}}}{
        \expected{\state' \sim \probtransitions\fun{\sampledot \mid \state, \action}}{V^{\latentpolicy}\fun{\state'} - \latentvalues{\latentpolicy}{}{\embed\fun{\state'}}} + 
        \expectedsymbol{}_{\substack{\state' \sim \probtransitions\fun{\sampledot \mid \state, \action} \atop \latentstate' \sim \latentprobtransitions\fun{\sampledot \mid \embed\fun{\state}, \action}}}\left[{ \latentvalues{\latentpolicy}{}{\embed\fun{\state'}} - \latentvalues{\latentpolicy}{}{\latentstate'}}\right]
    }\right|
    \end{multlined}
    \notag \\
    \leq& \expectedsymbol{\state \sim \stationary{\bpolicy}}\expectedsymbol{\action \sim \latentpolicy\fun{\sampledot \mid \embed\fun{\state}}}\abs{ \left[{\rewards\fun{\state, \action} - \latentrewards\fun{\embed\fun{\state}, \action}}\right] + \discount {
        \expected{\state' \sim \probtransitions\fun{\sampledot \mid \state, \action}}{V^{\latentpolicy}\fun{\state'} - \latentvalues{\latentpolicy}{}{\embed\fun{\state'}}} + \discount
        \expectedsymbol{}_{\substack{\state' \sim \probtransitions\fun{\sampledot \mid \state, \action} \atop \latentstate' \sim \latentprobtransitions\fun{\sampledot \mid \embed\fun{\state}, \action}}}\left[{ \latentvalues{\latentpolicy}{}{\embed\fun{\state'}} - \latentvalues{\latentpolicy}{}{\latentstate'}}\right]
    }} \tag{\citeauthor{Jensen1906SurLF}'s inequality} \\
    \begin{aligned}\leq\\\,\\\,\end{aligned}&
    \begin{multlined}[.95\linewidth]
    \expectedsymbol{\state \sim \stationary{\bpolicy}}\expectedsymbol{\action \sim \latentpolicy\fun{\sampledot \mid \embed\fun{\state}}}\abs{ {\rewards\fun{\state, \action} - \latentrewards\fun{\embed\fun{\state}, \action}} }
    +
    \discount \expectedsymbol{\state \sim \stationary{\bpolicy}}\expectedsymbol{\action \sim \latentpolicy\fun{\sampledot \mid \embed\fun{\state}}}\abs{
    \expectedsymbol{}_{\substack{\state' \sim \probtransitions\fun{\sampledot \mid \state, \action} \atop \latentstate' \sim \latentprobtransitions\fun{\sampledot \mid \embed\fun{\state}, \action}}}\left[{ \latentvalues{\latentpolicy}{}{\embed\fun{\state'}} - \latentvalues{\latentpolicy}{}{\latentstate'}}\right]}
    \\
    + \discount \expectedsymbol{\state \sim \stationary{\bpolicy}}\expectedsymbol{\action \sim \latentpolicy\fun{\sampledot \mid \embed\fun{\state}}}\abs{
        \expected{\state' \sim \probtransitions\fun{\sampledot \mid \state, \action}}{V^{\latentpolicy}\fun{\state'} - \latentvalues{\latentpolicy}{}{\embed\fun{\state'}}}}
    \end{multlined}
    \tag{Triangle inequality} \\
    \begin{aligned}=\\\,\\\,\\\,\\\,\\\, \\\end{aligned}&
    \begin{multlined}[.9\linewidth]
        \\ 
        +
        \discount \expectedsymbol{\state \sim \stationary{\bpolicy}}\expectedsymbol{\action \sim \bpolicy\fun{\sampledot \mid {\state}}}\abs{ \frac{\latentpolicy\fun{\action \mid \embed\fun{\state}}}{\bpolicy\fun{\action \mid \state}} 
        \expectedsymbol{}_{\substack{\state' \sim \probtransitions\fun{\sampledot \mid \state, \action} \atop \latentstate' \sim \latentprobtransitions\fun{\sampledot \mid \embed\fun{\state}, \action}}}\left[{ \latentvalues{\latentpolicy}{}{\embed\fun{\state'}} - \latentvalues{\latentpolicy}{}{\latentstate'}}\right]}\\
        + \discount \expectedsymbol{\state \sim \stationary{\bpolicy}}\expectedsymbol{\action \sim \bpolicy\fun{\sampledot \mid {\state}}}\abs{ \frac{\latentpolicy\fun{\action \mid \embed\fun{\state}}}{\bpolicy\fun{\action \mid \state}}
            \expected{\state' \sim \probtransitions\fun{\sampledot \mid \state, \action}}{V^{\latentpolicy}\fun{\state'} - \latentvalues{\latentpolicy}{}{\embed\fun{\state'}}}}
    \end{multlined}
    \tag{because $\support{\latentpolicy\fun{\sampledot \mid \embed\fun{\state}}} = \support{\bpolicy\fun{\sampledot \mid \state}}$ for all $\state \in \states$}\\
    \begin{aligned}\leq\\\,\\\,\\\end{aligned}&
    \begin{multlined}[.9\linewidth]
        \drift{\bpolicy}{\latentpolicy}\expectedsymbol{\state, \action \sim \stationary{\bpolicy}}\abs{\rewards\fun{\state, \action} - \latentrewards\fun{\embed\fun{\state}, \action}} + \discount \cdot \drift{\bpolicy}{\latentpolicy} \expectedsymbol{\state, \action \sim \stationary{\bpolicy}} \abs{\expectedsymbol{\latentstate' \sim \embed_{\sharp}\probtransitions\fun{\sampledot \mid \state, \action}} \latentvalues{\latentpolicy}{}{\latentstate'} - \expectedsymbol{\latentstate' \sim \latentprobtransitions\fun{\sampledot \mid \embed\fun{\state}, \action}} \latentvalues{\latentpolicy}{}{\latentstate'}} \\
        + \discount \cdot \drift{\bpolicy}{\latentpolicy} \expectedsymbol{\state, \action \sim \stationary{\bpolicy}} \abs{\expected{\state' \sim \probtransitions\fun{\sampledot \mid \state, \action}}{V^{\latentpolicy}\fun{\state'} - \latentvalues{\latentpolicy}{}{\embed\fun{\state'}}}}
    \end{multlined}
    \tag{because $\drift{\bpolicy}{\latentpolicy} = \sup_{\state, \action} \left[ \frac{\latentpolicy\fun{\action \mid \embed\fun{\state}}}{\bpolicy\fun{\sampledot \mid \state}} \right]$} \\
    \begin{aligned}=\\\,\\\,\\\end{aligned}&
    \begin{multlined}[.95\linewidth]
        \drift{\bpolicy}{\latentpolicy} \cdot \localrewardloss{\stationary{\bpolicy}} + \discount \cdot \drift{\bpolicy}{\latentpolicy} \expectedsymbol{\state, \action \sim \stationary{\bpolicy}} \abs{\expectedsymbol{\latentstate' \sim \embed_{\sharp}\probtransitions\fun{\sampledot \mid \state, \action}} \latentvalues{\latentpolicy}{}{\latentstate'} - \expectedsymbol{\latentstate' \sim \latentprobtransitions\fun{\sampledot \mid \embed\fun{\state}, \action}} \latentvalues{\latentpolicy}{}{\latentstate'}} \\
        + \discount \cdot \drift{\bpolicy}{\latentpolicy} \expectedsymbol{\state, \action \sim \stationary{\bpolicy}} \abs{\expected{\state' \sim \probtransitions\fun{\sampledot \mid \state, \action}}{V^{\latentpolicy}\fun{\state'} - \latentvalues{\latentpolicy}{}{\embed\fun{\state'}}}}
    \end{multlined} \tag{by definition of $\localrewardloss{\stationary{\bpolicy}}$} \\ 
    \begin{aligned}\leq\\\,\\\,\\[.75em]\end{aligned}&
    \begin{multlined}[.95\linewidth]
        \drift{\bpolicy}{\latentpolicy} \cdot \localrewardloss{\stationary{\bpolicy}} + \discount K_V \cdot \drift{\bpolicy}{\latentpolicy} \expectedsymbol{\state, \action \sim \stationary{\bpolicy}} \wassersteindist{\latentdistance}{\embed_{\sharp}\probtransitions\fun{\sampledot \mid \state, \action}}{\latentprobtransitions\fun{\sampledot \mid \embed\fun{\state}, \action}} \\
        + \discount \cdot \drift{\bpolicy}{\latentpolicy} \expectedsymbol{\state, \action \sim \stationary{\bpolicy}} \abs{\expected{\state' \sim \probtransitions\fun{\sampledot \mid \state, \action}}{V^{\latentpolicy}\fun{\state'} - \latentvalues{\latentpolicy}{}{\embed\fun{\state'}}}}
    \end{multlined}
    \tag{by Theorem~\ref{lemma:lipschitz-value} and the dual formulation of Wasserstein}\\
    =& \drift{\bpolicy}{\latentpolicy} \cdot \fun{\localrewardloss{\stationary{\bpolicy}} + \discount K_V \cdot \localtransitionloss{\stationary{\bpolicy}}} + \discount  \drift{\bpolicy}{\latentpolicy} \cdot \expectedsymbol{\state, \action \sim \stationary{\bpolicy}} \abs{\expected{\state' \sim \probtransitions\fun{\sampledot \mid \state, \action}}{V^{\latentpolicy}\fun{\state'} - \latentvalues{\latentpolicy}{}{\embed\fun{\state'}}}}\tag{by definition of $\localtransitionloss{\stationary{\bpolicy}}$}\\
    \leq& \drift{\bpolicy}{\latentpolicy} \cdot \fun{\localrewardloss{\stationary{\bpolicy}} + \discount K_V \cdot \localtransitionloss{\stationary{\bpolicy}}} + \discount  \drift{\bpolicy}{\latentpolicy} \cdot \expectedsymbol{\state, \action \sim \stationary{\bpolicy}} \expectedsymbol{\state' \sim \probtransitions\fun{\sampledot \mid \state, \action}}\abs{{V^{\latentpolicy}\fun{\state'} - \latentvalues{\latentpolicy}{}{\embed\fun{\state'}}}} 
    \tag{\citeauthor{Jensen1906SurLF}'s inequality} \\
    =& \drift{\bpolicy}{\latentpolicy} \cdot \fun{\localrewardloss{\stationary{\bpolicy}} + \discount K_V \cdot \localtransitionloss{\stationary{\bpolicy}}} + \discount  \drift{\bpolicy}{\latentpolicy} \cdot \expectedsymbol{\state \sim \stationary{\bpolicy}} \abs{{V^{\latentpolicy}\fun{\state} - \latentvalues{\latentpolicy}{}{\embed\fun{\state}}}} \tag{as $\stationary{\bpolicy}$ is a stationary measure}
\end{align}
To summarize, we have:
\begin{align*}
    \expectedsymbol{\state \sim \stationary{\bpolicy}} \abs{{V^{\latentpolicy}\fun{\state} - \latentvalues{\latentpolicy}{}{\embed\fun{\state}}}} &\leq \drift{\bpolicy}{\latentpolicy} \cdot \fun{\localrewardloss{\stationary{\bpolicy}} + \discount K_V \cdot \localtransitionloss{\stationary{\bpolicy}}} + \discount  \drift{\bpolicy}{\latentpolicy} \cdot \expectedsymbol{\state \sim \stationary{\bpolicy}} \abs{{V^{\latentpolicy}\fun{\state} - \latentvalues{\latentpolicy}{}{\embed\fun{\state}}}}.
\end{align*}
Or equivalently,
\begin{align*}
    \fun{1 - \discount \drift{\bpolicy}{\latentpolicy}} \expectedsymbol{\state \sim \stationary{\bpolicy}} \abs{{V^{\latentpolicy}\fun{\state} - \latentvalues{\latentpolicy}{}{\embed\fun{\state}}}} &\leq \drift{\bpolicy}{\latentpolicy} \cdot \fun{\localrewardloss{\stationary{\bpolicy}} + \discount K_V \cdot \localtransitionloss{\stationary{\bpolicy}}}\\
    \expectedsymbol{\state \sim \stationary{\bpolicy}} \abs{{V^{\latentpolicy}\fun{\state} - \latentvalues{\latentpolicy}{}{\embed\fun{\state}}}} &\leq \drift{\bpolicy}{\latentpolicy} \cdot \frac{\localrewardloss{\stationary{\bpolicy}} + \discount K_V \cdot \localtransitionloss{\stationary{\bpolicy}}}{1 - \discount \drift{\bpolicy}{\latentpolicy}}\\
    &= \frac{\localrewardloss{\stationary{\bpolicy}} + \discount K_V \cdot \localtransitionloss{\stationary{\bpolicy}}}{\nicefrac{1}{\drift{\bpolicy}{\latentpolicy}} - \discount},
\end{align*}
which is well-defined because $\drift{\bpolicy}{\latentpolicy}$ is assumed strictly lower than $\nicefrac{1}{\discount}$.
\end{proof}

In the main text, we made the assumption the environment is episodic. 
Let us formally restate this assumption:
\begin{assumption}\label{assumption:episodic}
The environment $\mdp$ and the world model $\latentmdp$ are episodic.
\end{assumption}
\begin{assumption}\label{assumption:init}
    $\forall \state \in \states$, $\embed\fun{\state} = \latentsreset$ if and only if $\state = \sreset$.
\end{assumption}
Note that, as mentioned in Section~\ref{sec:background}, Assumption~\ref{assumption:episodic} ensures the existence of a stationary distribution $\stationary{\policy}$ and the ergodicity of both the original environment and the latent model.
Assumption~\ref{assumption:init} guarantees that the reset states are aligned in the original and latent MDPs.

We are now ready to prove Theorem~\ref{thm:value-bound}.\\[1mm]

\valuebound*
\begin{proof}
The first part of the proof follows by the expected value difference bound of Lemma~\ref{lem:avd}.
The second part of the proof follows by adapting of the one of \citealp[Theorem~1]{delgrange2025composing}, where the authors considered discrete latent MDPs and reach-avoid objectives (rewards were disregarded).

Our goal is to get rid of the expectation.
First, note that for any \emph{measurable state} so that $\stationary{\bpolicy}\fun{\set{\state}} > 0$, we have
$
\left| \values{\latentpolicy}{}{\state} -  \latentvalues{\latentpolicy}{}{\embed\fun{\state}} \right| \leq \nicefrac{1}{\stationary{\bpolicy}\fun{\set{\state}}} \cdot \expectedsymbol{\state' \sim \stationary{\bpolicy}}\left| \values{\latentpolicy}{}{\state'} -  \latentvalues{\latentpolicy}{}{\embed\fun{\state'}} \right|
$.
For simplicity, we write $\stationary{\bpolicy}\fun{\state}$ as shorthand for $\stationary{\bpolicy}\fun{\set{\state}}$ when considering such states.
Second, note that as $\sreset$ is almost surely visited episodically (Assumption~\ref{assumption:episodic}), \emph{restarting} the MDP (i.e., visiting $\sreset$) is a measurable event, meaning that $\sreset$ has a non-zero probability  $\stationary{\bpolicy}\fun{\sreset} \in \mathopen( 0, 1\mathclose)$.
Then,
    \begin{align}
        &\left| \mdpreturn{\latentpolicy \circ \embed}{\mdp} -  \mdpreturn{\latentpolicy}{\latentmdp} \right|\\
        =& \left| \values{\latentpolicy}{}{\sinit} -  \latentvalues{\latentpolicy}{}{\zinit} \right|\label{eq:value-reset}
        \\=& \frac{1}{\discount} \left| \discount \cdot \values{\latentpolicy}{}{\sinit} -  \discount \cdot \latentvalues{\latentpolicy}{}{\zinit} \right| \label{eq:sresettosinit}\\
        =& \frac{1}{\discount} \left| \values{\latentpolicy}{}{\sreset} -  \latentvalues{\latentpolicy}{}{\embed\fun{\sreset}} \right| \tag{by Assumptions~\ref{assumption:episodic} and~\ref{assumption:init}}\\
        \leq& \, \frac{1}{\discount \cdot \stationary{\bpolicy}\fun{\sreset}} \expectedsymbol{\state \sim \stationary{\bpolicy}} \left| \values{\latentpolicy}{}{\state} -  \latentvalues{\latentpolicy}{}{\embed\fun{\state}} \right|\\
        \leq &\frac{\nicefrac{\localrewardloss{\stationary{\bpolicy}}}{\discount} + K_{V} \cdot \localtransitionloss{\stationary{\bpolicy}}}{\stationary{\bpolicy}\fun{\sreset}\fun{\nicefrac{1}{\drift{\bpolicy}{\latentpolicy}} {-} \discount}}.
    \end{align}
    Finally, the result follows from the fact that $\nicefrac{1}{\stationary{\bpolicy}\fun{\sreset}}$ corresponds to the AEL.
    Indeed, when $\mdp$ is episodic, it is irreducible and recurrent \citep{DBLP:conf/nips/Huang20};
    thus, given the random variable
    \begin{equation*}
        \mathbf{T}_\state\fun{\trajectory  = \trajectorytuple{\state}{\action}} = \sum_{T = 1}^{\infty} T \cdot \condition{\state_T = \state \text{ and } \state_{t} \neq \state \text{ for all } 0 < t < T},
    \end{equation*}
    we have $\stationary{\policy}\fun{\state} = \nicefrac{1}{\expected{\policy}{\mathbf{T}_s \mid \state_0 = \state}}$ for any $\state \in \states$ and stationary policy $\policy$, where $\expectmdp{\policy}{}{\mathbf{T}_\state \mid \state_0 = \state}$ is the \emph{mean recurrence time}\index{mean!recurrence time} of $\state$ under $\policy$ \cite[Chapter~1, Theorem 54]{serfozo2009basics}.
    In particular, this means that $\nicefrac{1}{\stationary{\bpolicy}\fun{\sreset}} = \expectmdp{\bpolicy}{}{\mathbf{T}_{\sreset} \mid \state_0 = \sreset} =  \expectmdp{\bpolicy}{}{\mathbf{T}}$ is the AEL of $\mdp$ under $\bpolicy$, which yields
    \[
    \left| \mdpreturn{\latentpolicy \circ \embed}{\mdp} -  \mdpreturn{\latentpolicy}{\latentmdp} \right|
        \leq \expectmdp{\bpolicy}{}{\mathbf{T}} \cdot\frac{\nicefrac{\localrewardloss{\stationary{\bpolicy}}}{\discount} + K_{V} \cdot  \localtransitionloss{\stationary{\bpolicy}}}{{\nicefrac{1}{\drift{\bpolicy}{\latentpolicy}} {-} \discount}}.
    \]
\end{proof}

$ $\\
\begin{remark}[Extension to episodic value functions]\label{rmk:extension-lem-episodic}
    In Lemma~\ref{lem:avd} and Theorem~\ref{thm:value-bound}, we considered the standard definition of value function.
    One may wonder whether the results hold when considering episodic value functions, as defined in Appendix~\ref{appendix:episodic-values}.
    It turns out that it is the case, as one can easily adapt the proofs for those particular value functions.

We start by adapting the proof of Lemma~\ref{lem:avd}:
\begin{align}
    &\expectedsymbol{\state \sim \stationary{\bpolicy}}\abs{V^{\latentpolicy}\fun{\state} - \latentvalues{\latentpolicy}{}{\embed\fun{\state}}} \notag \\
    =&  {\expectedsymbol{\state \sim \stationary{\bpolicy}}\abs{ \condition{\state \neq \sreset} \cdot \fun{
    \begin{multlined}[.65\linewidth]
    \expected{\action \sim \latentpolicy\fun{\sampledot \mid \embed\fun{\state}}}{\rewards\fun{\state, \action} + \discount \expected{\state' \sim \probtransitions\fun{\sampledot \mid \state, \action}}{V^{\latentpolicy}\fun{\state'}}} \\ - \expected{\action \sim \latentpolicy\fun{\sampledot \mid \embed\fun{\state}}}{\latentrewards\fun{\embed\fun{\state}, \action} + \discount \expected{\latentstate' \sim \latentprobtransitions\fun{\sampledot \mid \embed\fun{\state}, \action}}{\latentvalues{\latentpolicy}{}{\latentstate'}}}
    \end{multlined}
    }}} \notag \\
    \leq&  \expectedsymbol{\state \sim \stationary{\bpolicy}}\abs{ \expected{\action \sim \latentpolicy\fun{\sampledot \mid \embed\fun{\state}}}{\rewards\fun{\state, \action} + \discount \expected{\state' \sim \probtransitions\fun{\sampledot \mid \state, \action}}{V^{\latentpolicy}\fun{\state'}}} -  \expected{\action \sim \latentpolicy\fun{\sampledot \mid \embed\fun{\state}}}{\latentrewards\fun{\embed\fun{\state}, \action} + \discount \expected{\latentstate' \sim \latentprobtransitions\fun{\sampledot \mid \embed\fun{\state}, \action}}{\latentvalues{\latentpolicy}{}{\latentstate'}}}}. \notag 
\end{align}
The remaining of the proof is identical.

Concerning Theorem~\ref{thm:value-bound},
we take a detour by defining a new value function $U$ as 
\begin{align*}
U^{\latentpolicy}\fun{\state} = \expected{\action \sim \latentpolicy\fun{\sampledot \mid \embed\fun{\state}}}{ \rewards\fun{\state, \action} + \discount \cdot \expected{\state' \sim \probtransitions\fun{\sampledot \mid \state, \action}}{U^{\latentpolicy}\fun{\state'} \cdot \condition{\state' \neq \sreset}}} && \forall \state \in \states
\end{align*}
The latent counterpart $\overbar{U}^{\latentpolicy}$ is defined similarly.
By definition of the episodic value function (Appendix~\ref{appendix:episodic-values}) and since $V^{\latentpolicy}\fun{\sreset} = 0$, it is clear that 
\begin{align}\label{eq:Vepis-U}
    V^{\latentpolicy}\fun{\state} = \begin{cases}
        U^{\latentpolicy}\fun{\state} & \text{if } \state \neq \sreset \\
        U^{\latentpolicy}\fun{\state} \cdot \condition{\state \neq \sreset} & \text{otherwise; and}
    \end{cases}
    &&
    \overbar{V}^{\latentpolicy}\fun{\latentstate} = \begin{cases}
        \overbar{U}^{\latentpolicy}\fun{\latentstate} & \text{if } \latentstate \neq \embed\fun{\sreset} \\
        \overbar{U}^{\latentpolicy}\fun{\latentstate} \cdot \condition{\latentstate \neq \embed\fun{\sreset}} & \text{otherwise.}
    \end{cases}
\end{align}
Therefore, 
\begin{align*}
    &\expectedsymbol{\state \sim \stationary{\bpolicy}}\abs{U^{\latentpolicy}\fun{\state} - \overbar{U}^{\latentpolicy}\fun{\embed\fun{\state}}}\\
    \begin{multlined}
        \leq\!\\[2em]
    \end{multlined}&
    \begin{multlined}[.95\linewidth]
    \!\!\!\!\expected{\state, \action \sim \stationary{\bpolicy}}{ \frac{\latentpolicy\fun{\action\mid\embed\fun{\state}}}{{\bpolicy\fun{\action \mid \state}}} \cdot \abs{\rewards\fun{\state, \action} - \latentrewards\fun{\embed\fun{\state}, \action}}} \\
    + \discount \expected{\state, \action \sim \stationary{\bpolicy}} {\frac{\latentpolicy\fun{\action\mid\embed\fun{\state}}}{{\bpolicy\fun{\action \mid \state}}}\cdot\abs{\expected{\state' \sim \probtransitions\fun{\sampledot \mid \state, \action}}{U^{\latentpolicy}\fun{\state'} \cdot \condition{\state' \neq \sreset}} - \expected{\latentstate' \sim \latentprobtransitions\fun{\embed\fun{\state}, \action}}{\overbar{U}^{\latentpolicy}\fun{\latentstate'} \cdot \condition{\latentstate' \neq \embed\fun{\sreset}}}}}
    \end{multlined}
    \tag{Triangle inequality and importance sampling}\\
    =&
    \sir\fun{\bpolicy, \latentpolicy} \cdot \expectedsymbol{\state, \action \sim \stationary{\bpolicy}} \abs{\rewards\fun{\state, \action} - \latentrewards\fun{\embed\fun{\state}, \action}} +   \discount\sir\fun{\bpolicy, \latentpolicy} \cdot \expectedsymbol{\state, \action \sim \stationary{\bpolicy}}\abs{\expectedsymbol{\state' \sim \probtransitions\fun{\sampledot \mid \state, \action}}V^{\latentpolicy}\fun{\state'} - \expectedsymbol{\latentstate' \sim \latentprobtransitions\fun{\sampledot \mid \embed\fun{\state}, \action}}\latentvaluessymbol{\latentpolicy}{}\fun{\latentstate'}}
    \tag{by Eq.~\ref{eq:Vepis-U} and definition of the SIR}\\
    \begin{aligned}\leq\\\,\\\,\\\end{aligned}&
    \begin{multlined}[.9\linewidth]
        \drift{\bpolicy}{\latentpolicy}\expectedsymbol{\state, \action \sim \stationary{\bpolicy}}\abs{\rewards\fun{\state, \action} - \latentrewards\fun{\embed\fun{\state}, \action}}
        + \discount \cdot \drift{\bpolicy}{\latentpolicy} \expectedsymbol{\state, \action \sim \stationary{\bpolicy}} \abs{\expected{\state' \sim \probtransitions\fun{\sampledot \mid \state, \action}}{V^{\latentpolicy}\fun{\state'} - \latentvalues{\latentpolicy}{}{\embed\fun{\state'}}}}\\
        + \discount \cdot \drift{\bpolicy}{\latentpolicy} \expectedsymbol{\state, \action \sim \stationary{\bpolicy}} \abs{\expectedsymbol{\latentstate' \sim \embed_{\sharp}\probtransitions\fun{\sampledot \mid \state, \action}} \latentvalues{\latentpolicy}{}{\latentstate'} - \expectedsymbol{\latentstate' \sim \latentprobtransitions\fun{\sampledot \mid \embed\fun{\state}, \action}} \latentvalues{\latentpolicy}{}{\latentstate'}}
    \end{multlined}
    \tag{Triangle inequality} \\
    \leq & \sir\fun{\bpolicy, \latentpolicy} \cdot \localrewardloss{\stationary{\bpolicy}} + \discount \sir\fun{\bpolicy, \latentpolicy} \cdot \expectedsymbol{\state \sim \stationary{\bpolicy}}\abs{V^{\latentpolicy}\fun{\state} - \overbar{V}^{\latentpolicy}\fun{\embed\fun{\state}}} + \discount \sir\fun{\bpolicy, \latentpolicy}\cdot K_V \cdot \localtransitionloss{\stationary{\latentpolicy}}\tag{by the same developments as in the proof of Lemma~\ref{lem:avd}}\\
    \leq & \sir\fun{\bpolicy, \latentpolicy} \cdot \localrewardloss{\stationary{\latentpolicy}} + \discount \sir\fun{\bpolicy, \latentpolicy}\cdot\frac{\localrewardloss{\stationary{\bpolicy}} + \discount K_V \cdot \localtransitionloss{\stationary{\bpolicy}}}{\nicefrac{1}{\sir\fun{\bpolicy, \latentpolicy}} - \discount} + \discount\sir\fun{\bpolicy, \latentpolicy} \cdot  K_V \cdot \localtransitionloss{\stationary{\latentpolicy}} \tag{Lemma~\ref{lem:avd}}\\
    =& \sir\fun{\bpolicy, \latentpolicy}  \fun{\localrewardloss{\stationary{\bpolicy}} \fun{1 + \frac{\discount}{\sir\fun{\bpolicy, \latentpolicy}^{-1} - \discount}} + \discount K_V \cdot \localtransitionloss{\stationary{\bpolicy}} \fun{1 + \frac{\discount}{\sir\fun{\bpolicy, \latentpolicy}^{-1} - \discount}}} \\
    =& \sir\fun{\bpolicy, \latentpolicy}  \fun{\localrewardloss{\stationary{\bpolicy}} \cdot \discount K_V \cdot \localtransitionloss{\stationary{\bpolicy}} } \fun{1 + \frac{\discount}{\sir\fun{\bpolicy, \latentpolicy}^{-1} - \discount}} \\
    =&\frac{\localrewardloss{\stationary{\bpolicy}} + \discount K_V \cdot \localtransitionloss{\stationary{\bpolicy}}}{\nicefrac{1}{\sir\fun{\bpolicy, \latentpolicy}} - \discount}.
\end{align*}
Now, in the proof of Theorem~\ref{thm:value-bound}, it suffices to replace Equation~\ref{eq:value-reset}
by observing that, in the episodic case, we have
\begin{align*}
    &\left| \mdpreturn{\latentpolicy \circ \embed}{\mdp} -  \mdpreturn{\latentpolicy}{\latentmdp} \right|
    = \left| \values{\latentpolicy}{}{\sinit} -  \latentvalues{\latentpolicy}{}{\zinit} \right|
    =\abs{U^{\latentpolicy}\fun{\sinit} - \overbar{U}^{\latentpolicy}\fun{\zinit}} \tag{again, by Equation~\ref{eq:Vepis-U}}
    \\
    =& \frac{1}{\discount} \left| \discount \cdot U^{\latentpolicy}{}\fun{\sinit} -  \discount \cdot \overbar{U}^{\latentpolicy}{}\fun{\zinit} \right|
    = \frac{1}{\discount} \left| U^{\latentpolicy}{}\fun{\sreset} -  \overbar{U}^{\latentpolicy}{}\fun{\embed\fun{\sreset}} \right| 
\end{align*}
Modulo this change, the remaining of the proof remains identical; one just needs to replace the occurrences of 
$\expectedsymbol{\state \sim \stationary{\bpolicy}}\abs{V^{\latentpolicy}\fun{\state} - \overbar{V}^{\latentpolicy}\fun{\embed\fun{\state}}}$ by
$\expectedsymbol{\state \sim \stationary{\bpolicy}}\abs{U^{\latentpolicy}\fun{\state} - \overbar{U}^{\latentpolicy}\fun{\embed\fun{\state}}}$.

Since the subsequent results all rely on Lemma~\ref{lem:avd} and Theorem~\ref{thm:value-bound}, they all extend to episodic value functions.\\[1mm]
\end{remark}

\spi*
\begin{proof}
First, note that
\begin{align}
    & \mdpreturn{\latentpolicy\circ\embed}{\mdp} - \mdpreturn{\bpolicy}{\mdp} \notag \\
    =& \mdpreturn{\latentpolicy\circ\embed}{\mdp} - \mdpreturn{\latentpolicy}{\latentmdp} + \mdpreturn{\latentpolicy}{\latentmdp} - \mdpreturn{\bpolicy}{\mdp}. \label{eq:diff-decomposition}
\end{align}
By \autoref{thm:value-bound}, we have with $\drift{\bpolicy}{\bpolicy} = 1$ that
    \[
    \left| \mdpreturn{\bpolicy}{\mdp} -  \mdpreturn{\latentpolicy_b}{\latentmdp} \right|
        \leq \expectmdp{\bpolicy}{\mdp}{\mathbf{T}} \cdot\frac{\nicefrac{\localrewardloss{\stationary{\bpolicy}}}{\discount} + K_{V} \cdot  \localtransitionloss{\stationary{\bpolicy}}}{1 - \discount},
    \]
    which implies that 
\begin{align}
    \mdpreturn{\bpolicy}{\mdp} -  \mdpreturn{\latentpolicy_b}{\latentmdp} &\leq  \expectmdp{\bpolicy}{\mdp}{\mathbf{T}} \cdot\frac{\nicefrac{\localrewardloss{\stationary{\bpolicy}}}{\discount} + K_{V} \cdot  \localtransitionloss{\stationary{\bpolicy}}}{{1 {-} \discount}}\notag\\
    \iff \quad \mdpreturn{\bpolicy}{\mdp} &\leq \mdpreturn{\latentpolicy_b}{\latentmdp} +  \expectmdp{\bpolicy}{\mdp}{\mathbf{T}} \cdot\frac{\nicefrac{\localrewardloss{\stationary{\bpolicy}}}{\discount} + K_{V} \cdot  \localtransitionloss{\stationary{\bpolicy}}}{{1 {-} \discount}}.\label{eq:spi-1}
\end{align}
On the other hand, we have
\begin{align*}
    \left| \mdpreturn{\latentpolicy \circ \embed}{\mdp} -  \mdpreturn{\latentpolicy}{\latentmdp} \right|
        \leq \expectmdp{\bpolicy}{\mdp}{\mathbf{T}} \cdot\frac{\nicefrac{\localrewardloss{\stationary{\bpolicy}}}{\discount} + K_{V} \cdot  \localtransitionloss{\stationary{\bpolicy}}}{{\nicefrac{1}{\drift{\bpolicy}{\latentpolicy}} {-} \discount}},
\end{align*}
which implies that
\begin{align}
    \mdpreturn{\latentpolicy \circ \embed}{\mdp} -  \mdpreturn{\latentpolicy}{\latentmdp}
        &\geq  - \expectmdp{\bpolicy}{\mdp}{\mathbf{T}} \cdot\frac{\nicefrac{\localrewardloss{\stationary{\bpolicy}}}{\discount} + K_{V} \cdot  \localtransitionloss{\stationary{\bpolicy}}}{{\nicefrac{1}{\drift{\bpolicy}{\latentpolicy}} {-} \discount}}. \label{eq:spi-2}
\end{align}
By plugging Equations~\ref{eq:spi-1}~and~\ref{eq:spi-2} into Equation~\ref{eq:diff-decomposition}, we get the desired result:
\begin{align*}
    &\mdpreturn{\latentpolicy \circ \embed}{\mdp} - \mdpreturn{\bpolicy}{\mdp}\\
    =& \underbrace{\color{blue}\mdpreturn{\latentpolicy \circ \embed}{\mdp} - \mdpreturn{\latentpolicy}{\latentmdp}}_{\substack{\color{blue}\geq \\ \color{blue} - \expectmdp{\bpolicy}{\mdp}{\mathbf{T}} \cdot\frac{\nicefrac{\localrewardloss{\stationary{\bpolicy}}}{\discount} + K_{V} \cdot  \localtransitionloss{\stationary{\bpolicy}}}{{\nicefrac{1}{\drift{\bpolicy}{\latentpolicy}} {-} \discount}}}}
    + \mdpreturn{\latentpolicy}{\latentmdp} - 
    \underbrace{\color{red}\mdpreturn{\bpolicy}{\mdp}}_{\substack{\color{red}\leq \\ \color{red}\mdpreturn{\latentpolicy_b}{\latentmdp} +  \expectmdp{\bpolicy}{\mdp}{\mathbf{T}} \cdot\frac{\nicefrac{\localrewardloss{\stationary{\bpolicy}}}{\discount} + K_{V} \cdot  \localtransitionloss{\stationary{\bpolicy}}}{{1 {-} \discount}}}}
    \\[2em]
    \geq&  - \expectmdp{\bpolicy}{\mdp}{\mathbf{T}} \cdot\frac{\nicefrac{\localrewardloss{\stationary{\bpolicy}}}{\discount} + K_{V} \cdot  \localtransitionloss{\stationary{\bpolicy}}}{{\nicefrac{1}{\drift{\bpolicy}{\latentpolicy}} {-} \discount}} + \mdpreturn{\latentpolicy}{\latentmdp} - \mdpreturn{\latentpolicy_b}{\latentmdp} - \expectmdp{\bpolicy}{\mdp}{\mathbf{T}} \cdot\frac{\nicefrac{\localrewardloss{\stationary{\bpolicy}}}{\discount} + K_{V} \cdot  \localtransitionloss{\stationary{\bpolicy}}}{{1 {-} \discount}}\\
    =& \mdpreturn{\latentpolicy}{\latentmdp} - \mdpreturn{\latentpolicy_b}{\latentmdp} - \expectmdp{\bpolicy}{\mdp}{\mathbf{T}} \fun{\nicefrac{\localrewardloss{\stationary{\bpolicy}}}{\discount} + K_V \localtransitionloss{\stationary{\bpolicy}}}  \fun{\frac{1}{\nicefrac{1}{\drift{\bpolicy}{\latentpolicy}} - \discount} + \frac{1}{1 - \discount}}.
\end{align*}
\end{proof}

In the following, we provide a probabilistic version of Theorem~\ref{thm:deep-spi}, as it is standard in the SPI literature.
Essentially, we derive probably approximately correct estimations from interaction data of $\localrewardloss{}, \localtransitionloss{}$.
Then, we use those estimations to get an approximation of $\zeta$, the error term of the safe policy improvement inequality of Theorem~\ref{thm:deep-spi}.

Those PAC guarantees rely on a discrete latent space.
While it may seem restrictive, learning discrete latent spaces turns out to be beneficial not only theoretically (e.g., it yields trivial Lipschitz bounds on the latent reward and transition functions), but also in practice (see, e.g., \citealp{DBLP:conf/iclr/HafnerL0B21}).

Finally, note that we provide two versions of the theorem: (1) one where we have access to an upper bound of the AEL (which is mild in practice), and (2) another one where this bound cannot be derived. The latter case yields an additional challenge as we need to estimate the AEL from sample states drawn according to the stationary distribution.
In this case, the bound yields a probabilistic algorithm that is guaranteed to almost surely terminate without any predefined endpoint, as it depends on the current approximation of the losses.\\

\begin{theorem}[Probabilistic Deep SPI with confidence bound]\label{thm:deep-spi-pac}
    Under the same preamble as in Theorem~\ref{thm:deep-spi},
    assume now $\latentstates$ is discrete.
    Let $\set{\tuple{\state_t, \action_t, \reward_t, \state'_{t}} \colon 1 \leq t \leq T}$ be a set of $T$ transitions drawn from $\stationary{\bpolicy}$ by simulating $\mdp_{\bpolicy}$, i.e., $\state_t \sim \stationary{\bpolicy}$, $\action_t \sim \bpolicy\fun{\sampledot \mid \state_t}, \reward_t = \rewards\fun{\state_t, \action_t}$, and $\state'_{t} \sim \probtransitions\fun{\sampledot \mid \state_t, \action_t}$ for all $1 \leq t \leq T$.
        Let $\error, \delta > 0$ and define
    \begin{align*}
    \localtransitionlossapprox{} \coloneqq 1-\frac{1}{T} \sum_{t = 1}^{T} \latentprobtransitions\fun{\embed\fun{\state'_{t}} \mid \embed\fun{\state_t}, \action_t}, 
    && \localrewardlossapprox{} \coloneqq \frac{1}{T} \sum_{t = 1}^{T} \abs{\reward_t - \latentrewards\fun{\embed\fun{\state}, \action}},
    && \stationaryapprox \coloneqq \frac{1}{T} \sum_{t = 0}^{T} \condition{\state_t = \sreset}, 
    \end{align*}
$\kappa := \frac{1}{\nicefrac{1}{\sir\fun{\pi_b,\,\bar\pi}} - \gamma} + \frac{1}{1-\gamma}
$, and $R^{*} \coloneqq \max\set{1, 4 \Rmax^2}$.
Then,
the policy can be safely improved as
\begin{equation}
\mdpreturn{\latentpolicy \circ \embed}{\mdp} - \mdpreturn{\bpolicy}{\mdp}
\geq \mdpreturn{\latentpolicy}{\latentmdp} -  \mdpreturn{\latentpolicy_b}{\latentmdp} - \hat{\zeta},
\end{equation}
with probability at least $1 - \delta$ under the following conditions:
\begin{enumerate}[(1)]
    \item one has access to an upper bound $L \geq \ael{\bpolicy}$, the number of collected transitions is lower-bounded by $T \geq L^2 \cdot \left\lceil \frac{-R^* \log\fun{\frac{\delta}{2} \cdot \kappa^2\fun{\nicefrac{1}{\discount} + K_V}^2}}{\error^2}\right\rceil$,
    and $\hat{\zeta} \coloneqq L \cdot \fun{\nicefrac{\hat{L}_R}{\gamma} + K_V \hat L_P} \kappa + \error$; or
    \item without access to such a bound, we take
    \[
    T \geq \left\lceil \frac{-R^* \log\fun{\nicefrac{\delta}{3}}}{2} \cdot \max\set{\nicefrac{1}{\stationaryapprox^2}, \fun{\frac{\nicefrac{\kappa}{\stationaryapprox} \fun{\nicefrac{\localrewardlossapprox{}}{\discount} + K_V \localtransitionlossapprox{}} + \error + \kappa \cdot \fun{\nicefrac{1}{\discount} + K_V}}{\error \stationaryapprox}}^2} \right\rceil,
    \]
    and $\hat{\zeta} \coloneqq \frac{1}{\stationaryapprox} \fun{\nicefrac{\localrewardlossapprox{}}{\discount} + K_V \localtransitionlossapprox{}} \kappa + \error$.
\end{enumerate}
\end{theorem}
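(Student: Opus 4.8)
The plan is to lift the deterministic guarantee of Theorem~\ref{thm:deep-spi} to a high-probability statement by replacing every unknown quantity appearing in $\zeta$ --- namely $\ael{\bpolicy}$, $\localrewardloss{\stationary{\bpolicy}}$, and $\localtransitionloss{\stationary{\bpolicy}}$ --- with an empirical estimator, and then showing that the resulting $\hat\zeta$ over-estimates $\zeta$ with probability at least $1-\delta$. Since Theorem~\ref{thm:deep-spi} already gives $\mdpreturn{\latentpolicy \circ \embed}{\mdp} - \mdpreturn{\bpolicy}{\mdp} \geq \mdpreturn{\latentpolicy}{\latentmdp} - \mdpreturn{\latentpolicy_b}{\latentmdp} - \zeta$, and replacing $\zeta$ by any larger value only weakens the lower bound while keeping it valid, it suffices to prove $\hat\zeta \geq \zeta$ on a good event of probability $\geq 1-\delta$. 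The three estimators are sample means that target the right objects: $\localrewardlossapprox{}$ is unbiased for $\localrewardloss{\stationary{\bpolicy}}$; $\stationaryapprox$ is unbiased for $\stationary{\bpolicy}\fun{\sreset}$, and from the proof of Theorem~\ref{thm:value-bound} we have $\ael{\bpolicy} = \nicefrac{1}{\stationary{\bpolicy}\fun{\sreset}}$; finally, specialising the crude Wasserstein bound of Appendix~\ref{appendix:crude-wasserstein-bound} to a discrete latent space with the discrete metric $\discretemetric$ gives $\localtransitionloss{\stationary{\bpolicy}} \leq \expectedsymbol{\state,\action,\state' \sim \stationary{\bpolicy}}\fun{1 - \latentprobtransitions\fun{\embed\fun{\state'} \mid \embed\fun{\state}, \action}} =: \tilde{L}_P$, and $\localtransitionlossapprox{}$ is exactly the sample mean of this upper bound. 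Writing $\kappa$ for the bracketed factor so that $\zeta = \ael{\bpolicy}\fun{\nicefrac{\localrewardloss{\stationary{\bpolicy}}}{\discount} + K_V \localtransitionloss{\stationary{\bpolicy}}}\kappa$, each case reduces to a concentration argument for these sample means.

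For case~(1), the known bound $L \geq \ael{\bpolicy}$ together with $\localtransitionloss{\stationary{\bpolicy}} \leq \tilde{L}_P$ gives $\zeta \leq L\kappa\fun{\nicefrac{\localrewardloss{\stationary{\bpolicy}}}{\discount} + K_V \tilde{L}_P}$. I would then introduce the per-transition variable $X_t = \nicefrac{\abs{\reward_t - \latentrewards\fun{\embed\fun{\state_t}, \action_t}}}{\discount} + K_V\fun{1 - \latentprobtransitions\fun{\embed\fun{\state'_t} \mid \embed\fun{\state_t}, \action_t}}$, whose sample mean is $\nicefrac{\localrewardlossapprox{}}{\discount} + K_V \localtransitionlossapprox{}$ and whose expectation is $\nicefrac{\localrewardloss{\stationary{\bpolicy}}}{\discount} + K_V \tilde{L}_P$. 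Since $\abs{\reward - \latentrewards} \leq 2\Rmax \leq \sqrt{R^{*}}$ and $1 - \latentprobtransitions \in [0,1]$ with $1 \leq \sqrt{R^{*}}$, each $X_t$ lies in an interval of width at most $\sqrt{R^{*}}\fun{\nicefrac{1}{\discount} + K_V}$, so a two-sided Hoeffding inequality controls the deviation of the sample mean from its expectation. Choosing the deviation threshold equal to $\nicefrac{\error}{\fun{L\kappa}}$ and solving the tail bound $\leq \delta$ yields the sample size of condition~(1) (each tail allotted $\nicefrac{\delta}{2}$, explaining the $\nicefrac{\delta}{2}$ in the formula); on the complementary event, $L\kappa\cdot\fun{\text{sample mean}} + \error \geq L\kappa\cdot\fun{\text{expectation}} \geq \zeta$, i.e.\ $\hat\zeta \geq \zeta$.

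Case~(2) is the delicate one, and I expect it to be the main obstacle: with no a~priori bound on the AEL, we must also estimate $\ael{\bpolicy} = \nicefrac{1}{\stationary{\bpolicy}\fun{\sreset}}$ by $\nicefrac{1}{\stationaryapprox}$, and the nonlinearity of $x \mapsto \nicefrac{1}{x}$ couples the AEL estimate multiplicatively with the loss estimates inside $\hat\zeta$. I would use a union bound over three concentration events (each allotted failure probability $\nicefrac{\delta}{3}$), one for $\localrewardlossapprox{}$, one for $\localtransitionlossapprox{}$, and one for $\stationaryapprox$, again via Hoeffding with range constant $R^{*}$. On the good event I would lower-bound $\stationary{\bpolicy}\fun{\sreset}$ by $\stationaryapprox$ minus its deviation, so that $\nicefrac{1}{\stationary{\bpolicy}\fun{\sreset}} \leq \nicefrac{1}{\fun{\stationaryapprox - \text{dev}}}$, and then propagate the perturbation through the product $\nicefrac{\kappa}{\stationaryapprox}\fun{\nicefrac{\localrewardlossapprox{}}{\discount} + K_V \localtransitionlossapprox{}}$. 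The first argument of the $\max$ in the sample complexity ($\nicefrac{1}{\stationaryapprox^2}$) ensures the reciprocal is well-controlled, while the second argument absorbs the propagated product error into the slack $\error$; because the required $T$ then depends on the observed estimates, the guarantee takes the form of an adaptive stopping rule that terminates almost surely. The main technical care is the reciprocal-perturbation step and verifying that the self-referential sample complexity is consistent, i.e.\ that all inequalities close once $T$ meets the stated data-dependent threshold.
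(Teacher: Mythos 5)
Your proposal is correct and follows essentially the same route as the paper's proof: show $\hat{\zeta} \ge \zeta$ on a high-probability event via Hoeffding-type concentration of the empirical losses (and of $\stationaryapprox$ in case (2), using $\ael{\bpolicy} = \nicefrac{1}{\stationary{\bpolicy}\fun{\sreset}}$ together with the $\error' < \stationaryapprox$ versus error-propagation split that produces the $\max$ in the sample complexity), then invoke Theorem~\ref{thm:deep-spi} and note that enlarging the error term preserves the inequality. The only cosmetic difference is that you bundle the reward and transition estimators into a single bounded variable and apply Hoeffding once, and you justify the transition estimator explicitly through the discrete-metric specialization of the crude Wasserstein bound, whereas the paper bounds each loss separately by citing prior work and then tunes the per-loss slack $\error'$.
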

\begin{proof}
    Let $\error, \delta > 0$.
    First, note that we need $T \geq \left\lceil \frac{-R^* \log\fun{\nicefrac{\delta}{2}}}{\error^2} \right\rceil$, to satisfy both
    \begin{enumerate*}[(a)]
        \item $\localrewardlossapprox{} + \error > \localrewardloss{\stationary{\bpolicy}}$ and \label{enum:prob-deepspi-1}
        \item $\localtransitionlossapprox{} + \error > \localtransitionloss{\stationary{\bpolicy}}$ \label{enum:prob-deepspi-2}
    \end{enumerate*}
    with probability $1 - \delta$ and $T \geq \left\lceil \frac{-R^* \log\fun{\nicefrac{\delta}{3}}}{\error^2} \right\rceil$ to satisfy simultaneously \ref{enum:prob-deepspi-1}, \ref{enum:prob-deepspi-2}, and (c) $\stationaryapprox - \error < \stationary{\bpolicy}\fun{\sreset}$ with probability $1 - \delta$.
    This statement is proven by \citet{delgrange2022aaai} and \citet{delgrange2025composing}.
    The result is essentially due to a raw application of \citeauthor{10.2307/2282952:hoeffding}'s inequality and the fact that Wasserstein boils down to total variation when the state space is discrete \citep{Villani2009}.

    Let $\error' > 0$.

\textbf{Case 1.} Assume we have an upper bound on $\ael{\pi_b}$, say $L$. Then it follows that
\begin{align*}
    \zeta 
    &\leq L \cdot \Big( \tfrac{\localrewardloss{\stationary{\bpolicy}}}{\gamma} + K_V \localtransitionloss{\stationary{\bpolicy}} \Big) \cdot \kappa \tag{$\zeta$ is the safe policy improvement error term of Theorem~\ref{thm:deep-spi}}\\
    &\leq L \cdot \Big( \tfrac{\hat L_R + \varepsilon'}{\gamma} + K_V(\hat L_P+\varepsilon') \Big) \cdot \kappa,
\end{align*}
with probability at least $1-\delta$ whenever
\[
T \geq \frac{-R^* \log(\delta/2)}{\varepsilon'^2}.
\]
To ensure an error of at most $\varepsilon$, choose $\varepsilon'$ such that
\begin{align*}
    &L \cdot \Big( \tfrac{\hat L_R + \varepsilon'}{\gamma} + K_V(\hat L_P+\varepsilon') \Big) \kappa 
    \leq L \cdot \Big( \tfrac{\hat L_R}{\gamma} + K_V \hat L_P \Big) \kappa + \varepsilon.
\end{align*}
Equivalently,
\begin{align*}
    &L\kappa\Big( \tfrac{\varepsilon'}{\gamma} + K_V \varepsilon' \Big) \leq \varepsilon \\
    &\iff \varepsilon' \leq \frac{\varepsilon}{L\kappa\,(1/\gamma + K_V)}.
\end{align*}
Thus, it suffices that
\[
T \geq \frac{-R^* \log(\delta/2)}{\varepsilon'^2} 
\geq \frac{-R^* \log(\delta/2)}{\varepsilon^2} \, \fun{L\kappa\,(1/\gamma + K_V)}^2
\]
to satisfy $\zeta \leq \hat{\zeta}$ with probability at least $1 - \delta$.

\textbf{Case 2.} Suppose we do not have an upper bound on $\ael{\pi_b}$. From the proof of Theorem~2, we know that $\ael{\pi_b}=1/\xi_{\pi_b}({\sreset})$. In this case we include an estimate $\stationaryapprox$ in the bound and use the high-probability deviations
\[
\hat L_R + \error' > \localrewardloss{\stationary{\bpolicy}},\quad\hat L_P + \error' > \localtransitionlossapprox{\stationary{\bpolicy}}, \,\quad  \stationaryapprox - \varepsilon' < \xi_{\pi_b}({\sreset}).
\]

We have
\begin{align}
    \zeta 
    &= \frac{1}{\xi_{\pi_b}({\sreset})}\Big( \tfrac{L_R}{\gamma} + K_V L_P \Big)\kappa \\
    &\le \frac{1}{\stationaryapprox-\varepsilon'}\Big( \tfrac{\hat L_R+\varepsilon'}{\gamma} + K_V(\hat L_P+\varepsilon') \Big)\kappa, \label{eq:case2-start}
\end{align}
with probability at least $1-\delta$ whenever
\[
T \ge \frac{R^{*}\log(\delta/3)}{2\,\varepsilon'^2}.
\]
To guarantee an error at most $\varepsilon$, we require
\begin{align}
\frac{1}{\stationaryapprox} \Big( \tfrac{\hat L_R}{\gamma} + K_V \hat L_P \Big)\kappa + \varepsilon 
\;\ge\; \frac{1}{\stationaryapprox - \varepsilon'} \Big( \tfrac{\hat L_R+\varepsilon'}{\gamma} + K_V(\hat L_P+\varepsilon') \Big)\kappa.
\label{eq:target-ineq}
\end{align}
{Assuming $\error' < \stationaryapprox$}, we multiply both sides of \eqref{eq:target-ineq} by $(\stationaryapprox-\varepsilon')$ and expand:
\begin{align*}
&\Big(\tfrac{\hat L_R}{\gamma}+K_V\hat L_P\Big)\kappa\Big(1-\tfrac{\varepsilon'}{\stationaryapprox}\Big)
+ \varepsilon\,\stationaryapprox-\varepsilon\,\varepsilon' \\
&\hspace{3.5cm}\ge \Big(\tfrac{\hat L_R}{\gamma}+K_V\hat L_P\Big)\kappa + \Big(\tfrac{1}{\gamma}+K_V\Big)\kappa\,\varepsilon'.
\end{align*}
Cancel the common term $\big(\tfrac{\hat L_R}{\gamma}+K_V\hat L_P\big)\kappa$ and group the $\varepsilon'$ terms:
\begin{align*}
\varepsilon\,\stationaryapprox 
\;\ge\; \varepsilon' \Big[ \frac{\kappa}{\stationaryapprox}\Big(\tfrac{\hat L_R}{\gamma}+K_V\hat L_P\Big) 
+ \varepsilon 
+ \Big(\tfrac{1}{\gamma}+K_V\Big)\kappa \Big].
\end{align*}
Therefore a sufficient condition is the explicit upper bound
\begin{equation}
\varepsilon' \;<\; \min\set{ \stationaryapprox, \, \frac{\varepsilon\,\stationaryapprox}{\dfrac{\kappa}{\stationaryapprox}\Big(\tfrac{\hat L_R}{\gamma}+K_V\hat L_P\Big) + \varepsilon + \Big(\tfrac{1}{\gamma}+K_V\Big)\kappa }}.
\label{eq:epsprime-choice}
\end{equation}
Together with the concentration requirement on $T$, the choice \eqref{eq:epsprime-choice} ensures an error on $\zeta$ of at most $\varepsilon$ with probability at least $1-\delta$.

Finally, the safe policy improvement bound follows from the fact that $\hat{\zeta}$ is greater than $\zeta$ with probability $1 - \delta$. Then, due to the SPI bound of Theorem~\ref{thm:deep-spi},  the improvement is guaranteed to be even larger when using ${\zeta}$ instead of $\hat{\zeta}$ as error term. This guarantees the improvement when $\hat{\zeta}$ is small enough.
\end{proof}

\begin{remark}[Episodic assumption]\label{rmk:episodic-assumption}
    For the sake of presentation, we have considered and proved the bounds for episodic processes (cf.~Appendix~\ref{appendix:episodic-values}). 
    One could extend them to more general cases under the assumption that one has access to a stationary distribution $\stationary{\bpolicy}$ of $\mdp$.
    As mentioned in Section~\ref{sec:background}, the existence of a stationary distribution is often assumed in continual RL \citep{DBLP:books/lib/SuttonB2018} and guaranteed unique in the episodic case \citep{DBLP:conf/nips/Huang20}.
    Then, replacing the difference of returns in Theorem~\ref{thm:deep-spi} by an expectation (similar to Theorem~\ref{thm:value-bound} with Lemma~\ref{lem:avd}) would allow to remove the AEL term and obtain similar results.
\end{remark}
$ $\\

\representationquality*
\begin{proof}
    First, let us consider bounding the following absolute value difference for every possible state $\state \in \states$, i.e.,
    $
    \abs{V^{\latentpolicy}\fun{\state} - \latentvalues{\latentpolicy}{}{\embed\fun{\state}}}
    $.
    To that aim, we consider Markov's inequality:%
    \footnote{also referred to as Chebyshev's inequality \citep{e78f3c4b-9860-36ae-aa53-b84c48774c5e}.}
    \begin{align*}
    &\;\stationary{\bpolicy}\fun{\set{
    \state \in \states \colon \abs{V^{\latentpolicy}\fun{\state} - \latentvalues{\latentpolicy}{}{\embed\fun{\state}}} > \nicefrac{\error}{2}}
    }
    \\
    \leq&\; \stationary{\bpolicy}\fun{\set{
    \state \in \states \colon \abs{V^{\latentpolicy}\fun{\state} - \latentvalues{\latentpolicy}{}{\embed\fun{\state}}} \geq \nicefrac{\error}{2}}
    }
    \\
    \leq&\; 2\cdot \frac{\expectedsymbol{\state \sim \stationary{\bpolicy}}\abs{V^{\latentpolicy}\fun{\state} - \latentvalues{\latentpolicy}{}{\embed\fun{\state}}}}{ \error} \tag{Markov's inequality}\\
    \leq&\; 
    2 \cdot \frac{\localrewardloss{\stationary{\bpolicy}} + \discount K_V \cdot \localtransitionloss{\stationary{\bpolicy}}}{\error \cdot\fun{\nicefrac{1}{\drift{\bpolicy}{\latentpolicy}} - \discount}}. \tag{by Lemma~\ref{lem:avd}}\\
    \end{align*}
    Consider \emph{any} joint distribution $\lambda \in \Lambda\fun{\stationary{\bpolicy}, \stationary{\bpolicy}}$, i.e., any joint distribution over $\states \times \states$ whose marginals both match $\stationary{\bpolicy}$.
    Then, by the union bound, we have
    \begin{align*}
        &\;\lambda\fun{\set{\tuple{\state_1, \state_2} \in \states \times \states \colon \abs{V^{\latentpolicy}\fun{\state_1} - \latentvalues{\latentpolicy}{}{\embed\fun{\state_1}}} > \nicefrac{\error}{2} \text{ or } \abs{V^{\latentpolicy}\fun{\state_2} - \latentvalues{\latentpolicy}{}{\embed\fun{\state_2}}} > \nicefrac{\error}{2}}} \\
        \leq &\;\lambda\fun{\set{\tuple{\state_1, \state_2} \in \states \times \states \colon \abs{V^{\latentpolicy}\fun{\state_1} - \latentvalues{\latentpolicy}{}{\embed\fun{\state_1}}} {\color{red}\geq} \nicefrac{\error}{2} \text{ or } \abs{V^{\latentpolicy}\fun{\state_2} - \latentvalues{\latentpolicy}{}{\embed\fun{\state_2}}} {\color{red}\geq} \nicefrac{\error}{2}}} \\
        \leq & \; \lambda\fun{\set{
            \tuple{\state_1, \state_2} \in \states \times \states \colon \abs{V^{\latentpolicy}\fun{\state_1} - \latentvalues{\latentpolicy}{}{\embed\fun{\state_1}}} \geq \nicefrac{\error}{2}}
            } + \lambda\fun{\set{
                \tuple{\state_1, \state_2} \in \states \times \states \colon \abs{V^{\latentpolicy}\fun{\state_2} - \latentvalues{\latentpolicy}{}{\embed\fun{\state_2}}} \geq \nicefrac{\error}{2}}
            } \tag{union bound}\\
        = & \; \stationary{\bpolicy}\fun{\set{
            \state_1 \in \states \colon \abs{V^{\latentpolicy}\fun{\state_1} - \latentvalues{\latentpolicy}{}{\embed\fun{\state_1}}} \geq \nicefrac{\error}{2}}
            } + \stationary{\bpolicy}\fun{\set{
                \state_2 \in \states \colon \abs{V^{\latentpolicy}\fun{\state_2} - \latentvalues{\latentpolicy}{}{\embed\fun{\state_2}}} \geq \nicefrac{\error}{2}}
            } \tag{$\lambda$ has $\stationary{\bpolicy}$ as marginal distributions} \\
        \leq & \; 
        4 \cdot \frac{\localrewardloss{\stationary{\bpolicy}} + \discount K_V \cdot \localtransitionloss{\stationary{\bpolicy}}}{\error \cdot\fun{\nicefrac{1}{\drift{\bpolicy}{\latentpolicy}} - \discount}}.
    \end{align*}
    Therefore, since this holds for any such $\lambda$, we have with at least probability $1 - \delta$ that for all $\state_1, \state_2 \in \states$, $\abs{V^{\latentpolicy}\fun{\state_1} - \latentvalues{\latentpolicy}{}{\embed\fun{\state_1}}} \leq \nicefrac{\error}{2}$ and $\abs{V^{\latentpolicy}\fun{\state_2} - \latentvalues{\latentpolicy}{}{\embed\fun{\state_2}}} \leq \nicefrac{\error}{2}$.
    In consequence, with same probability, we have
    \begin{align*}
        &\abs{V^{\latentpolicy}\fun{\state_1} - V^{\latentpolicy}\fun{\state_2}} \\
        =& \abs{V^{\latentpolicy}\fun{\state_1} - \latentvalues{\latentpolicy}{}{\embed\fun{\state_1}} + \latentvalues{\latentpolicy}{}{\embed\fun{\state_1}} - \latentvalues{\latentpolicy}{}{\embed\fun{\state_2}} + \latentvalues{\latentpolicy}{}{\embed\fun{\state_2}} - V^{\latentpolicy}\fun{\state_2}}\\
        \leq & \abs{V^{\latentpolicy}\fun{\state_1} - \latentvalues{\latentpolicy}{}{\embed\fun{\state_1}}} + \abs{\latentvalues{\latentpolicy}{}{\embed\fun{\state_1}} - \latentvalues{\latentpolicy}{}{\embed\fun{\state_2}}} + \abs{V^{\latentpolicy}\fun{\state_2} - \latentvalues{\latentpolicy}{}{\embed\fun{\state_2}} } \tag{triangle inequality}\\
        \leq & \abs{\latentvalues{\latentpolicy}{}{\embed\fun{\state_1}} - \latentvalues{\latentpolicy}{}{\embed\fun{\state_2}}} + \error\\
        \leq & K_V \cdot \latentdistance\fun{\embed\fun{\state_1}, \embed\fun{\state_2}} + \error. \tag{by Lemma~\ref{lemma:lipschitz-value}}
    \end{align*}
\end{proof}

\section{Dream SPI}\label{appendix:dream-spi}
\begin{algorithm}[H]\label{alg:dream-spi}
\SetKwComment{Comment}{$\triangleright$\ }{}
\SetCommentSty{textit}
\caption{\texttt{DreamSPI}}\label{alg:dreamspi}
\KwIn{(others) world model and encoder parameters $\vartheta$, actor/critic parameters $\iota$, imagination horizon $H$}
  \DontPrintSemicolon
Init.~$\vect{\state} \in \states^{\fun{T+1} \times B}, \vect{\action} \in \actions^{T \times B}, \vect{\reward} \in \R^{T \times B}$, $\vect{\latentstate} \in {\latentstates}^{\fun{H + 1} \times {B T}}, \vect{\latentaction} \in \actions^{H \times {B T}}, \vect{\latentreward} \in \R^{H \times {B T}}$\;
\Repeat{convergence}{
\For{$t \leftarrow 1$ to $T$}{
    $\vect{\action}_{t} \sim \latentpolicy\fun{\sampledot \mid \embed\fun{\vect{\state}_{t}}}$\;
    $\vect{\reward}_t, \vect{\state}_{t + 1} \leftarrow \texttt{env.step}\fun{\vect{\state}_t, \vect{\action}_t}$\;
}
Update $\vartheta$ by descending $ \nabla_{\vartheta}\, {\texttt{DeepSPI\_loss}}(\vect{\state}, \vect{\action}, \vect{\reward}, U^{\latentpolicy \circ \embed}, \vartheta)$\\
\Comment*{Only $\embed$, $\latentprobtransitions$, and $\latentrewards$ are updated here}
$\texttt{world\_model} \leftarrow \tuple{\latentstates, \actions, \latentprobtransitions, \latentrewards}$\;
Set latent start states:
$\vect{\latentstate}_{1} \leftarrow \set{ \embed\fun{\vect{\state}_{t, i}} \colon 1 \leq t \leq T, \, 1 \leq i \leq B }$\;
Perform latent imagination: \;
\For{$t \leftarrow 1$ to $H$}{
    $\vect{\latentaction}_{t} \sim \latentpolicy\fun{\sampledot \mid \vect{\latentstate}_{t}}$\;
    $\vect{\latentreward}_t, \vect{\latentstate}_{t + 1} \leftarrow \texttt{world\_model.step}\fun{\vect{\latentstate}_t, \vect{\latentaction}_t}$\;
}
Update $\iota$ by descending $ \nabla_{\iota}\, {\texttt{ppo\_loss}}(\vect{\latentstate}, \vect{\latentaction}, \vect{\latentreward}, A^{\latentpolicy}, \iota)$\\
\Comment*{Perform a standard PPO update of the actor/critic w.r.t.~the imagined trajectories}
$\vect{\state}_1 \leftarrow \vect{\state}_{T+1}$\;
}
\Return{$\theta$}\;
\end{algorithm}
We report in Algorithm~\ref{alg:dream-spi} the algorithm we used in our experiments to evaluate the quality of the world model's predictions.
Note that the algorithm is on-policy; we leverage parallelized environments to make sure data coming from the interaction covers sufficiently the state space \citep{mayor2025the}.
Empirically, we found most beneficial to use discrete latent spaces, and model the transition function with categorical distributions ($32$ classes of $32$ categories, as in \texttt{Dreamer}; \citealp{DBLP:conf/iclr/HafnerL0B21}).
This observation agrees with the observation made by \citet{DBLP:conf/iclr/HafnerL0B21} on the benefits of categorical latent spaces in world models.

\section{Additional Details on the Illustrative Example}\label{appendix:toy-example}
\begin{figure}[h]
    \centering
    \includegraphics[width=.75\linewidth]{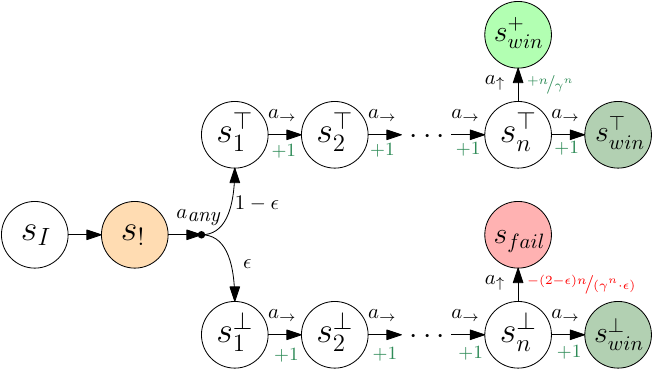}
    \caption{Underlying MDP of the grid world of Fig.~\ref{fig:toy-maze}. Actions leading to self-loops are omitted for clarity.}
    \label{fig:toy-maze-mdp}
\end{figure}
\revision{%
In this section, we expand on the illustrative example introduced in Sect.~\ref{sec:illustrative-example}.
The underlying MDP for the grid world is shown in Fig.\ref{fig:toy-maze-mdp}.
Formally, the MDP has four actions $a_{\text{dir}}$ with $\text{dir} \in \set{\uparrow, \downarrow, \rightarrow, \leftarrow}$, and $2n + 6$ states:
\begin{itemize}
\item the initial state $\state_I$, which transitions to $\state_{!}$ whenever $\action_{\rightarrow}$ is played;
\item the hazardous state $\state_!$, sending the agent to $\state_1^{\top}$ with probability $1 - \epsilon$ and to $\state_1^{\bot}$ with probability $\epsilon$, independently of the action played;
\item the $2n$ corridor states $\state_i^{\top}$ and $\state_i^{\bot}$ for $i \in \set{1, \dots, n}$, forming the top and bottom branches; and
\item the terminal states $\state_{win}^{\top}$, $\state_{win}^{\bot}$, $\state_{win}^{+}$, and $\state_{fail}$.
\end{itemize}

We focus on the value of the initial state $V^{\policy}(\sinit)$ for policies $\policy \in \Pi$.
We highlight three policies of particular interest:
\begin{enumerate}[(a)]

\item \textbf{The good} policy $\policy_{\text{good}}$: the policy moves right everywhere except in $\state_n^{\top}$, where it chooses $\action_{\uparrow}$:
\begin{align*}
V^{\policy_{\text{good}}}(\sinit)
&= \discount \Big[
    (1 - \epsilon)\Big( \sum_{t = 1}^{\,n - 1} \discount^t + \discount^n \cdot \frac{n}{\gamma^{n}} \Big)
    + \epsilon \Big( \sum_{t = 1}^{\,n} \discount^t \Big)
\Big] \\
&= \gamma\left( \sum_{t=1}^{n-1}\gamma^t + (1-\epsilon)n + \epsilon\gamma^n \right)\\
&= \gamma\left( \frac{\gamma - \gamma^n}{1-\gamma} + (1-\epsilon)n + \epsilon\gamma^n \right).
\end{align*}
Learning $\policy_{\text{good}}$ requires that the representation distinguishes the two branches and assigns distinct latent states to $\state_n^{\top}$ and $\state_n^{\bot}$.
With $n=5$, this corresponds to a return of $\approx 8.01$, which is the value reported in Fig.~\ref{fig:deepspi-vs-ppo-toy} for \texttt{DeepSPI}.
This highlight the representation learning capabilities of our algorithm.
\item \textbf{The bad} policy $\policy_{\text{bad}}$: the policy moves right everywhere except in $\state_n^{\top}$ and $\state_n^{\bot}$, where it chooses $\action_{\uparrow}$:
\begin{align*}
V^{\policy_{\text{bad}}}(\sinit)
&= \discount \Big[
    (1 - \epsilon)\Big( \sum_{t = 1}^{\,n - 1} \discount^t + \discount^n \cdot \frac{n}{\gamma^{n}} \Big)
    + \epsilon \Big( \sum_{t = 1}^{\,n - 1} \discount^t
    + \discount^n \cdot \frac{-(2 - \epsilon)n}{\gamma^{n} \epsilon} \Big)
\Big] \\
&= \discount\!\left( \discount \sum_{t = 0}^{\,n - 2} \discount^t - n \right) \\
&= \discount \left( \frac{\discount - \discount^{n}}{1 - \discount} - n \right) \\
&< 0.
\end{align*}
Such a policy may arise due to the policy confounding update described in Sect.~\ref{sec:confounding-pu}, where the representation incorrectly merges $\state_n^{\top}$ and $\state_n^{\bot}$.
With $n=5$, this corresponds to a return of $\approx -1.09$.
\item ... \textbf{and the ugly} "\emph{always right}" policy $\policy_{\rightarrow}$: this policy deterministically selects $\action_{\rightarrow}$ in every state. Its value is
\[
V^{\policy_{\rightarrow}}(\sinit)
= \discount \sum_{t = 1}^{n} \discount^t
= \discount^2 \sum_{t = 0}^{n - 1} \discount^t
= \frac{\discount^{2} - \discount^{n+2}}{1 - \discount}.
\]
With $n=5$, this corresponds to a return of $\approx 4.8$.
This coincides with the values reported in Fig.~\ref{fig:deepspi-vs-ppo-toy} for PPO, indicating that PPO alone fails to address the confounding policy update in this example.
\end{enumerate}

\begin{figure}[htbp]
  \centering
  \setlength{\tabcolsep}{1pt} 
  \begin{tabular}{cccc}
    \includegraphics[width=0.235\textwidth]{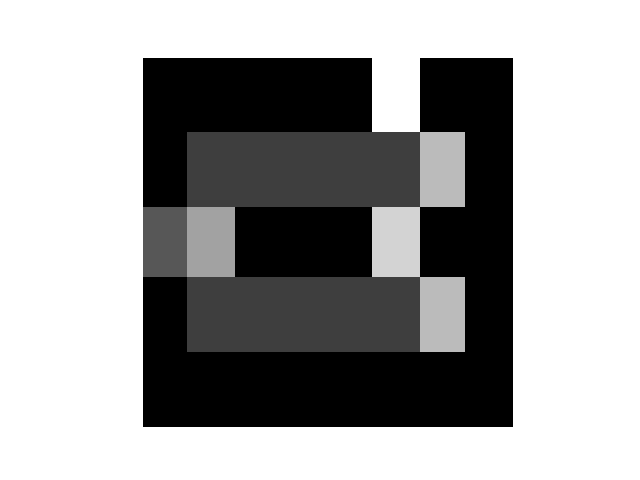}  &
    \includegraphics[width=0.235\textwidth]{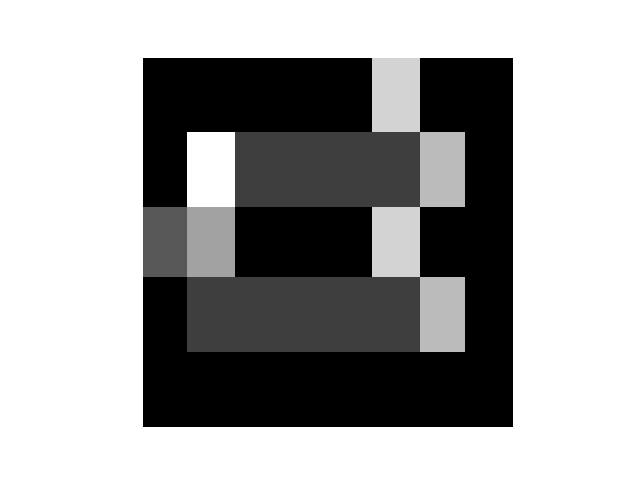}  &
    \includegraphics[width=0.235\textwidth]{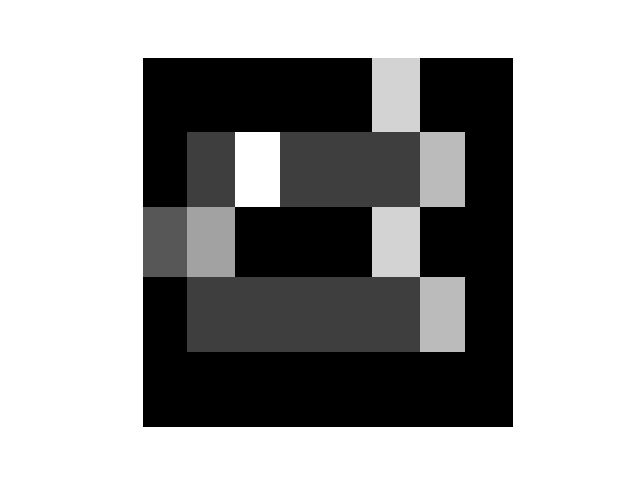}  &
    \includegraphics[width=0.235\textwidth]{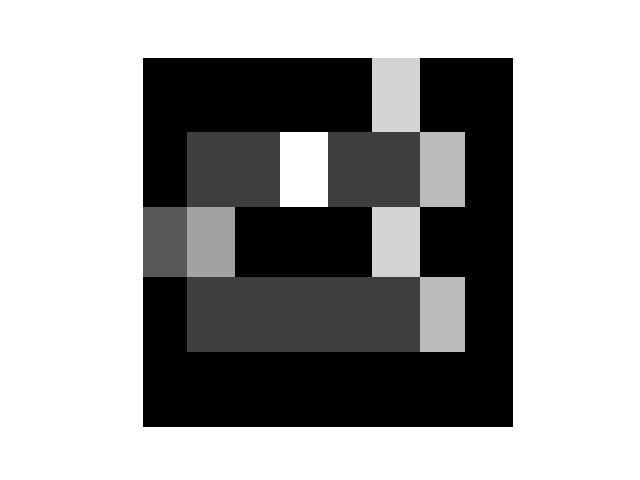}  \\
    \includegraphics[width=0.235\textwidth]{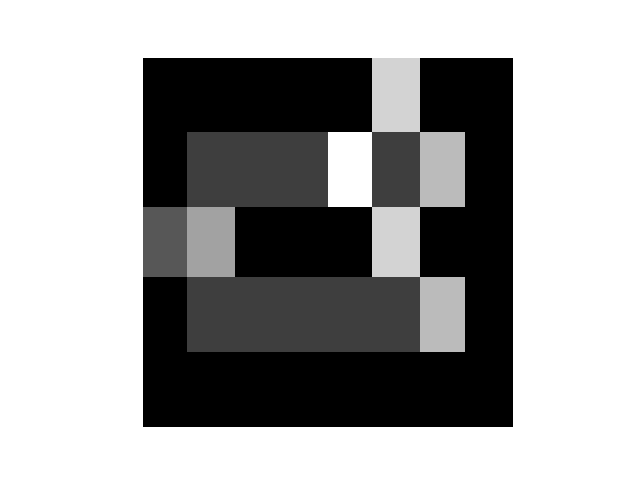}  &
    \includegraphics[width=0.235\textwidth]{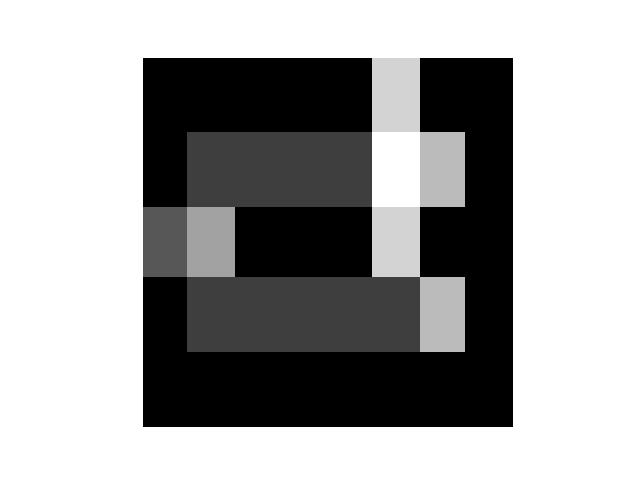}  &
    \includegraphics[width=0.235\textwidth]{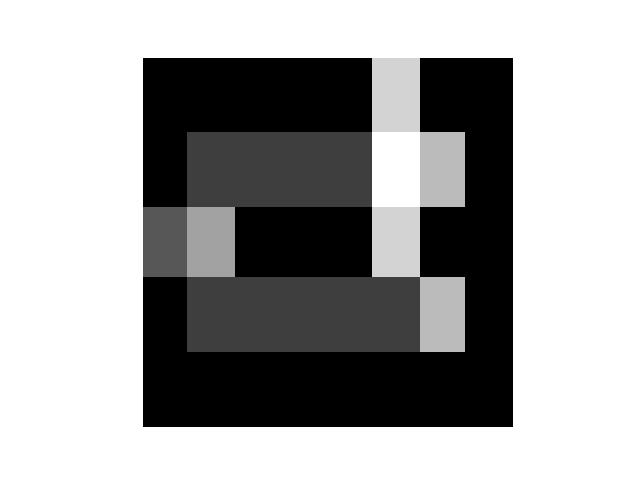}  &
    \includegraphics[width=0.235\textwidth]{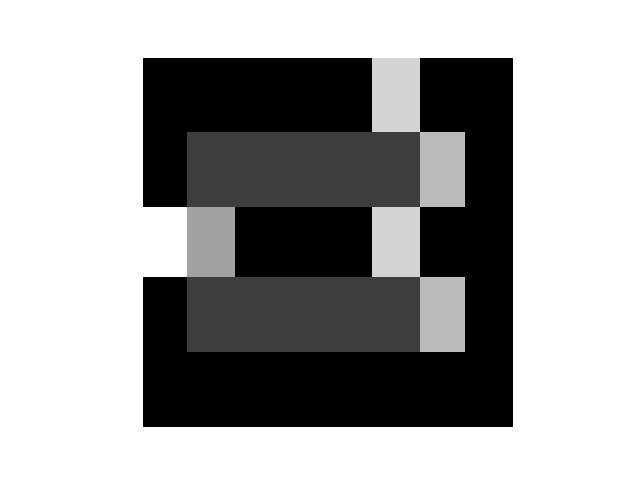}  \\
    \includegraphics[width=0.235\textwidth]{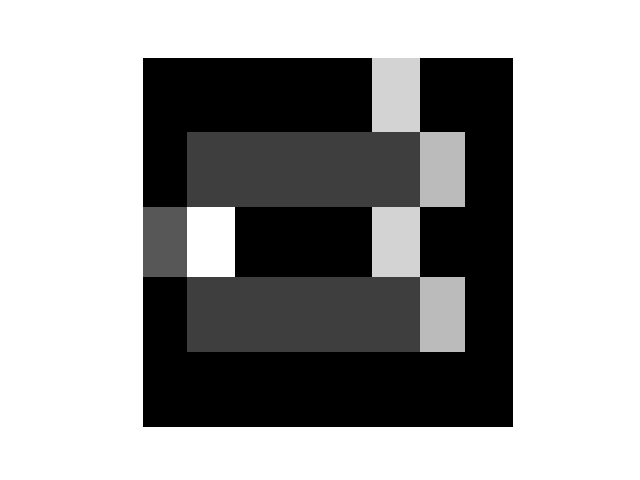}  &
    \includegraphics[width=0.235\textwidth]{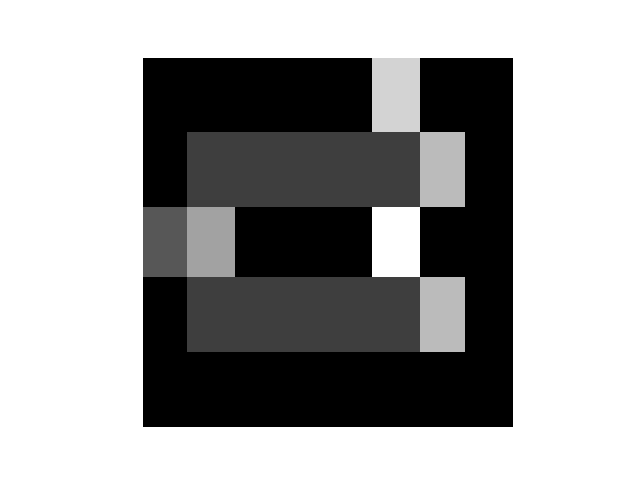}  &
    \includegraphics[width=0.235\textwidth]{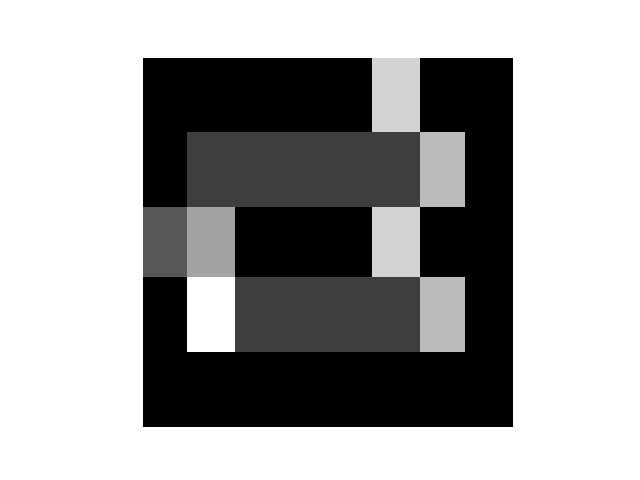} &
    \includegraphics[width=0.235\textwidth]{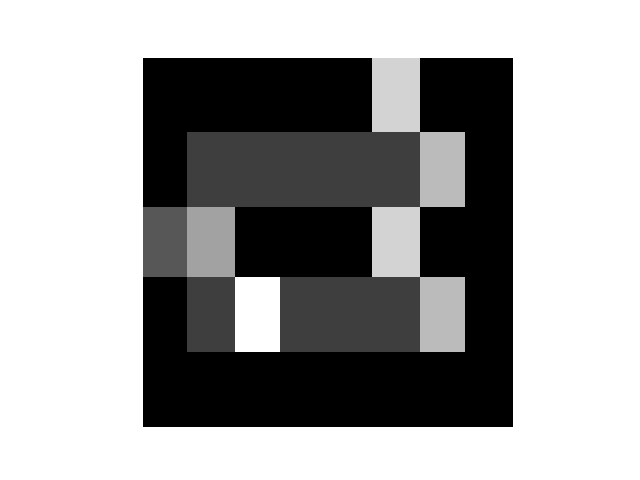} \\
    \includegraphics[width=0.235\textwidth]{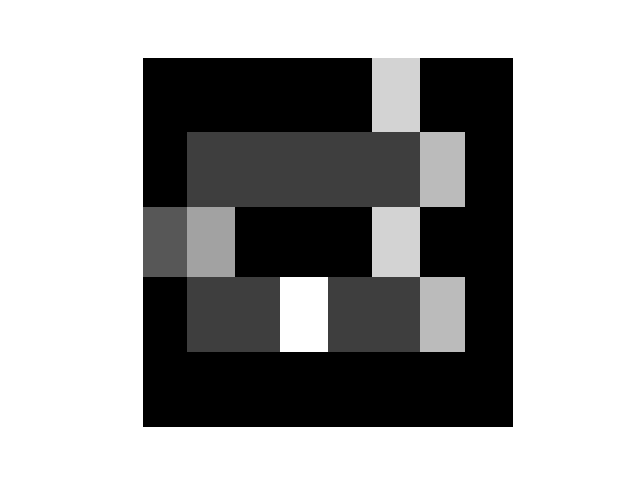} &
    \includegraphics[width=0.235\textwidth]{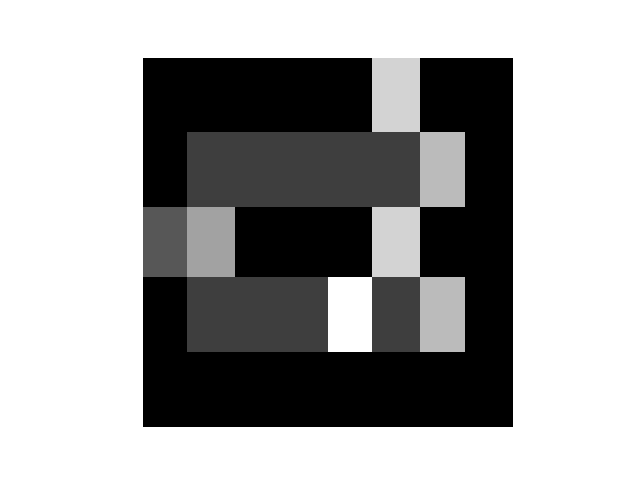} &
    \includegraphics[width=0.235\textwidth]{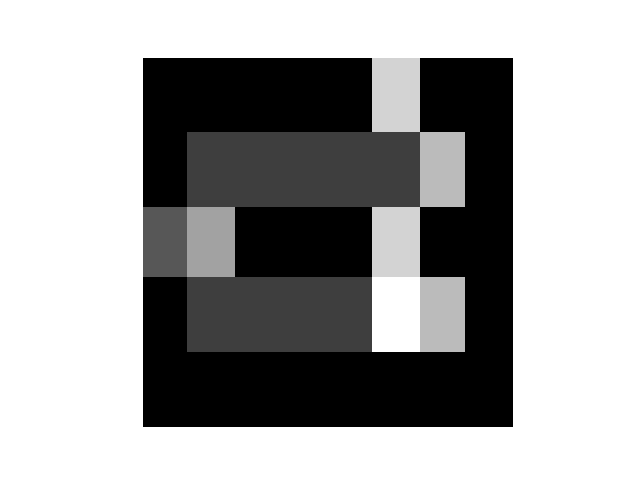} &
    \includegraphics[width=0.235\textwidth]{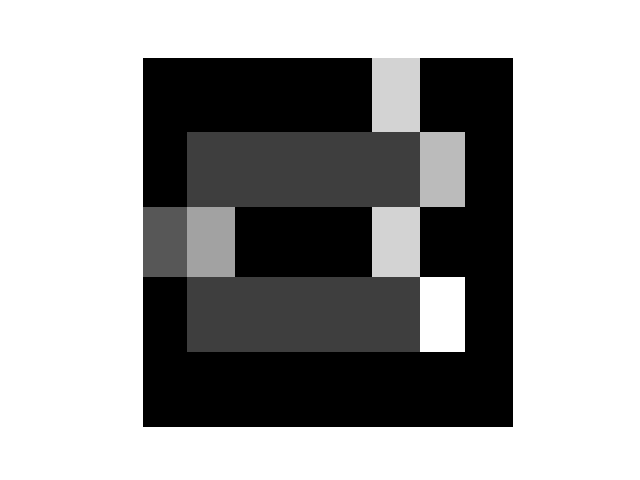} \\
  \end{tabular}
  \caption{Observations of the grid perceived by the agent. The white-most cell corresponds to the agent's location in the grid. Each observation has size $84 \times 84$.}
  \label{fig:toy-env-grid}
\end{figure}
As mentioned in the main text, we want to highlight the representation learning capabilities of \texttt{DreamSPI}.
For this reason, we provide a view of the grid in raw pixels to the agent (cf.~Fig.~\ref{fig:toy-env-grid}).
In our experiments, we choose $\epsilon=0.2$.
To evaluate PPO and \texttt{DeepSPI} in this environment, we use the default parameters from \texttt{cleanRL} \citep{huang2022cleanrl}, both for PPO and \texttt{DeepSPI}. However, we enforce for both algorithms a compact, small discrete representation with a limited capacity of $256$ latent states (precisely, we use $4$ categories of $4$ classes, with the same latent representation as the one used by \citealp{DBLP:conf/iclr/HafnerL0B21}).
For \texttt{DeepSPI}, we restrict the ratio to $\nicefrac{1}{\gamma} - 1$, which leads to a neighborhood constant $C < \nicefrac{1}{\gamma}$, as the theory suggests. Fig.~\ref{fig:deepspi-vs-ppo-toy} reports the median of $10$ independent runs/seeds per algorithm, as well as the interquartile range ($25$-$75\%$).
Note that the values $V^{\policy}\fun{\sinit}$ reported in Fig.~\ref{fig:deepspi-vs-ppo-toy} are computed analytically.
}

\section{Experiments: Evaluation on the  Atari Learning Environments}
\subsection{Setting}
Each presented experiment on the environments from ALE has been \revision{conducted across 8 seeds for each algorithm}.
\revision{%
Each run requires (mean $\pm$ std) $16.75 \pm 1.7$ min for PPO, $60.24 \pm 20.16$ min for \texttt{DeepMDP}, $62.49 \pm 20.71$ min for \texttt{DeepSPI}, and $80.8 \pm 1.35$ min for \texttt{DreamSPI} on an NVIDIA A40.
This corresponds to a $\approx 3.6{\times}$ overhead when using \texttt{DeepSPI} instead of PPO, which we consider a modest cost given the guarantees we obtain.
Because our method is on-policy and fully parallelizable, the wall-clock time remains well below that of off-policy approaches that do not exploit vectorized environments. For comparison, SAC requires roughly 40 hours for the same number of collected frames on an NVIDIA A100 \citep{huang2022cleanrl}, and \texttt{Dreamer-v2} needs about two days on NVIDIA V100 \citep{DBLP:conf/iclr/HafnerL0B21}.
}

\subsection{Additional Plots}\label{appendix:full-plots}

\begin{figure}
    \centering
    \includegraphics[width=\linewidth]{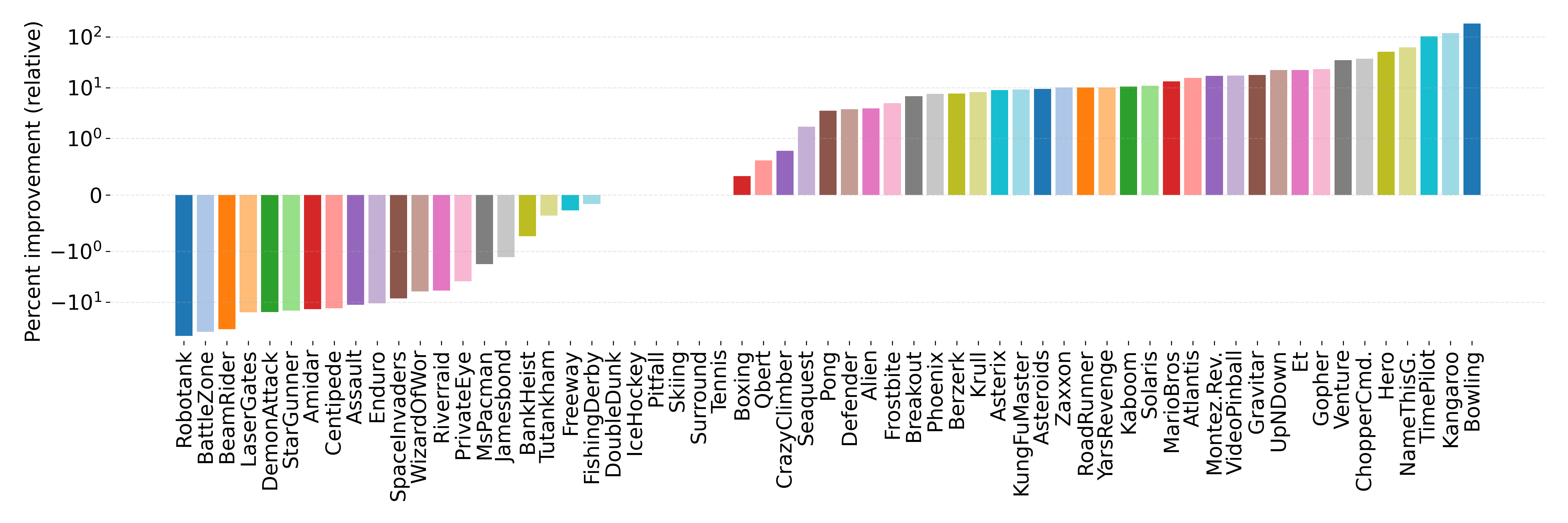}
    \caption{Relative improvement of \texttt{DeepSPI} compared to PPO over the full stochastic ALE suite (41/61).}
    \label{fig:deepspi-relative-improvement}
\end{figure}
In this section, we present additional figures to highlight statistics and the performance of our algorithm, \texttt{DeepSPI}.
Fig.~\ref{fig:deepspi-relative-improvement} presents the relative improvement of \texttt{DeepSPI} w.r.t.~PPO (Fig.~\ref{fig:deepspi-relative-improvement}).
We formally compute the \emph{relative improvement} as \[\frac{\textit{score} - \textit{score}_{\textit{baseline}}}{\abs{\textit{score}_{\textit{baseline}}}}\] and we use the maximum median human normalized score as the metric to compare in each environment.
See next page for a comparison of each of the algorithms (average episodic return) per environment, across training steps. We report the median and interquartile range ($25$--$75\%$) for each environment.
Recall that one training step corresponds to gathering four Atari frames in the environment.
\begin{figure}
    \vspace{-4em}
    \centering
    \includegraphics[width=\linewidth]{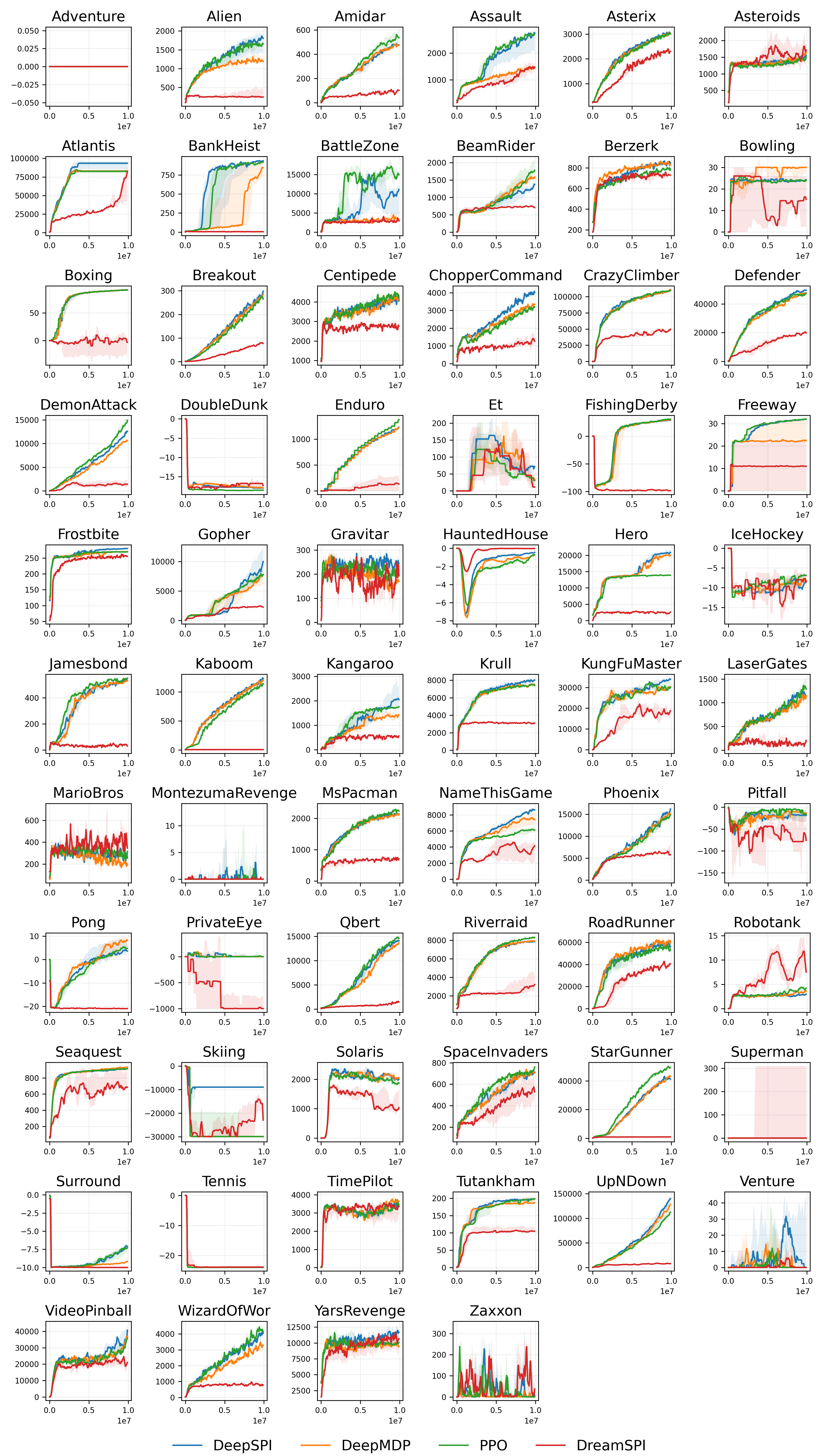}
\end{figure}

\revision{%
\paragraph{Transition and reward losses.} In the main text, we stated that the transition loss achieved by \texttt{DeepSPI} is, in general, lower than for \texttt{DeepMDPs}.
\begin{figure}[tbh!]
\begin{subfigure}{\linewidth}
    \includegraphics[width=\linewidth]{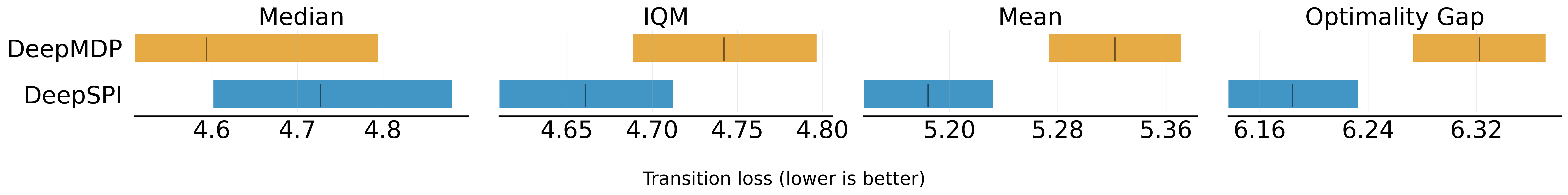}
\end{subfigure}
\\
\begin{subfigure}{\linewidth}
    \includegraphics[width=\linewidth]{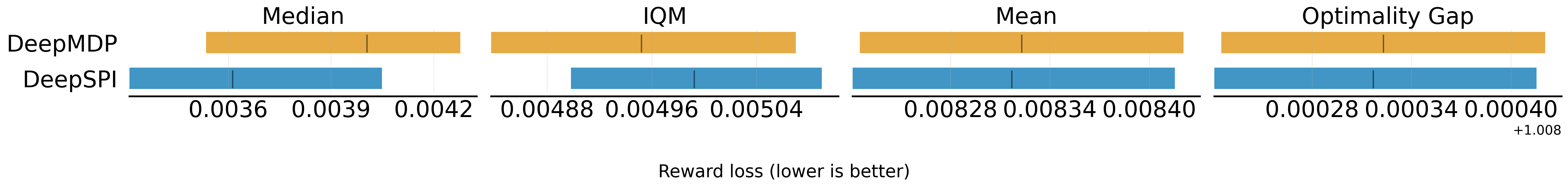}
\end{subfigure}
\caption{Aggregate median, IQR, Mean, and optimality gap for the reported transition and reward losses over all the Atari environments considered in our experiments, with $95\%$ confidence intervals. The confidence intervals are obtained via percentile bootstrapping with stratified resampling. For more information, refer to \citealp{DBLP:conf/nips/AgarwalSCCB21}.}\label{fig:aggr-trans-rew}
\end{figure}
We elaborate here on the statistical significance of this claim.
First, as for the human normalized score, we provide in Fig.~\ref{fig:aggr-trans-rew} aggregate metrics for the transition and reward losses.
This analysis already reveals that there is no statistically significant difference between the capacity to predict rewards between the two algorithms.
We take a closer look at the transition loss.

For each environment $i$, we summarize the transition loss of \texttt{DeepSPI} and \texttt{DeepMDP} by scalars $\ell_i^{\text{SPI}}$ and $\ell_i^{\text{MDP}}$, and form paired differences $d_i = \ell_i^{\text{SPI}} - \ell_i^{\text{MDP}}$.
The reported mean difference $\bar d = \frac{1}{n}\sum_i d_i = -0.1381$ therefore means that, on average across environments, \texttt{DeepSPI}’s transition loss is about $0.14$ units lower than \texttt{DeepMDP}’s. To quantify uncertainty on this average effect, we use a paired bootstrap: we resample the $n$ environments with replacement, recompute $\bar d^{(b)}$ for each bootstrap sample $b = 1,\dots,B$, and form the 95\% confidence interval as the 2.5th and 97.5th percentiles of $\{\bar d^{(b)}\}_{b=1}^B$.
The resulting interval $[-0.2226, -0.05907]$ lies entirely below zero, which under the usual frequentist interpretation provides strong evidence that the true mean gap in transition loss is negative (\texttt{DeepSPI} better) rather than a consequence of sampling noise.

The paired Wilcoxon signed-rank test \citep{Wilcoxon1992} further supports this conclusion without invoking normality of the $d_i$: it ranks the absolute differences $|d_i|$, assigns each rank the sign of $d_i$, and uses the signed rank sum as a test statistic for the null hypothesis $H_0 : \mathrm{median}(d_i) = 0$. 
We obtain a very small two-sided $p$-value $p = 6.6\times 10^{-4}$ indicating that observing differences this systematically negative would be extremely unlikely if \texttt{DeepSPI} and \texttt{DeepMDP} had the same typical transition loss.

Finally, the aggregates of Fig.~\ref{fig:aggr-trans-rew} provide a complementary robust view: the interquartile mean (IQM) of transition loss is lower for \texttt{DeepSPI} than for \texttt{DeepMDP}, indicating that \texttt{DeepSPI} improves not only the mean performance but also the performance on the central bulk of environments. Taken together, the negative mean difference with a 95\% confidence interval that excludes zero, the significant Wilcoxon test, and the lower IQM all consistently indicate that \texttt{DeepSPI} achieves statistically significantly lower transition loss than \texttt{DeepMDP} across Atari.
}

\subsection{Hyperparameters}
As mentioned in the main text, we use the same parameters for PPO as the default \texttt{cleanRL}'s parameters.
We list the \texttt{DeepSPI} parameters in Table~\ref{tab:deepspi-hyperparams} and those of \texttt{DreamSPI} in Table~\ref{tab:dreamspi-hyperparams}.
We used the same parameters as \texttt{DeepSPI} for \texttt{DeepMDPs}.
\revision{%
For \texttt{DeepSPI}, we performed a grid search for the transition density in $\set{\textnormal{IndependentNormal}, \, \text{MixtureIndependentNormal}(n=5), \, \text{Categorical}(\text{n\_cat}=32, \text{n\_cls}=32)}$.
The grid search revealed that the mixture of independent normal distributions (i.e., with diagonal covariance matrices) worked best for \texttt{DeepSPI}.
We also found that using Lipschitz networks to enforce the Lipschitzness of the latent space (cf.~Sect.~\ref{sec:deep-spi}) was faster than enforcing a gradient penalty (as used by \citealt{DBLP:conf/icml/GeladaKBNB19}) since, in contrast to gradient penalties, enforcing a Lipschitz condition through the architecture does not require additional sampling from the latent transition function (which might turn out costly, especially with mixture distributions).
Furthermore, norm-constrained GroupSort architectures ensure Lipschitzness by construction.
For the reward and transition coefficients, we performed a grid search in $\alpha_R, \alpha_P \in \set{10^{-2}, 5 \times 10^{-3}, 10^{-3}, 5 \times 10^{-4}, 10^{-4}}$.
We found the best performance at $\alpha_R = 0.01$ and $\alpha_P = 5 \times 10^{-4}$.
}

\begin{table}\label{table:hp-deep-spi}
\centering
\begin{tabular}{l l}
\toprule
\textbf{Hyperparameter}            & \textbf{Value}            \\
\midrule
Learning rate                      & $2.5 \times 10^{-4}$      \\
Number of envs                     & 128                       \\
Number of rollout steps            & 8                         \\
LR annealing                       & True                      \\
Activation function                & ReLU                      \\
Discount factor $\gamma$           & 0.99                      \\
GAE $\lambda$                      & 0.95                      \\
Number of minibatches              & 4                         \\
Update epochs                      & 4                         \\
Advantage normalization            & True                      \\
Clipping coefficient  $\epsilon$             & 0.1                       \\
Entropy coefficient                & 0.01                      \\
Value loss coefficient             & 0.5                       \\
Max gradient norm                  & 0.5                       \\
Transition loss coefficient ($\alpha_P$)        & $5 \times 10^{-4}$                     \\
Reward loss coefficient ($\alpha_R$)            & 0.01                      \\
Transition density                 & Mixture of Normal (diagonal covariance matrix)            \\
Number of distributions            & 5\\
Lipschitz networks                 & True                      \\
\bottomrule
\end{tabular}
\caption{Summary of \texttt{DeepSPI} hyperparameters.}
\label{tab:deepspi-hyperparams}
\end{table}
\begin{table}\label{table:hp-dream-spi}
\centering
\begin{tabular}{l l}
\toprule
\textbf{Hyperparameter}                & \textbf{Value}            \\
\midrule
Imagination horizon                    & 8                         \\
actor/critic update epochs              & $1$
\\
actor/critic number of minibatches      & $4 \times 8 = 32$
\\
Discount factor $\discount$ & 0.995 \\
Encoder learning rate                  & $2 \times 10^{-4}$        \\
Actor learning rate                    & $2.75 \times 10^{-5}$     \\
Critic learning rate                   & $2.75 \times 10^{-5}$     \\
World model learning rate              & $2 \times 10^{-4}$        \\
Global LR annealing                    & False                      \\
Weight decay (\texttt{AdamW})          & True; with decay $10^{-6}$
\\
Transition density                     & Categorical ($32$ categories of $32$ classes, see \citealp{DBLP:conf/iclr/HafnerL0B21})
\\
Transition loss coefficient ($\alpha_P$)        & 0.01                     \\
Reward loss coefficient ($\alpha_R$)            & 0.01                      \\
Lipschitz networks                 & False (unnecessary with discrete random variables)                     \\
Other parameters                       & Same as \texttt{DeepSPI}
\\
\bottomrule
\end{tabular}
\caption{Summary of \texttt{DreamSPI} hyperparameters.}
\label{tab:dreamspi-hyperparams}
\end{table}

\end{adjustwidth}
\end{document}